\newcommand{\omitted}[1]{}%{{\red{o}}}
\let\NAT@parse\undefined
\newtheorem{theorem}{Theorem}
\newtheorem{problem}{Problem}
\newtheorem{corollary}{Corollary}
\newtheorem{definition}{Definition}
\newtheorem{proposition}{Proposition}
\newtheorem{remark}{Remark}
\newtheorem{example}{Example}
\newcommand{\bdmath}{\begin{dmath}}
\newcommand{\edmath}{\end{dmath}}
\newcommand{\beq}{\begin{equation}}
\newcommand{\eeq}{\end{equation}}
\newcommand{\bdm}{\begin{displaymath}}
\newcommand{\edm}{\end{displaymath}}
\newcommand{\bea}{\begin{eqnarray}}
\newcommand{\eea}{\end{eqnarray}}
\newcommand{\beal}{\beq \begin{array}{lll}}
\newcommand{\eeal}{\end{array} \eeq}
\newcommand{\beas}{\begin{eqnarray*}}
\newcommand{\eeas}{\end{eqnarray*}}
\newcommand{\ba}{\begin{array}}
\newcommand{\ea}{\end{array}}
\newcommand{\bit}{\begin{itemize}}
\newcommand{\eit}{\end{itemize}}
\newcommand{\ben}{\begin{enumerate}}
\newcommand{\een}{\end{enumerate}}
\newcommand{\calA}{{\cal A}}
\newcommand{\calB}{{\cal B}}
\newcommand{\calC}{{\cal C}}
\newcommand{\calE}{{\cal E}}
\newcommand{\calG}{{\cal G}}
\newcommand{\calI}{{\cal I}}
\newcommand{\calN}{{\cal N}}
\newcommand{\calS}{{\cal S}}
\newcommand{\calV}{{\cal V}}
\newcommand{\calX}{{\cal X}}
\definecolor{myblue}{RGB}{65 105 225}
\newcommand{\hide}[1]{}
\newcommand{\hiddenText}{{\color{gray} hidden text.}}
\newcommand{\hideWithText}[1]{\hiddenText}
\newcommand{\opt}{^{\star}}
\newcommand{\scenario}[1]{{\smaller \sf#1}\xspace}
\newcommand{\ie}{\emph{i.e.},\xspace}
\newcommand{\eg}{\emph{e.g.},\xspace}
\newcommand{\myin}{\, \in \,}
\newcommand{\red}[1]{{\color{red}#1}}
\newcommand{\blue}[1]{{\color{blue}#1}}
\newcommand{\alg}{\scenario{RAG}}
\newcommand{\sg}{\scenario{SG}}
\newcommand{\dsm}{\scenario{DSM}}
\newcommand{\myParagraph}[1]{{\bf #1.}\xspace}
\renewcommand{\opt}{\scenario{OPT}}
\newcommand{\ourcurv}{\scenario{coin}_{f,i}}
\newcommand{\ourcurvnew}{\scenario{coin}_{f}}
\newcommand{\ouragent}{\mathcal{N}}
\newcommand{\oursol}{\calA^{\alg}}
\newcommand{\singlesol}{a^{\alg}}
\newcommand{\rob}{j}
\newcommand{\agentselected}{\calI}
\newcommand{\solutionselected}{\calA}
\title{%
\fontsize{23}{23} \selectfont
Communication- and Computation-Efficient Distributed {Submodular Optimization in Robot Mesh Networks} 
}
\author{Zirui Xu,$^{\dagger,\star}$,~\IEEEmembership{Graduate Student Member,~IEEE,} Sandilya Sai Garimella,$^{\ddagger,\star}$,~\IEEEmembership{Graduate Student Member,~IEEE,} Vasileios Tzoumas$^\dagger$,~\IEEEmembership{Senior Member,~IEEE}
	\thanks{$^\star$Equal contribution.}
 \thanks{%\vspace{-.5mm}
 $^\dagger$Department of Aerospace Engineering, University of Michigan, Ann Arbor, MI 48109 USA;  {\tt\footnotesize \{ziruixu,vtzoumas\}@umich.edu}} %\href{mailto:ziruixu@umich.edu}
 	\thanks{$^{\ddagger}$Institute for Robotics and Intelligent Machines, Georgia Institute of Technology, Atlanta, GA 30332 USA; {\tt\footnotesize sgarimella34@gatech.edu}. When part of the work was completed, the author was with the Department of Robotics, University of Michigan, Ann Arbor, MI 48109 USA.} %\href{mailto:garimell@umich.edu}
    \thanks{This work was partially supported by NSF CAREER No. 2337412.}
}
\begin{document}

\maketitle

\begin{abstract}
We provide a communication- and computation-effi- cient method for distributed submodular optimization in robot mesh networks.
Submodularity is a property of diminishing re- turns that arises in active information gathering such as mapping, surveillance, and target tracking.
Our method, Resource-Aware distributed Greedy (\alg), introduces a new distributed optimization paradigm that enables scalable and near-optimal action coordination.  
To this end, \alg requires each robot to make decisions based only on information received from and about their neighbors. 
In contrast, the current paradigms allow the relay of information about all robots across the network. 
As a result, \alg's decision-time scales linearly with the network size, while state-of-the-art near-optimal submodular optimization algorithms scale cubically.  
We also characterize how the designed mesh-network topology
affects \alg's approximation performance. 
Our analysis implies that sparser networks favor scalability without
proportionally compromising approximation performance: while \alg's decision time scales linearly with network size, the gain in approximation performance scales sublinearly.  
We demonstrate \alg's performance in simulated scenarios of area detection with up to 45 robots, simulating realistic robot-to-robot (r2r) communication speeds such as the $0.25$ Mbps speed of the Digi XBee 3 Zigbee 3.0.  In the simulations, \alg enables real-time planning, up to three orders of magnitude faster than competitive near-optimal algorithms, while also achieving superior mean coverage performance.
To enable the simulations, we extend the high-fidelity and photo-realistic simulator AirSim by integrating a scalable collaborative autonomy pipeline to tens of robots and simulating r2r communication delays. Our code is available at {\url{https://github.com/UM-iRaL/Resource-Aware-Coordination-AirSim}}.
\end{abstract}

\vspace{-2mm}
\begin{IEEEkeywords}
Multi-Robot Mesh Networks; Robot-to-Robot Communication; Submodular Optimization; Approximation Algorithms; Active Information Gathering
\end{IEEEkeywords}

\section{Introduction}

\IEEEPARstart{I}{n} the future, numerous distributed robots will be coordinating actions via  \textit{robot-to-robot} (r2r) communication to execute {information-gathering tasks} such as collaborative mapping~\cite{atanasov2015decentralized}, surveillance~\cite{dames2017detecting}, and target tracking~\cite{corah2021scalable}.  Such tasks require efficiency (scalability) and effectiveness (optimality), especially in crowded scenarios where the robots are many and operate close to each other; see, for example, the semantic-driven task of road detection with 45 aerial robots in Fig.~\ref{fig:intro}.

\begin{figure}[H]
    \captionsetup{font=footnotesize}
    % Minipage 1
    \begin{minipage}{\columnwidth}
        \includegraphics[width=1\columnwidth]{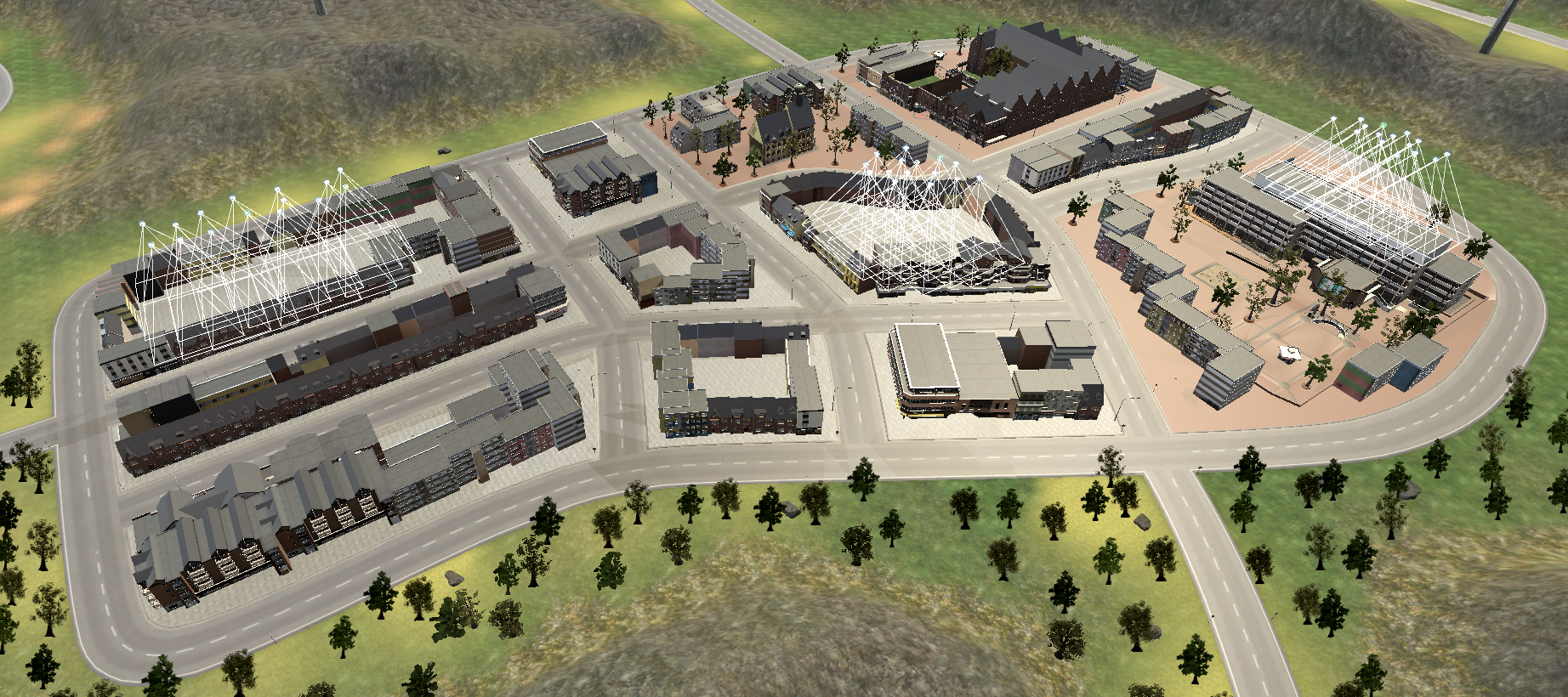}
        \caption*{(a) The drones start from their initial deployment points.
        }
        \label{fig:simulator-1}
    \end{minipage} 
    \vspace{2mm} \\
    \begin{minipage}{\columnwidth}
        \includegraphics[width=1\columnwidth]{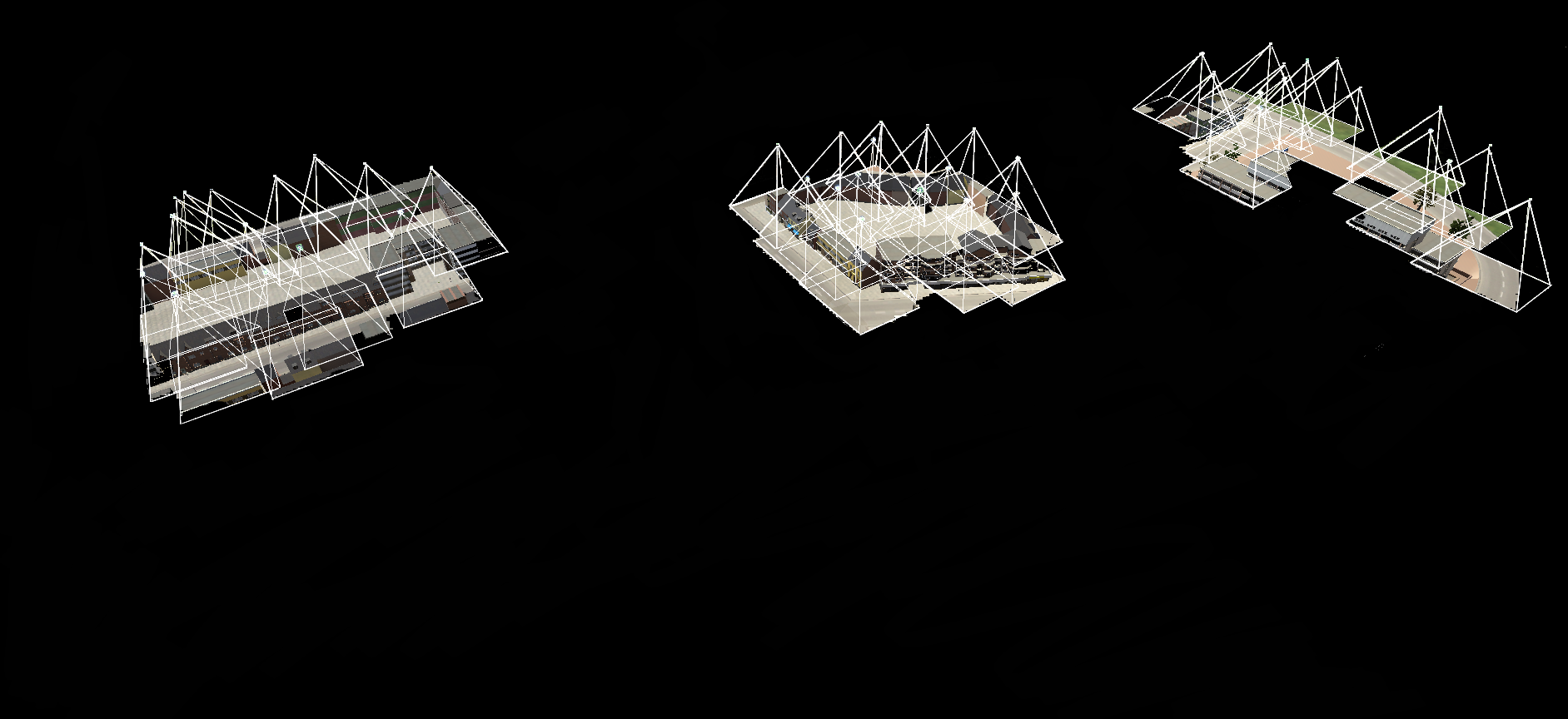}
        \caption*{(b) Progress after 3 replanning steps.}
        \label{fig:simulator-2}
    \end{minipage}
    \vspace{2mm} \\
    \begin{minipage}{\columnwidth}
        \includegraphics[width=1\columnwidth]{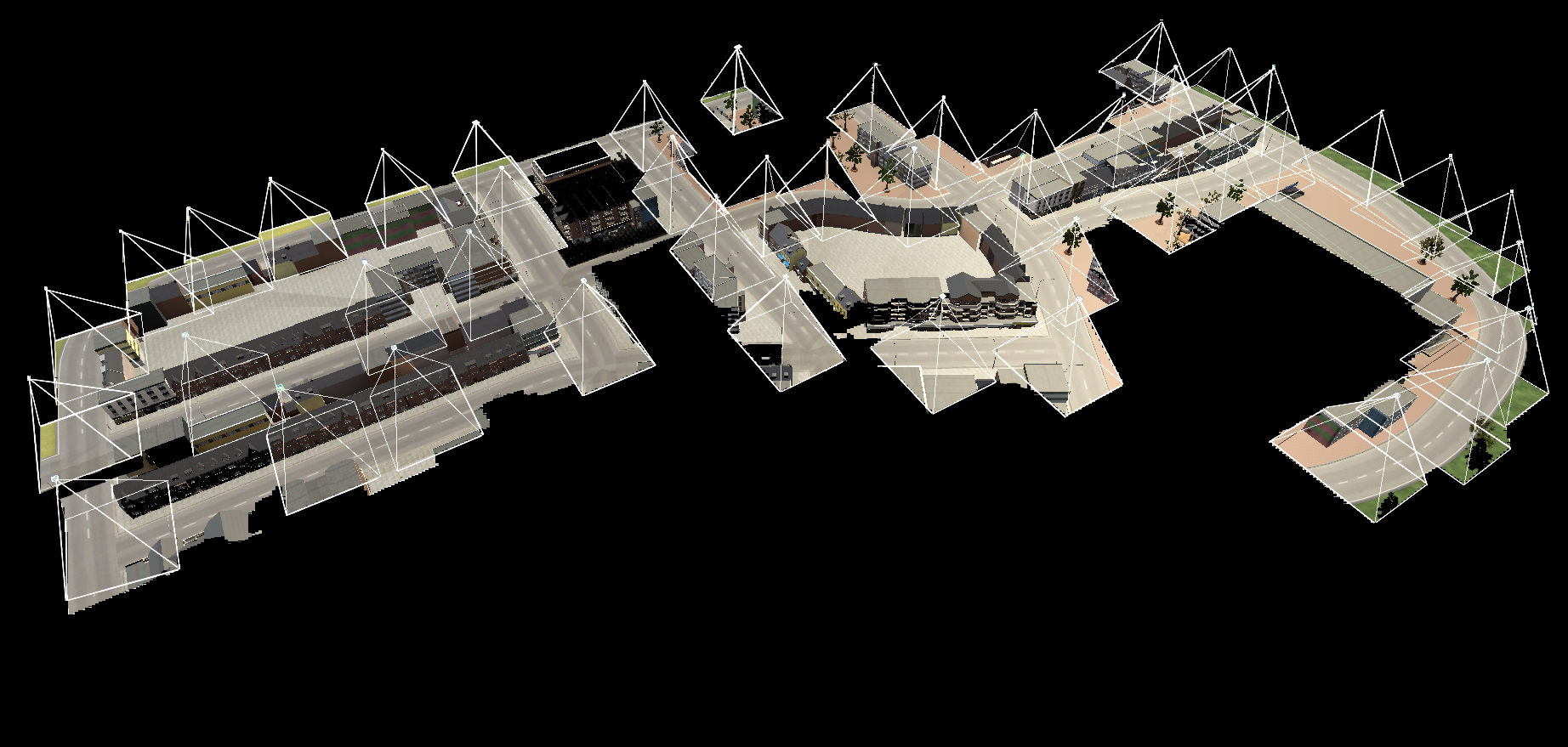}
        \caption*{(c) Progress after 8 replanning steps.
        }
        \label{fig:simulator-3}
    \end{minipage}
    \caption{\textbf{Scenario of Collaborative Road Detection with 45 Aerial Robots}. 
    The drones are deployed in an unknown environment and tasked to collaboratively detect the roads and visually cover them.  Particularly, the drones aim to collectively maximize the total covered road area in their top-view field of view (FOV). 
    At each action coordination step, each drone chooses a few other drones to coordinate with, subject to its onboard communication and computation bandwidth constraints.  The full collaborative autonomy pipeline is depicted in Fig.~\ref{fig:pipeline}. (a)--(c) depict the task progress across time: (a) The drones start from their initial deployment points; (b) Progress after 3 replanning steps; (c) The drones have achieved maximal road coverage.
    }
    \label{fig:intro}
\end{figure}

{But achieving scalability and optimality is hard.  The first reason is that such tasks take the form of distributed submodular optimization, an NP-hard combinatorial optimization pro-}

\begin{figure*}[t]
\captionsetup{font=footnotesize}
\begin{center}
\hspace{8mm}\includegraphics[width=\textwidth]{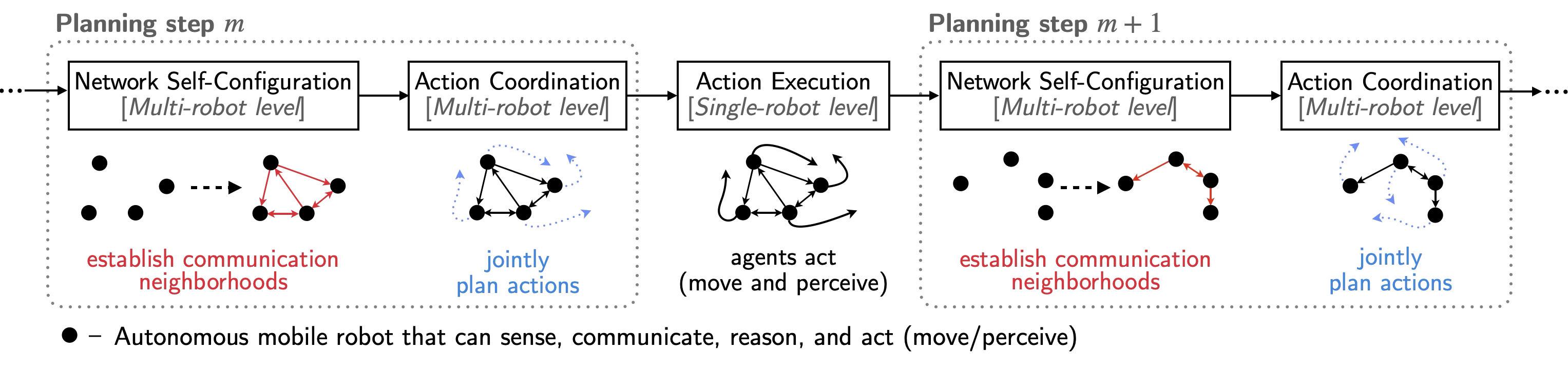}
\end{center}
\vspace{-2mm}
\caption{\textbf{Pipeline of Collaborative Mobile Autonomy}. The robots sequentially perform over a mesh network: (a) \textit{Network Self-Configuration {Step}}: Given the observed environment and state of the robots, the robots decide with which other robots to communicate with, subject to their onboard bandwidth constraints; and (b) \textit{Action Coordination {Step}}: The robots jointly plan actions ---how to move and sense the environment--- upon coordinating over the established communication network. (c) \textit{Action Execution {Step}}:
The robots execute their selected actions and perceive the environment. In this paper, our focus is on Action Coordination over tasks that take the form of submodular optimization.  We present a communication- and computation-efficient distributed algorithm that scales linearly with the network size, in contrast to the cubic time complexity of the competitive near-optimal algorithms. 
Along with our algorithmic contribution, we provide a rigorous analysis of how each agent's communication neighborhood affects the near-optimality of the optimization. {The analysis implies that establishing sparser neighborhoods favors scalability without
proportionally compromising approximation performance.}
}\label{fig:pipeline}
 % \vspace{-1mm}
\end{figure*}

\noindent{blem~\cite{Feige:1998:TLN:285055.285059}. 
Therefore, such tasks require increased computations and communications to be solved optimally.  Submodularity is a property of diminishing returns, and is encountered across robotics~\cite{sung2023survey} as well as machine learning~\cite{bilmes2022submodularity} and control~\cite{clark2016submodularity}.  It captures the intuition that when any two robots are nearby and collect same information, \eg track the same targets, then one of the robots is redundant (it would be more effective if the robots were collecting different information).  On top of the above reason, another fundamental reason is the communication and computation limitations of the robots versus the r2r messaging requirements by the tasks.  For example, in active distributed simultaneous localization and mapping (SLAM), the inter-robot communication messages can grow to a few MBs within a few hundred meters of explored environment~\cite{liu2024slideslam} ($1$ MB $=8$ Mb).  But the wireless r2r communication speeds range from 0.25 Mbps, \eg achieved by the Digi XBee 3 Zigbee 3.0 antenna module, to 100 Mbps, \eg achieved by the Silvus Tech SL5200 antenna module.  Therefore, transmitting even a single message can take seconds.
}

{The current approaches are either near-optimal but not scalable, or real-time but offer no performance guarantees. The near-optimal methods~\cite{sviridenko2017optimal,fisher1978analysis} that are being used for active information gathering with multiple robots~\cite{krause2008near,dames2017detecting,singh2009efficient,tokekar2014multi,atanasov2015decentralized,corah2019distributed,lauri2020multi,schlotfeldt2021resilient,biggs2022non,akcin2023fleet,cai2023energy}, scale cubically with the network size, resulting in impractical running times in mesh networks with tens of robots (up to tens of minutes per action coordination step in the presence of real-world communication delays). The state-of-the-art multi-robot motion planning methods are real-time~\cite{smith2018distributed,yu2021smmr,gao2022meeting,zhou2023racer} but they are suitable in spatial exploration where robots are expected to operate far from each other. Then, the submodular structure can be ignored and opportunistic coordination is sufficient.   
We discuss the related work in extension in \Cref{sec:related}.}

\begin{figure*}[ht]
    \captionsetup{font=footnotesize}
    \centering
    \includegraphics[width=1\textwidth]{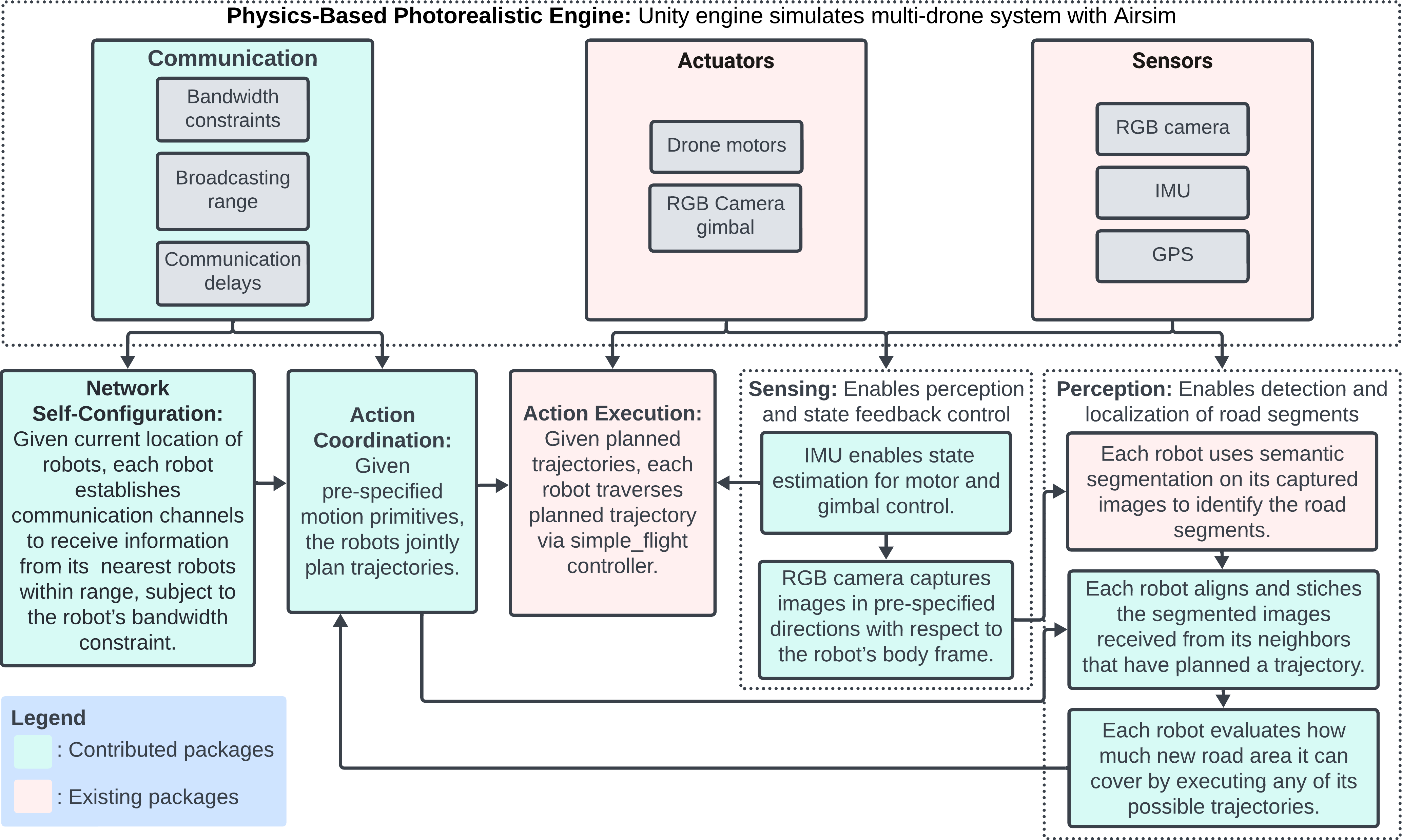}
        \vspace{-3mm}
    \caption{\textbf{Simulator pipeline.}  We provide a high-fidelity simulator that extends AirSim~\cite{shah2018airsim} to the multi-robot setting by integrating the autonomy pipeline of Fig.~\ref{fig:pipeline} and simulating r2r communication delays among other communication constraints.  The illustrated pipeline is customized for the action coordination algorithm provided herein.  The pipeline can be modified to other algorithms as desired.  
    } 
    \label{fig:simulator}
\end{figure*}

{In this paper, instead, we focus on crowded scenarios where robots operate close to one another, and their objective is not merely exploration; it is discovery and/or monitoring of specific information, \eg landmarks, mobile targets, areas and objects with specific semantics~\cite{atanasov2015decentralized,hughes2024multi,sung2023survey} (Fig.~\ref{fig:intro}). Then, the submodular structure is present. Thus, to jointly maximize the submodular objective, the robots must intelligently coordinate actions to enable both scalability and near-optimality.}

\myParagraph{Contributions} {We provide a communication- and computa- tion-efficient method for distributed submodular optimization in robot mesh networks.  The method, \textit{Resource-Aware distributed Greedy} (\alg), enables both scalable and near-optimal coordination over robot mesh networks despite real-world communication and computational delays.  In contrast, the current approaches are either near-optimal but not scalable or are real-time but offer no performance guarantees. 
\alg applies to any distributed submodular optimization task.  In this paper, we evaluate \alg in a semantics-driven area exploration task of road detection and coverage with tens of robots (Fig.~\ref{fig:intro}), via high-fidelity simulations. In more detail, our technical and simulator contributions are as follows.

We characterize \alg's time complexity and approximation performance
as a function of the state of the robot mesh network such as the robots' communication neighborhoods, task objective function, and  expected delays due to the robots' computation and communication capabilities:
}

\begin{itemize}[leftmargin=*]
\item {\textit{Time complexity:} \alg's decision-time scales linearly with the network size. Instead, the competitive near-optimal submodular optimization algorithms scale cubically with the network size.  To this end, \alg introduces a resource-minimal distributed optimization paradigm that requires each robot to make decisions based only on locally observed information and information received from and about their neighbors only.  In contrast, the current submodular optimization paradigms~\cite{sviridenko2017optimal,fisher1978analysis}, widely applied in robotics and control~\cite{krause2008near,dames2017detecting,singh2009efficient,tokekar2014multi,atanasov2015decentralized,corah2019distributed,lauri2020multi,schlotfeldt2021resilient,biggs2022non,akcin2023fleet,cai2023energy},  that instead require the relay of information about all robots across the network.}

\item {\textit{Approximation performance:} We present suboptimality bounds for \alg.
The bounds are useful as follows: (i) They capture the approximation performance of \alg as a function of each agent's local mesh-network topology, quantifying how denser mesh networks (more coordination) improve the approximation performance. 
(ii) The bounds inform how to design each agent's communication neighborhood to balance the trade-off between decision speed and optimality.  Particularly, we prove that the approximation performance gain grows sublinearly with the agents' communication neighborhood size. Therefore, in combination with the fact that \alg's decision time scales linearly with the network size, the bounds imply that sparser neighborhoods would favor decision speed without proportionally compromising approximation performance.  The appropriate level of neighborhood sparsity that balances the trade-off for the task at hand can be found via experimentation in high-fidelity simulations, as we illustrate in our simulations.}
\end{itemize}

{To perform high-fidelity evaluations, we enhance the physics-based and photo-realistic simulator  AirSim~\cite{shah2018airsim} to enable scalable simulation of robot mesh networks, including simulation of r2r communication delays (Fig.~\ref{fig:simulator}).}
{To this end, we first implement multi-threading for sensor data processing in ROS wrappers, resulting in a tenfold improvement in data frequency across multiple sensing modalities, including RGB, depth, segmentation, IMU, GPS, and barometer.
Second, we develop an r2r coordination module that models realistic communication delays and ranges between drones, with signal attenuation and dynamic neighborhood relationships managed through the ROS master.
Finally, we implemented automatons for capturing segmentation images via rotating gimbals and calculating marginal information gain by stitching semantic data from multiple drones.  The implementation uses ROS1, thus inherently incorporating its transport layer delays and message passing overhead, as shown in Figs.~\ref{fig:combined-coverage-analysis-15robots-100Mbps}--\ref{fig:combined-coverage-analysis-45robots-0p25Mbps}.} 

{{Our code is available at {\url{https://github.com/UM-iRaL/Resource-Aware-Coordination-AirSim}}.}}

\myParagraph{Evaluations} {We evaluate \alg in the provided simulator across four  scenarios of road detection and coverage (Fig.~\ref{fig:intro}). The scenarios include two team sizes, 15 and 45 robots, and two communication rates, 0.25 Mbps, \eg achieved by the Digi XBee 3 Zigbee 3.0 antenna module, and 100 Mbps, \eg achieved by the Silvus Tech SL5200 antenna module. 

\alg achieves real-time planning, up to three orders of magnitude faster than competitive near-optimal submodular optimization algorithms; and, when the robots maintain a neighborhood size $\geq 2$, \alg also achieves superior mean coverage.  The experiments also demonstrate \alg scales linearly with the network size, as predicted by our theoretical analysis: from the 15-robot to the 45-robot case,  \alg scales linearly, being at most 3x slower compared to the 15-robot case.  In contrast, the compared near-optimal algorithms scale cubically, being up to 60x slower compared to the 15-robot case.}

\myParagraph{Organization of the Remaining Paper} In \Cref{sec:related}, we present background on distributed submodular coordination and related work.  In \Cref{sec:problem}, we define the problem of \textit{Resource-Aware Distributed Submodular Optimization}.  Then, we present our algorithm in~\Cref{sec:algorithm}.  The theoretical analysis of the algorithm's approximation guarantees and decision time are presented in \Cref{sec:guarantee,sec:resource}, respectively.  \Cref{sec:experiments} presents the simulator and the evaluations.  All proofs are presented in the Appendix.

\myParagraph{Comparison with Preliminary Work~\cite{xu2022resource} and~\cite{xu2024performance}} This paper extends our preliminary work~\cite{xu2022resource} to include new theory, the AirSim extension, the evaluations, and proofs of all claims.  Particularly, this paper provides novel performance guarantees, including an a posteriori bound with network-design implications, and an extension of the a priori bound in~\cite{xu2022resource} to functions that are not submodular.  Also, this paper bounds the decision time of the algorithm, characterizing for the first time its communication and computation complexity as a function of the network size and the time to perform computations and r2r communications.  This paper introduces the AirSim-based simulator and the evaluations on the simulator. Instead,~\cite{xu2022resource} employed MATLAB simulations only without simulating r2r communication delays.  All proofs were omitted in~\cite{xu2022resource}, and here are presented for all original and new results.

We also compare this paper with our preliminary work~{\cite{xu2024performance}}. First of all,~{\cite{xu2024performance}} introduces a more complex problem formulation that requires the mesh-network topology to be co-optimized with the robots' actions.  The solution provided therein requires robots at fixed locations, instead of mobile sensors, which are the focus herein.
Finally, the results in~{\cite{xu2024performance}} hold in probability and are based on regret optimization that requires quadratic time to converge, whereas the results herein are deterministic and are based on discrete optimization that requires linear time to converge.

\section{Related Work}\label{sec:related}

{We discuss related work on (i) near-optimal but not necessarily real-time distributed submodular optimization, (ii) rapid but not necessarily near-optimal distributed multi-robot motion planning, (iii)  the trade-off of decision speed vs.~optimality, and (iv) high-fidelity communication simulators.}

\paragraph{{Near-optimal distributed submodular optimization}} The current approximation paradigms for submodular maximization problem~\cite{sviridenko2017optimal,fisher1978analysis} have been widely used in active information gathering with multiple robots~\cite{krause2008near,dames2017detecting,singh2009efficient,tokekar2014multi,atanasov2015decentralized,corah2019distributed,lauri2020multi,schlotfeldt2021resilient,biggs2022non,akcin2023fleet,cai2023energy} (see also the survey~\cite{sung2023survey}).  However, they may not scale.  The reason is that their communication and computation complexities are superlinear in the number of robots, particularly, quadratic or cubic (\Cref{tab:comparison}).  Instead, we provide an algorithm with linear communication and computation complexity. 

Quadratic or higher communication and/or computation complexity can become prohibitive in large-scale networks due to real-world communication and computation delays.  To illustrate the point, we give a toy example upon noting that communication delays are introduced by the limited r2r communication speeds.  For example, state-of-the-art r2r communication speeds range from less than $1$Mbps up to $100$Mbps~\cite{wu2021comprehensive}. Computational delays are caused by the time required to perform function evaluation.  This time depends on the processing required by the task at hand, \eg image segmentation for the said task of road coverage (Fig~\ref{fig:intro}).  Assume now that the total delay per communication and computation is on average $1$msec.  Then, for cubic decision-time complexity and $100$ robots ($|\calN|=100$), the total time delay is at the order $100^3 \cdot 1\text{msec}\simeq 17\text{min}$.  

The quadratic or higher complexity of~\cite{sviridenko2017optimal,fisher1978analysis} over r2r networks is due to their coordination protocols: they instruct the robots to retain and relay information about all or most other robots in the network.
Particularly,~\cite{sviridenko2017optimal} requires \textit{iterative decision-making via consensus} where at each iteration each robot needs to retain and transmit estimates of all robots' actions.~\cite{fisher1978analysis} requires \textit{sequential decision-making} where all currently finalized actions need to be relayed to all robots in the network that have not finalized actions yet.  

In more detail, the multi-robot coordination algorithm in~\cite{robey2021optimal}, inspired by~\cite{sviridenko2017optimal, fisher1978analysis}, achieves the best possible approximation bound for submodular maximization, namely, $1-1/e$.
However, it may require {tens} of minutes to terminate in simulated tasks of $10$ robots even with no simulated communication delays~\cite{xu2022resource}. 
The reason is that it requires a near-cubic number of iterations in the number of robots to converge (\Cref{tab:comparison}).
Similarly, for the \textit{Sequential Greedy} (\sg) algorithm~\cite{fisher1978analysis}, also known as \textit{Coordinate Descent}, which is the gold standard in robotics and control for submodular task optimization~\cite{krause2008near,dames2017detecting,singh2009efficient,corah2019distributed,tokekar2014multi,atanasov2015decentralized,lauri2020multi,schlotfeldt2021resilient,biggs2022non,akcin2023fleet,cai2023energy}, although it sacrifices some approximation performance to enable faster decision speed ---achieving the bound $1/2$ instead of the bound $1-1/e$--- has communication complexity that is cubic in the number of robots~\cite[Appendix~II]{xu2024performance}.  The reason is that it requires inter-robot messages that carry information about all the robots and a quadratic number of communication rounds over directed networks~\cite[Proposition~2]{konda2022execution}.  

\paragraph{Real-time multi-robot motion planning} {Current motion planning methods also employ approaches that do \underline{not} need to account for the submodular structure of the multi-robot task: the robots are expected to operate far from one another, collecting decoupled information. Then, the submodular objective simplifies to a distributed, modular objective: it can be expressed as the sum of decoupled functions, one for each robot, each computable by the robot based on its action only. 
Therefore, these methods often allocate robots to unexplored areas and instruct the robots to explore them independently.  Opportunistic communication is sufficient, when the robots happen to be nearby.}
For example, in collaborative exploration, the robots coordinate actions to move to the closest frontiers using shared map information in~\cite{yamauchi1999decentralized}. 
{\cite{zlot2002multi,berhault2003robot,sheng2006distributed,smith2018distributed} use an auction-based mechanism for task allocation among robots. The auction is also leveraged for high-probability communication maintenance in~\cite{sheng2006distributed}. In~\cite{yu2021smmr}, robots are assigned to different frontiers based on potential functions for distributed mapping and exploration. 
The robots set periodic meeting destinations in~\cite{gao2022meeting} to meet and coordinate actions during distributed exploration. \cite{klodt2015equitable} introduces a pair-wise coordination protocol where all pairs of neighboring robots sequentially coordinate trajectories to optimize their exploration task allocation. A similar pair-wise protocol is proposed in~\cite{zhou2023racer} where each robot coordinates trajectories with the neighbor that it has longest not communicated with.}

{
In contrast, this paper focuses on crowded scenarios where robots operate close to one another, and their objective is not merely exploration; it is discovery and/or monitoring of specific information, \eg landmarks, mobile targets, and areas or objects with specific semantics~\cite{atanasov2015decentralized,hughes2024multi,sung2023survey}.  Thus, to jointly maximize the submodular objective, the robots must coordinate actions~\cite{atanasov2015decentralized}: the submodular objective \underline{cannot} be decomposed as the sum of locally computable objectives, one for each robot, where the actions of each robot are unaffected by the actions of other robots.}
Additionally, {while the discussed task allocation works} are application-specific, \eg exploration~\cite{gao2022meeting,zhou2023racer}, our algorithmic results are general-purpose and apply to any distributed submodular optimization setting in robotics, control, machine learning, and beyond. 

\paragraph{Trade-off of decision speed vs.~optimality} 
{To enable rapid distributed submodular optimization}, we need to curtail the information explosion in robot mesh networks. Thereby, we need to limit what and how much information can travel across the network.  However, the consequence of imposing such information limitations is suboptimal coordinated actions. 

Current works have captured the suboptimality cost due to such limited information access for the case of the Sequential Greedy algorithm~\cite{gharesifard2017distributed,grimsman2019impact,biggs2022non}.  The provided characterizations are task-agnostic, holding true for the worst case over all possible submodular functions and action sets.
In contrast, we capture the suboptimality cost as a function of the task objective $f$ at hand, the robots' coordinated actions, and the current mesh network topology.   
Thus, our characterizations, being task specific, (i) can be tighter, and (ii) can be used by the robots to design communication neighborhoods that tune the trade-off of decision speed and near-optimality, subject to their communication bandwidth constraints.

\paragraph{Communication simulators}  
Papers focus on high-fi- delity simulations of wireless communications, including the simulation of protocols and communication limitations such as communication delays and package dropouts.  
\cite{lizzio2022implementation} simulates communication delays, and demonstrates its influence in consensus-based applications.~\cite{baidya2018flynetsim,calvo2021ros,acharya2023co} simulate communication channels and protocols, analyzing how distance and line-of-sight conditions impact communication delays. \cite{selden2021botnet} simulates realistic propagation models and scheduling functions for mobile radio-frequency communications.  We instead focus on enabling large-scale photo-realistic simulations with tens of robots, analyzing how communication delays impact  active information gathering with robot mesh networks.

\section{Resource-Aware Distributed Submodular Optimization
}\label{sec:problem}

We define the problem of \textit{Resource-Aware Distributed Submodular Optimization} (\Cref{pr:main}) with regard to the action coordination module in Fig.~\ref{fig:pipeline}. 
To this end, we use the notation:
\begin{itemize}[leftmargin=*]
    \item $\calN$ is the set of robots;
    \item  $\calE$ is the set of communication channels among the robots;
    \item $f(a\,|\,\calA)\triangleq f(\calA \cup \{a\})-f(\calA)$ is the marginal gain due to adding $a$ to $\calA$, given a set function $f:2^{\calV}\mapsto \mathbb{R}$.
\end{itemize}

{We also use the following framework. For easiness of illustration, we present the framework focusing on multi-robot information-gathering tasks such as simultaneous localization and mapping (SLAM), target tracking, surveillance~\cite{atanasov2015decentralized,corah2019distributed,schlotfeldt2021resilient}. 
\emph{Nonetheless, \Cref{pr:main} applies to any distributed optimization problem with a submodular objective function}.}

{\myParagraph{Robot Dynamics} 
We assume mobile robots that act as mobile sensors.  Their motion dynamics may take the form:
\begin{equation}\label{eq:dynamics}
    x_{i,t} = f_i(x_{i,t-1}, u_{i,t-1})+n_{i,t}, \quad t = 1, 2, \ldots, 
\end{equation}where $x_{i,t} \in \mathbb{R}^{n_{x_{i,t}}}$ denotes the state of robot $i$ at time step $t$,  $u_{i,t} \in \mathcal{U}_{i,t}$ denotes the control input applied to robot $i$, where $\mathcal{U}_{i,t}$ is a finite set of admissible control inputs~\cite{atanasov2015decentralized}, and $n_{i,t}$ is process noise, \eg Gaussian noise.
}

{\myParagraph{Target Dynamics} 
The robots use their sensors to measure and infer information about the environment. 
 For example, in SLAM and target tracking, at each time step $t$, the robots aim to estimate a target state vector $y_t$ that encapsulates the landmark locations or the mobile targets' to be localized~\cite{atanasov2015decentralized}. Generally, $y_t$ evolves with motion dynamics of the form:
\begin{equation}\label{eq:target}
    y_{t} = \phi(y_{t-1})+w_t, \quad t = 1, 2, \ldots, 
\end{equation}where $w_t$ denotes process noise, \eg Gaussian noise.
}

{\myParagraph{Sensor Model} 
For each robot $i$, we assume sensor model:
\begin{equation}\label{eq:sensor}
    z_{i,t} = g_i(x_{i,t}, y_t)+v_{i,t}(x_{i,t}), \quad t = 1, 2, \ldots, 
\end{equation}where $z_{i,t}$ denotes the measurement taken by robot $i$ at time $t$, and $v_{i,t}$ is noise that possibly depends on the robots' and targets' state.  For example, $v_{i,t}$ may be Gaussian noise with mean $\mu_{v_{i,t}}(x_{i,t}, y_t)$, and covariance $\Sigma_{v_{i,t}}(x_{i,t}, y_t)$. 
}

\myParagraph{Communication Neighborhood}  
{Before each action coordination step} (Fig.~\ref{fig:pipeline}), given the observed environment and states of the robots, the robots decide with which others to establish communication, subject to their onboard bandwidth constraints.  Specifically, we assume that each robot $i$ can receive information from up to $\alpha_i$ other robots due to onboard bandwidth constraints ($|\calN_i|\;\leq \alpha_i$). 

When a communication channel is established from  robot~$j$ to robot $i$, \ie $(j\rightarrow i) \in \calE$, then robot $i$ can receive, store, and process information from robot $j$.  The set of all robots that robot $i$ receives information from is denoted by $\calN_i$. We refer to $\calN_i$ as robot $i$'s \textit{neighborhood}.

\myParagraph{Communication Network} 
The resulting communication network can be directed and even disconnected.  When the network is fully connected (all robots receive information from all others), we call it \textit{fully centralized}. In contrast, when the network is fully disconnected (all robots receive no information from other robots), we call it \textit{fully decentralized}.

We assume communication to be synchronous.

\myParagraph{Communication Data Rate} All communication channels $(j\rightarrow i)\in\calE$ have finite \textit{data rates} (communication speeds). {In the simulations, we assume two values for the data rates: $0.25$ Mbps (simulating the Digi XBee 3 Zigbee 3.0 antenna), and $100.0$ Mbps (simulating the Silvus SL5200 antenna).}  Due to the finite data rates, the decision time of action coordination depends on both (i) the \textit{number of communication rounds} and (ii) the \textit{size of transmitted messages} it requires for the robots to find a joint plan. 
We assume that all communication channels have the same data rate in this paper.

\myParagraph{Objective Function} {At each action coordination step, the robots choose actions to maximize an objective function $f$ that captures the current multi-robot task. For example, in SLAM and target tracking, at each time step $t$, {$f$ can be~\cite{atanasov2015decentralized}:
\begin{equation}\label{eq:example_f}
    f(\{a_{i}\}_{i\myin\calN}) = \sum_{\tau=t+1}^{t+T} \mathbf{h}(y_\tau |\{z_{i,\;t+1:\tau}(a_i)\}_{i\myin\calN}),
\end{equation}
per the robot, sensing, and target models in \cref{eq:dynamics,eq:sensor,eq:target}, $$a_{i}\triangleq \{u_{i,t+1}, \dots, u_{i,t+T}\},$$  $z_{i,t+1:\tau}(a_{i})\triangleq \{z_{i,t+1}(a_{i}),\ldots,z_{i,\tau}(a_{i})\}$ denotes the measurements induced by $a_i$ up until $\tau$,}  $T$ is an action-planning look-ahead horizon, and  $\mathbf{h}(\cdot |\cdot)$ is an information metric such as the conditional entropy~\cite{atanasov2015decentralized} or the mutual information~\cite{corah2019distributed}.   When $\mathbf{h}$ is conditional entropy and the process and sensor noises in \cref{eq:dynamics,eq:sensor,eq:target} are uncorrelated Gaussian, then \cref{eq:example_f} is computable a priori given any action set $\{a_{i}\}_{i\myin\calN}$: $\mathbf{h}$ is then equal to the $\log\det$ volume of the Kalman filtering uncertainty over the horizon $T$ and is, thus, independent of the measurements' realization~\cite{atanasov2015decentralized}.
}
 
{Conditional entropy, mutual information, and, more broadly, {covering functions}~\cite{atanasov2015decentralized,corah2019distributed,corah2018distributed,robey2021optimal, downie2022submodular} are used to model active information-gathering 
tasks such as target tracking, SLAM, and surveillance~\cite{atanasov2015decentralized,corah2019distributed,schlotfeldt2021resilient}.} These functions capture how much {information} is observed given the actions of all robots.   
They belong to a broader class of functions that satisfy the properties below (\Cref{def:submodular,def:conditioning}).

\begin{definition}[Normalized and Non-Decreasing Submodular Set Function{~\cite{fisher1978analysis}}]\label{def:submodular}
A set function $f:2^\calV\mapsto \mathbb{R}$ is \emph{normalized and non-decreasing submodular} if and only if 
\begin{itemize}[leftmargin=*]
\item $f(\emptyset)=0$;
\item $f(\calA)\leq f(\calB)$, for any $\calA\subseteq \calB\subseteq \calV$;
\item $f(s\,|\,\calA)\geq f(s\,|\,{\mathcal{B}})$, for any $\calA\subseteq {\mathcal{B}}\subseteq\calV$ and $s\in \calV$.
\end{itemize}
\end{definition}

Normalization $(f(\emptyset)=0)$ holds without loss of generality.  In contrast, monotonicity and submodularity are intrinsic to the function.    
Intuitively, if $f(\calA)$ captures the area \emph{covered} by a set $\calA$ of activated cameras, then the more sensors are activated $(\calA\subseteq \calB)$, the more area is covered $(f(\calA)\leq f(\calB))$; this is the non-decreasing property.  Also, the marginal gain of covered area caused by activating a camera $s$ \emph{drops} $(f(s\,|\,\calA)\geq f(s\,|\,{\mathcal{B}}))$ when \emph{more} cameras are already activated $(\calA\subseteq \calB)$; this is the submodularity~property.

\begin{definition}[2nd-order Submodular Set Function{~\cite{crama1989characterization,foldes2005submodularity}}]\label{def:conditioning}
$f:2^\calV\mapsto \mathbb{R}$ is \emph{2nd-order submodular} if and only if 
\begin{equation}\label{eq:conditioning}
    f(s\,|\,\calC) - f(s\,|\,\calA\cup\calC) \geq f(s\,|\,\calB\cup\calC) - f(s\,|\,\calA\cup\calB\cup\calC),
\end{equation}
for any \emph{disjoint} $\calA, \calB, \calC\subseteq \calV$ ($\calA \cap \calB \cap \calC =\emptyset$) and  $s\in\calV$.
\end{definition}

The 2nd-order submodularity is another intrinsic property of the function.  
Intuitively, if $f(\calA)$ captures the area \emph{covered} by a set $\calA$ of cameras, then \emph{the marginal gain of marginal gains} drops when more cameras are already activated.

\begin{problem}[Resource-Aware Distributed Submodular Optimization]\label{pr:main}
{Consider a normalized and non-decreasing submodular function $f\colon 2^{\prod_{i \in \calN}\calV_i}\mapsto\mathbb{R}$ that captures the optimization task at hand at the current action coordination step}.  Each robot $i \in \calN$  selects an action $a_i\in\calV_i$, \emph{using only information from and about its neighbors $\calN_i$,}
such that the actions $\{a_i\}_{i \myin \calN}$ jointly solve the optimization problem:\footnote{We extend our framework to any normalized, non-decreasing and merely submodular or approximately submodular function $f$ in the Appendix.}
\begin{equation}\label{eq:problem}
\max_{a_i\myin\mathcal{V}_i, \, \forall\, i\myin \calN} \; f(\,\{a_i\}_{i\myin \calN}\,).
\end{equation}
\end{problem}

{\begin{remark}[Generality]
\Cref{pr:main} applies to any distributed optimization problem in control, robotics, machine learning, and beyond where the objective function is non-decreasing and submodular,
    including the information-gathering tasks captured via eqs.~\eqref{eq:dynamics} to \eqref{eq:example_f}.\footnote{{We refer the reader to the papers~\cite{atanasov2015decentralized,corah2019distributed} for implementation details on rapidly computing single-robot actions $a_{i}\triangleq \{u_{i,t+1}, \dots, u_{i,t+T}\}$ that optimize $f$ per eqs.~\eqref{eq:dynamics} to \eqref{eq:example_f}, given actions for the remaining robots.}}
\end{remark}}

\Cref{pr:main} is resource-aware in that it requires each robot to coordinate actions with its neighbors only, receiving information only about them instead of more robots in the network. The reason is to curtail the explosion of information passing across the network and, thus, to enable rapid coordination.  This is in contrast to standard distributed methods that allow information about the whole network to travel to all other robots via information passing (multi-hop communication)~\cite{konda2022execution,corah2018distributed,liu2021distributed}. Multi-hop communication does not reduce the amount of information flowing in the network compared to centralized methods, introducing impractical communication {delays, as we will demonstrate in the simulations.} %Problem formulation

\section{Resource-Aware distributed Greedy (\alg) Algorithm} \label{sec:algorithm}

{
\setlength{\textfloatsep}{3mm}
\begin{algorithm}[t]
% \small
	\caption{\mbox{Resource-Aware distributed Greedy (\alg).}
	}
	\begin{algorithmic}[1]
		\REQUIRE\!robot $i$'s actions $\mathcal{V}_i$; neighborhood $\calN_i$; \\
  non-decreasing set function $f:2^{\mathcal{V}_\calN} \mapsto \mathbb{R}$; 
		\ENSURE robot $i$'s action $\singlesol_i$.
		\medskip
		
		\STATE $\agentselected_i\leftarrow\emptyset$;~~~$\solutionselected_i\leftarrow\emptyset$;~~~$\singlesol_i\leftarrow\emptyset$;\label{line:initiliaze} {\color{gray}// \!$\calI_i$ is the robots\\ in $\calN_i$ that have selected their actions; $\calA_i$ stores $\calI_i$'s selected actions; $a_i^\alg$ is robot $i$'s selected action}
        \WHILE {$\singlesol_i = \emptyset$}
        \STATE $a_i\gets \arg\max_{a \myin \mathcal{V}_i}\,f(\solutionselected_i\cup \{a\})-f(\solutionselected_i)$;
        \STATE $g_i\gets f(\solutionselected_i\cup \{a_i\})-f(\solutionselected_i)$;
        \STATE \textbf{receive} $\{g_\rob\}_{\rob\myin{\mathcal{N}_i\setminus\agentselected_i}}$;
        \IF {$i=\arg\max_{\rob \myin \mathcal{N}_i\cup\{i\}\setminus\agentselected_i}\,g_{\rob}$}
        \STATE $\singlesol_i \leftarrow a_i$; {\color{gray}// $i$ has the best action candidate across $\mathcal{N}_i\cup\{i\}\setminus\agentselected_i$ and it selects this action candidate}
        \STATE \textbf{broadcast} $\singlesol_i$ \textbf{to} out-neighbors; {\color{gray}// out-neighbors can be different from $\calN_i$}
        \ELSE
        \STATE \textbf{denote} as $\calS^{\scenario{new}}_i\subseteq\calN_i\setminus \calI_i$ the set of neighbor(s) \\ that just selected action(s) in this iteration; {\color{gray}// $\calS^{\scenario{new}}_i$ may be empty as we explain in \Cref{subsec:algorithm}}
        \STATE \textbf{receive} $\singlesol_j$  
        \textbf{from} each robot $j \in \calS^{\scenario{new}}_i$;
        \STATE $\agentselected_i \leftarrow \agentselected_i \cup \calS^{\scenario{new}}_i$;
        \STATE $\solutionselected_i \leftarrow\solutionselected_i \cup \{\singlesol_j\}_{j\myin\calS_i^{\mathlarger{\scenario{new}}}}$;
        \ENDIF
		\ENDWHILE\label{line:end_for_3}
        \RETURN $\singlesol_i$.
	\end{algorithmic}\label{alg:dec_sub_max}
\end{algorithm}
}

\newcommand{\introFigTitleWidth}{0.2cm}
\newcommand{\introFigColWidth}{3.18cm}
\newcommand{\introFigSpacing}{\hspace{-2mm}}
\newcommand{\intoFigNameSpacing}{}
\newcommand{\advFigColWidth}{6cm}

\begin{figure*}[t!]
    \captionsetup{font=footnotesize}
	\centering
	\hspace{-9.7cm}
    \begin{minipage}{\columnwidth}
        \begin{tabular}{p{\introFigTitleWidth}|p{\introFigColWidth}p{\introFigColWidth}p{\introFigColWidth}p{\introFigColWidth}|p{\introFigColWidth}}%
              %%%%%%%%%%%%%%%%%%%%%%%%%%%%%%%%%%%%%%%%%%%%%%%%%%%%%%%%%%%%%%%%%%%%%%%%%%%%%%%%%%%%%%%%%%%%%%%%%%%%%%%%%
        % \hspace{-2mm}
        \hline
        \begin{minipage}{\introFigTitleWidth}%
        \end{minipage}
        &
        \multicolumn{4}{c|}{\sf \smaller\textbf{RAG}: Terminates in $2\,\tau_f\,|\calV_i|\,+\,\tau_c\,+\,\tau_\#$}
        &
        % \begin{minipage}{\introFigColWidth}%
        %       %   \vspace{-.3cm}
        %       % \centering
        %       % \rotatebox{0}{\sf \smaller\textbf{Sequential Greedy} \vspace{-4cm}}
        % \end{minipage}
        % &
        % \begin{minipage}{\introFigColWidth}%
        %       % \hspace{3cm}
        %       \centering
        %       \rotatebox{0}{\sf \smaller\textbf{RAG} \vspace{-4cm}}
        % \end{minipage}
        % &
        % \begin{minipage}{\introFigColWidth}%
        %       %   \vspace{-.3cm}
        %       % \centering
        %       % \rotatebox{0}{\sf \smaller\textbf{Sequential Greedy} \vspace{-4cm}}
        % \end{minipage}
        % &
        % \begin{minipage}{\introFigColWidth}%
        %       %   \vspace{-.3cm}
        %       % \centering
        %       % \rotatebox{0}{\sf \smaller\textbf{Sequential Greedy} \vspace{-4cm}}
        % \end{minipage}
        % &
        \begin{minipage}{\introFigColWidth}%
              %   \vspace{-.3cm}
              \centering
              \rotatebox{0}{\sf \smaller\textbf{SG}: $5\,\tau_f\,|\calV_i|\,+\,10\,\tau_c$ \vspace{-4cm}} 
        \end{minipage}
        \\
        \hline 
        \begin{minipage}{\introFigTitleWidth}%
              \rotatebox{90}{\hspace{.3cm}\textbf{Line Graph}\vspace{-4cm}}
        \end{minipage}
        &
        \begin{minipage}{\introFigColWidth}%
            % \vspace{5mm}
            \centering%
            \includegraphics[width=\columnwidth]{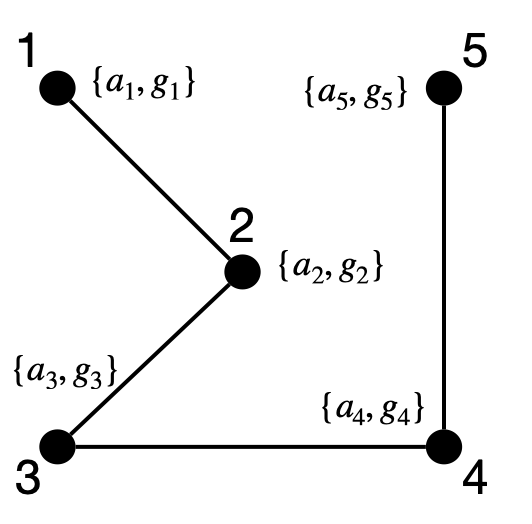} \\%width=0.64\columnwidth]
              % \vspace{-2mm}
            % \vspace{1mm}
            \caption*{(a) The 1st iteration of \alg starts. Each agent $i$ simultaneously finds its action candidate $a_i$ with the largest marginal gain $g_i$ from all available actions $\calV_i$. The operation takes $\tau_f\,|\calV_i|$.
            }
            \vspace{2mm}
        \end{minipage}
        &
        \begin{minipage}{\introFigColWidth}%0.6
              % \vspace{6.3mm}
              \centering%
              \includegraphics[width=\columnwidth]{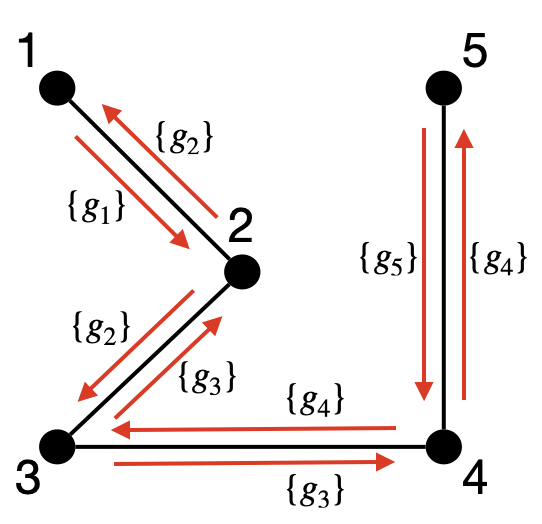} \\ 
              % \vspace{-2mm}
              % \vspace{1mm}
              \caption*{(b) Each agent $i$ simultaneously receives $g_j$ from each neighbor $j\in\calN_i$. This takes $\tau_\#$.  Then, it compares $g_i$ with them. We assume $g_2=\max{(g_1, g_2, g_3)}$ and $g_4=\max{(g_3, g_4, g_5)}$. 
              }
              \vspace{2mm}
        \end{minipage}
        &
        \begin{minipage}{\introFigColWidth}%0.6
              % \vspace{6.3mm}
              \centering%
              \includegraphics[width=\columnwidth]{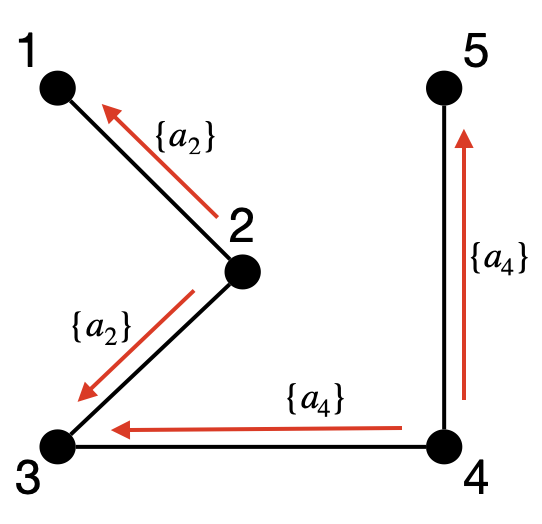} \\ 
              % \vspace{-2mm}
              % \vspace{1mm}
              \caption*{(c) Thus, agents 2 and 4 get to select actions. Agents 1, 3, 5 simultaneously receive the actions selected by their neighbors.  This takes $\tau_c$. The 1st iteration ends. Only 1, 3, 5 continue.
              }
              \vspace{2mm}
        \end{minipage}
        &
        \begin{minipage}{\introFigColWidth}%0.6
              % \vspace{6.3mm}
              \centering%
              \includegraphics[width=\columnwidth]{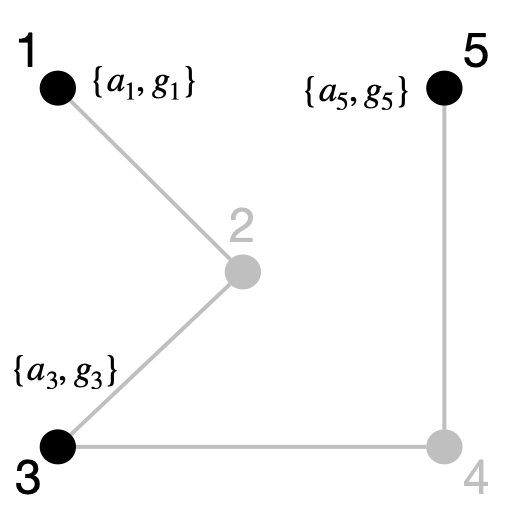} \\ 
              \vspace{-.5mm}
              % \vspace{1mm}
              \caption*{(d) The  2nd iteration starts. Given the received actions, agents 1, 3, 5 simultaneously select actions. This requires $\tau_f\,|\calV_i|$.  Then all agents have selected actions, and \alg terminates.
              }
              \vspace{2mm}
        \end{minipage}
        &
        \begin{minipage}{\introFigColWidth}%
              % \vspace{6.3mm}
              \centering%
              % \hspace*{.4mm}
              \includegraphics[width=\columnwidth]{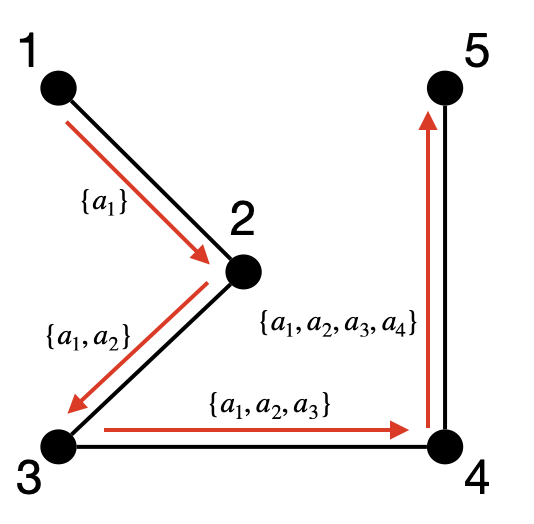} \\%0.8
               \vspace{-.35mm}
              \caption*{
          (e) Sequentially, from $i=1$ to $5$, agent $i$ receives $\{a_1,$ $\dots,a_{i-1}\}$,  in $\tau_c\,(i-1)$ time, then selects $a_i$, which takes $\tau_f\,|\calV_i|$ time, and then transmits $\{a_1,\dots,a_{i}\}$ to agent $i+1$.
              }
              \vspace{2mm}
        \end{minipage}
        \\\hline\hline
                \begin{minipage}{\introFigTitleWidth}%
        \end{minipage}
        &
        \multicolumn{4}{c|}{\sf \smaller\textbf{RAG}: Terminates in $2\,\tau_f\,|\calV_i|\,+\,\tau_c\,+\,\tau_\#$}
        &
        % \begin{minipage}{\introFigColWidth}%
        %       %   \vspace{-.3cm}
        %       % \centering
        %       % \rotatebox{0}{\sf \smaller\textbf{Sequential Greedy} \vspace{-4cm}}
        % \end{minipage}
        % &
        % \begin{minipage}{\introFigColWidth}%
        %       % \hspace{3cm}
        %       \centering
        %       \rotatebox{0}{\sf \smaller\textbf{RAG} \vspace{-4cm}}
        % \end{minipage}
        % &
        % \begin{minipage}{\introFigColWidth}%
        %       %   \vspace{-.3cm}
        %       % \centering
        %       % \rotatebox{0}{\sf \smaller\textbf{Sequential Greedy} \vspace{-4cm}}
        % \end{minipage}
        % &
        % \begin{minipage}{\introFigColWidth}%
        %       %   \vspace{-.3cm}
        %       % \centering
        %       % \rotatebox{0}{\sf \smaller\textbf{Sequential Greedy} \vspace{-4cm}}
        % \end{minipage}
        % &
        \begin{minipage}{\introFigColWidth}%
              %   \vspace{-.3cm}
              \centering
              \rotatebox{0}{\sf \smaller\textbf{SG}: $5\,\tau_f\,|\calV_i|\,+\,17\,\tau_c$\vspace{-4cm}} 
        \end{minipage}
        \\
        \hline 
        % \end{minipage} \\
        % \vspace{1mm}
        % \hspace{-8.9cm}
        % \begin{minipage}{\columnwidth}
        % 	%%%%%%%%%%%%%%%%%%%%%%%%%%%%%%%%%%%%%%%%%%%%%%%%%%%%%%%%%%%%%%%%%%%%%%%%%%%%%%%%%%%%%%%%%%%%%%%%%%%%%%%%%
        % \hspace{-2mm}	
        \begin{minipage}{0.3cm}%
              \rotatebox{90}{\textbf{Star Graph}}
        \end{minipage}
        &
        \begin{minipage}{\introFigColWidth}%
                    %   \vspace{-.3cm}
              \centering%
              \includegraphics[width=\columnwidth]{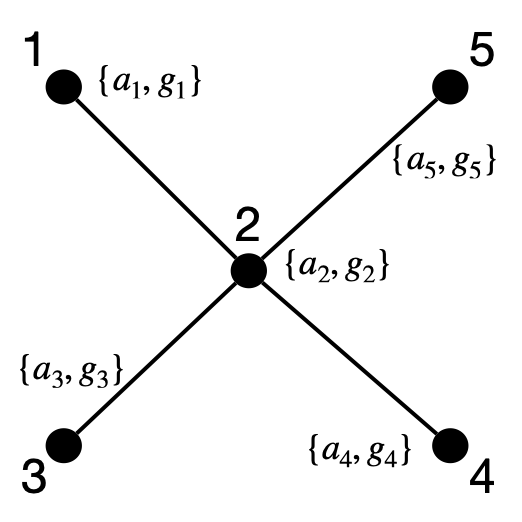} \\%width=0.64\columnwidth]
              %\vspace{-2mm}
              % \vspace{6.6mm}
              \caption*{(f) The 1st iteration of \alg starts. Each agent $i$ simultaneously finds its action candidate $a_i$ with the largest marginal gain $g_i$ from all available actions $\calV_i$. The operation takes $\tau_f\,|\calV_i|$.
              }
              \vspace{2mm}
        \end{minipage}
        &
        \begin{minipage}{\introFigColWidth}%0.6    
        % \vspace{8mm}
              \centering%
        \includegraphics[width=\columnwidth]{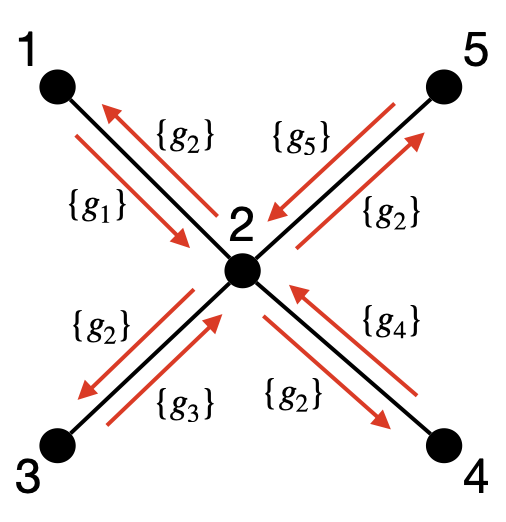} \\
        \vspace{-1mm}
        \caption*{(g) Each agent $i$ simultaneously receives $g_j$ from each neighbor $j\in\calN_i$. This takes $\tau_\#$.  Then, it compares $g_i$ with them. We assume $g_2=\max{(g_1, g_2, g_3, g_4, g_5)}$. 
        }
        % \vspace{6mm}
        \vspace{2mm}
        \end{minipage}
        &
        \begin{minipage}{\introFigColWidth}%0.6    
        % \vspace{8mm}
              \centering%
        \includegraphics[width=\columnwidth]{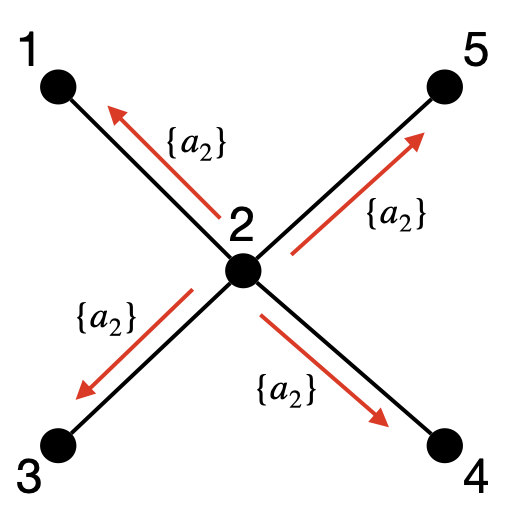} \\ 
         \vspace{-1mm}
        \caption*{(h) Thus, agent 2 gets to select an action. Agents 1, 3, 4, 5 simultaneously receive the action selected by their neighbor. This takes $\tau_c$. The 1st iteration ends. Only 1, 3, 4, 5 continue.
        }
        \vspace{2mm}
        \end{minipage}
        &
        \begin{minipage}{\introFigColWidth}%0.6    
        % \vspace{8mm}
              \centering%
        \includegraphics[width=\columnwidth]{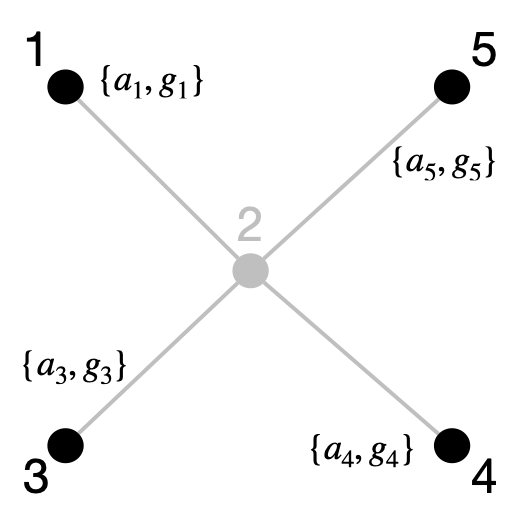} \\ 
         % \vspace{6.6mm}
        \caption*{(i) The 2nd iteration starts. Given the received action, agents 1, 3, 4, 5 simultaneously select actions. This requires $\tau_f\,|\calV_i|$. Then all agents have selected actions, and \alg terminates.
        }
         \vspace{2mm}
        \end{minipage}
        &
        \begin{minipage}{\introFigColWidth}%
        % \vspace{8mm}
        \centering%
        % \hspace*{-2mm}
        \includegraphics[width=\columnwidth]{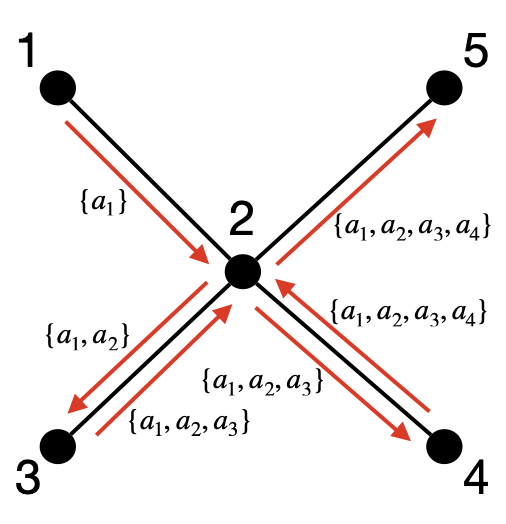} \\%0.8
         \vspace{-1.25mm}
        \caption*{(j) From $i=1$ to $5$, agent $i$ receives $\{a_1,\dots,a_{i-1}\}$, possibly via $r_i$ relay nodes (takes  $\tau_c\,(r_i+1)\,(i-1)$), then selects $a_i$ (takes $\tau_f\,|\calV_i|$), and then transmits $\{a_1,\dots,a_{i}\}$ to agent $i+1$.
        }
        \vspace{2mm}
        \end{minipage}\\
        \hline
        \end{tabular}
\end{minipage} 
	\caption{\textbf{\alg vs. Sequential Greedy (\sg)~\cite{fisher1978analysis}.} The two algorithms are compared in their execution steps and in the time they need to terminate. We present the case of five agents in two scenarios with the communication networks being (i) an undirected line graph (top row), and (ii) an undirected star graph (bottom row). Nodes $1$ to $5$ are the agents $1$ to $5$; black lines are undirected communication links; $\{a_i, g_i\}$ denotes agent $i$'s newly updated action candidate $a_i$ with marginal gain $g_i$; $\{a_i\}$ and $\{g_i\}$ alongside red arrows are the actions and marginal gain values being transmitted between two agents; the transparent nodes are the agents that have selected their actions and thus already ended running \alg; and the transparent edges are the communication channels that have become ``disappeared'' since at least one end of them has finished \alg. The implementation of \alg ((a)--(d) and (f)--(i)) results in shorter decision time than \sg ((e) and (j)) in both scenarios with the line and star graphs, respectively. 
	}\label{fig:RAG-vs-SG}
\end{figure*}

We present the \textit{Resource-Aware distributed Greedy} (\alg) algorithm. 
Examples of how the algorithm works are given in Fig.~\ref{fig:RAG-vs-SG}.  Therein, we also compare \alg to the Sequential Greedy algorithm (\sg)~\cite{fisher1978analysis}.  \sg is the ``gold standard'' in submodular maximization.  \sg is presented in (\Cref{subsec:RAG-vs-SG}).

\subsection{The {Resource-Aware distributed Greedy} (\hspace{-.5mm}\scenario{\textit{RAG}}) Algorithm}\label{subsec:algorithm}

The pseudo-code of \alg, as it is used onboard an robot~$i$, is presented in \Cref{alg:dec_sub_max}.  The purpose of each iteration of \alg, namely, of each ``while loop'' (lines 2--15), is to enable robot $i$ to decide whether to select an action over its neighbors at this iteration or to pass because a neighbor has an action with a higher marginal gain.  If passing, then the robot must wait for a future iteration to select an action.  In more detail, at each ``while loop'':
\begin{itemize}[leftmargin=*]
    \item robot $i$ finds an action $a_i$ with the highest marginal gain $g_i$ {given the actions selected by neighbors $\calI_i\subseteq\calN_i$} so far (lines 3--4).
    \item robot $i$ receives the respective highest marginal gain $g_j$ of all neighbors $j$ that have \underline{not} selected an action yet, namely, of all $j\in\calN_i\setminus\calI_i$ (line 5). 
    \item robot $i$ compares $g_i$ with all $g_j$'s (line 6).
    \item If $g_i > g_j, \forall j\in\mathcal{N}_i\setminus\calI_i$, then robot $i$ selects $a_i$, \ie $a_i^\alg\gets a_i$, broadcast $a_i^\alg$, and \alg terminates onboard robot $i$ (lines 6--8 and 2, respectively).
    \item Otherwise, robot $i$ passes (line 9), and receives the actions selected {at this iteration} by its neighbors with the highest marginal gain among their respective neighbors, if any (line 11) ---the set of these neighbors is denoted as  $\calS_i^{\scenario{new}}$ (line 10). Particularly, $\calS^{\scenario{new}}_i$ may be empty if no neighbor can select an action per their onboard iteration of \alg.
\end{itemize}

\begin{remark}[Directed, Possibly Disconnected Communication Topology]\label{rem:disconnected} \alg is valid for directed and even disconnected communication topologies. For example, \alg can be applied to a robot $i$ that is completely disconnected from the network.
\end{remark}

\subsection{Comparison to the Sequential Greedy algorithm (\hspace{-.5mm}\scenario{\textit{SG}})}\label{subsec:RAG-vs-SG}

\alg is compared with \sg~\cite{fisher1978analysis} in Fig.~\ref{fig:RAG-vs-SG}. We rigorously present \sg next, and provide a qualitative comparison with \alg. {The rigorous comparison of runtime and approximation performance is postponed to \Cref{sec:guarantee,sec:resource}, where, \eg we prove that \alg's runtime scales linearly with the number of the robots whereas \sg's scales cubically}.
 
\sg instructs the robots to sequentially select actions such that the $i$-th robot in the sequence selects
\begin{equation}\label{eq:sga}
  a_{i}^\sg \,\in\, \max_{a\myin\calV_i}\;\; f(\,a\;|\;\{a_{1}^\sg,\ldots,a_{i-1}^\sg\}\,);
\end{equation}
\ie $a_{i}^\sg$ maximizes the marginal gain over the actions that have been selected by the $i-1$ previous robots in the sequence.
In contrast, \alg enables the robots to select actions in parallel, and even if \underline{not} all their neighbors have selected an action.

The above action-selection features of \alg~---parallelization and action selection before all neighbors have chosen an action--- can further speed up the algorithm's termination.  Also, they enable \alg to work on arbitrary communication topologies. In contrast, \sg requires a line path connecting all robots in the action-selection sequence.  If such a path does \underline{not} exist (see the star graph example in Fig.~\ref{fig:RAG-vs-SG}), the $i$-th robot in the action-selection sequence cannot communicate directly with the $(i+1)$-th robot.  Then, \sg requires extra communication rounds for message relaying, further delaying its termination.  Specifically, {given the limited communication speed of robot-to-robot communication channels~\cite{oubbati2019routing}}, the termination of \sg is delayed due to both the increased number of communication rounds, and the communication delay incurred from relaying the actions of multiple robots ---the robots in the sequence \mbox{that have chosen an action so far--- across the network}. %Algorithm

\section{Approximation Guarantees: \\Centralization vs.~Decentralization Perspective}\label{sec:guarantee}

\begin{figure*}[t]
    \captionsetup{font=footnotesize}
	\begin{center}
	\hspace{-9cm}\begin{minipage}{\columnwidth}
% 	\begin{tabular}{cc}%
		%%%%%%%%%%%%%%%%%%%%%%%%%%%%%%%%%%%%%%%%%%%%%%%%%%%%%%%%%%%%%%%%%%%%%%%%%%%%%%%%%%%%%%%%%%%%%%%%%%%%%%%%%
	\hspace{-2mm}		
	    \begin{minipage}{0.63\columnwidth}%
			 %   \vspace{-.3cm}
            \centering%
            \includegraphics[width=.8\columnwidth]{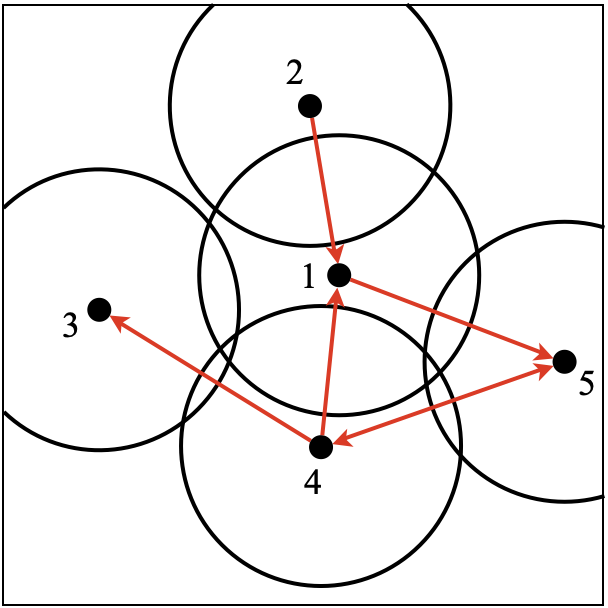} \\%width=0.64\columnwidth]
             % \vspace{-0.5mm}
            \caption*{(a) \textbf{Setup.} The robots are tasked to maximize the area covered among the available square area by picking locations to stay at. Each robot (dot) has a field of view (circle), and established communication channels with its neighbors  (red arrows). For example, for robot $1$, its neighbors are the robots in $\calN_1=\{2,4\}$, thus its non-neighbors are those in $\calN_1^c=\{3,5\}$. 
            }
		\end{minipage}~~~~~
		\begin{minipage}{0.63\columnwidth}%0.6
	    % \vspace{5mm}
        \centering%
        \includegraphics[width=.8\columnwidth]{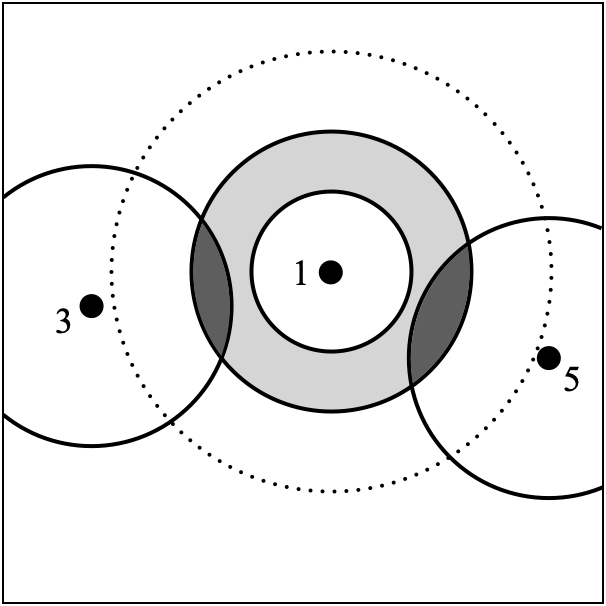} \\ 
        % \vspace{6.3mm}
        \caption*{(b) \textbf{Robot $1$'s non-neighbors $\calN_1^c$, $\scenario{coin}_{f,1}(\calN_1)$, and worst-case $\scenario{coin}_{f,1}$}. The non-neighbors $\calN_1^c=\{3,5\}$ are the robots that robot $1$ does not communicate range, \eg because of limited bandwidth or because they are outside its communication range (dashed circle). $\scenario{coin}_{f,1}(\calN_1)$ is depicted by the dark gray area, and its upper bound the light gray ring area  (\Cref{ex:coin-bound}). 
        }
		\end{minipage}~~~~~
		\begin{minipage}{0.63\columnwidth}%
			    \vspace{-16mm}
            \centering%
            \hspace*{-2mm}\includegraphics[width=.95\columnwidth]{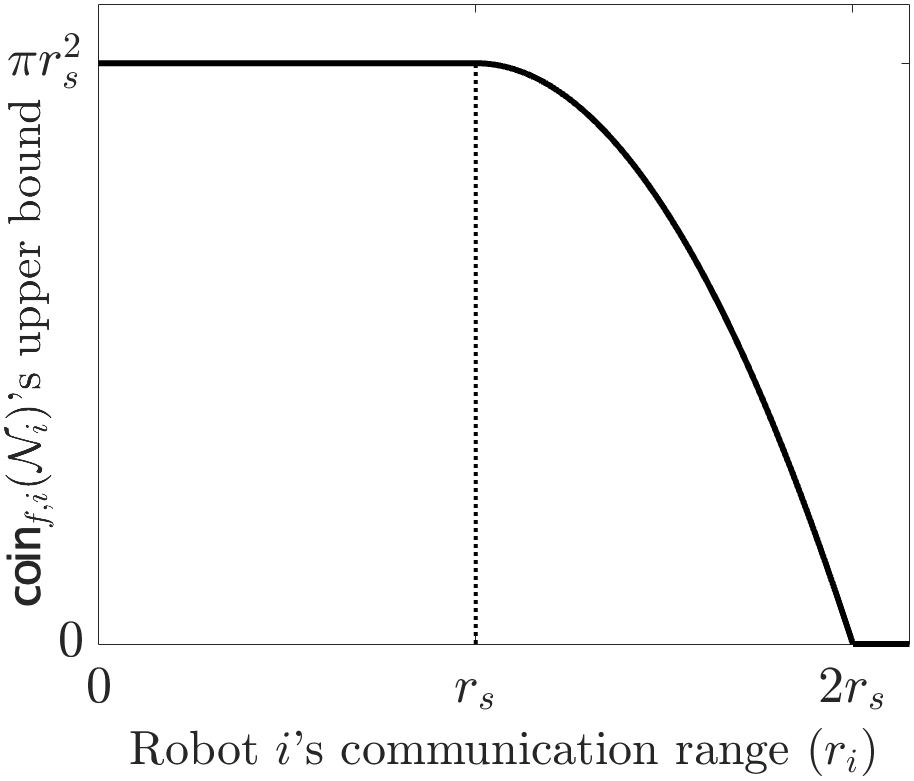} \\%0.8
             \vspace{-0.4mm}
            \caption*{(c) \textbf{Computable upper bound of $\scenario{coin}_{f,i}$ as a function of robot $i$'s communication range}.  $r_s$ is the robots' sensing radius (\Cref{ex:coin-bound}). 
            }
		\end{minipage}
% 		\end{tabular}
	\end{minipage}
	\caption{\textbf{Multi-Robot Network for Area Coverage}. 
	(a) A scenario with $5$ robots, their communication network, and a limited square area that the robots are tasked to cover;  (b) Illustration of robot $1$'s non-neighbors $\calN_1^c$, $\scenario{coin}_{f,1}(\calN_1)$, and worst-case $\scenario{coin}_{f,1}$ $\scenario{coin}_{f,1}(\calN_1)$; (c)  Computable upper bound of $\scenario{coin}_{f,i}$ as a function of robot $i$'s communication range per the analysis in \Cref{ex:coin-bound}.
	}\label{fig:image_covering}
	% \vspace{-8mm}
	\end{center}
\end{figure*} 

We present a priori and posteriori suboptimality bounds for \alg (\Cref{th:additive,th:posterior}).
To this end, we first introduce the notion of \emph{centralization of information} to quantify the a priori bound (\Cref{subsec:coin}). Then, we present the a priori bound (\Cref{subsec:approx-guarantee}) and the a posteriori bound (\Cref{subsec:posteriori}). Finally, we compare the a priori bound with the a priori bounds in the state of the art (\Cref{subsec:guarantees_comparison}).

\subsection{Centralization of Information}\label{subsec:coin}

We introduce the notion of \textit{\underline{c}entralization \underline{o}f \underline{in}formation} ($\scenario{coin}$). We use the notion to quantify the a priori suboptimality cost due to decentralization.  $\scenario{coin}$ measures how the agents' actions can overlap due to \underline{not} coordinating with their non-neighbors. We also relate $\scenario{coin}$ to curvature~\cite{conforti1984submodular} and pair-wise consistency~\cite{corah2018distributed} (\Cref{rem:curvature,rem:pairwise}), and show that $\scenario{coin}$ is a less conservative measure of action overlap.

We use the following notation and definition:
\begin{itemize}[leftmargin=*]\setlength\itemsep{0.15em}
    \item $\mathcal{N}_i^c\triangleq\ouragent\setminus\{\mathcal{N}_i\cup\{i\}\}$ is the set of robot $i$'s non-neighbors, \ie the robots beyond $i$'s neighborhood (see Fig.~\ref{fig:image_covering}(b)).
\end{itemize}

\begin{definition}[Curvature~\cite{conforti1984submodular}]
    Consider a function $f: 2^{\calV_\calN} \to \mathbb{R}$ that is non-decreasing and submodular. Without loss of generality, we assume that for any action $a \in \{\mathcal{V}_i\}_{i \in \calN}$, it holds that $f({a}) \neq 0$. The curvature of $f$ is defined as:
    \begin{equation}\label{eq:curvature}
    \kappa_f\triangleq 1-\min_{\calA \myin \calV_\calN}\;\min_{a\myin\calA}\;\frac{f(\calA)-f(\calA\setminus\{a\})}{f(a)}.
    \end{equation}
\end{definition}$\kappa_f$ measures how much an agent's action $a$ can overlap with other agents' actions. Particularly, $\kappa_f \in [0,1]$, and if $\kappa_f=0$, then  $f(\calA)-f(\calA\setminus\{a\})=f(a)$, for all $a\in\calA$, \ie \underline{no} agent's action overlaps with any other agent's actions. In~contrast, if $\kappa_f=1$, then there exists an action $a\in\calA$ such that $f(\calA)=f(\calA\setminus\{a\})$, \ie action $a$~has no contribution to $f(\calA)$ in the presence of all other agents. 

\begin{definition}[Centralization of Information]
\label{def:ourcurv}
Consider a function $f:2^{\calV_\calN}\mapsto$ $\mathbb{R}$ and a communication network $\{\calN_i\}_{i\myin\calN}$ where each agent $i\in \calN$ has an selected an action $a_i$. Then, agent $i$'s \emph{centralization of information} is defined as
\begin{equation}\label{eq:ourcurv}
  \ourcurv (\calN_i)\triangleq f(a_i) - f(a_i\,|\,\{a_j\}_{j\myin\calN_i^c}).
\end{equation}
\end{definition}

$\ourcurv$ measures how much $a_i$ can overlap with the actions of agent $i$'s non-neighbors.  In the best case where $a_i$ does \underline{not} overlap at all, \ie $f(a_i\,|\,\{a_j\}_{j\myin\calN_i^c})=f(a_i)$, then $\ourcurv=0$.  In the worst case instead where $a_i$ is fully overlapped, \ie $f(a_i\,|\,\{a_j\}_{j\myin\calN_i^c})=0$, then $\ourcurv= f(a_i)$. 

From an information-theoretic perspective, $\ourcurv$ measures how much the information collected by $a_i$ overlaps with the information collected by $\{a_j\}_{j\myin\calN_i^c}$.  Rigorously, if $f$ is an entropy metric, then  $\ourcurv$ is mutual information~\cite{cover2012elements}.  Thus, $\ourcurv=0$ if and only if the information collected by $a_i$ is decentralized from (independent of) the information collected by $\{a_j\}_{j\myin\calN_i^c}$.  In this sense, $\ourcurvnew$ captures the decentralization of information across the network. 

\begin{remark}[Relation to~Curvature{~\cite{conforti1984submodular}}]\label{rem:curvature}
$\ourcurvnew$ is a less conservative measure of action overlap compared to $\kappa_f$.  $\kappa_f$ measures the overlap of an agent's action with the actions of all other agents, whereas $\ourcurvnew$ measures the overlap of an agent's action with the actions of its non-neighbors only.  Particularly, we prove that, for all $i \in \calN$, $\ourcurv/f(a_i)\leq \kappa_f$ (see \Cref{prop:coin}, which is presented later on in this section). 
\end{remark}

\begin{remark}[Relation to~Pairwise Redundancy{~\cite{corah2018distributed}}]\label{rem:pairwise}
$\ourcurvnew$ generalizes the notion of \emph{pairwise redundancy} to capture the action overlap among multiple agents instead of a pair.
Specifically, given any two agents $i$ and $j$, their \emph{pair-wise consistency} is defined as $w_{ij}\triangleq \max_{s_i\myin\mathcal{V}_i}\, \max_{s_j\myin\mathcal{V}_j,\,j\myin\mathcal{N}_i}\,[f(s_i)-f(s_i\,|\,s_j)]$. 
In contrast, $\ourcurv$ captures the action overlap between an agent $i$ and its \emph{non}-neighbors, capturing that way the decentralization of information across the network.  
\end{remark}

By measuring how much agent $i$'s action overlaps with the actions of its non-neighbors, $\ourcurv$ equivalently captures agent $i$'s suboptimality cost due to not coordinating with its non-neighbors.  We thus expect that the more neighbors agent $i$ has the smaller is $\ourcurv$.  Indeed,
the following result holds:

\begin{proposition}[Monotonicity]\label{prop:coin}
For any $i\in\calN$, $\ourcurv(\calN_i)$ is non-increasing in $\calN_i$.  Its least and maximum values, attained for $\calN_i=\calN\setminus\{i\}$ and $\calN_i=\emptyset$, respectively, are as follows:
    \begin{equation}  \underbrace{0=\ourcurv(\calN\setminus\{i\})}_{\mathlarger{\begin{array}{c}
             \text{full centralization} 
             % \\
             % \text{networks} 
             \end{array}}}\leq\ourcurv(\calN_i)\leq\underbrace{\ourcurv(\emptyset)=\kappa_f\,f(a_i)}_{\mathlarger{\begin{array}{c}
             \text{full decentralization} 
             % \\
             % \text{networks} 
             \end{array}}}.
    \end{equation}
\end{proposition}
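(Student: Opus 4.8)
The plan is to prove the three constituent assertions—monotonicity in $\calN_i$, the value at the fully-centralized extreme $\calN_i=\calN\setminus\{i\}$, and the bound at the fully-decentralized extreme $\calN_i=\emptyset$—and then chain them. Each follows directly from the defining properties of $f$ in \Cref{def:submodular} (normalization, monotonicity, submodularity) together with the curvature definition in \cref{eq:curvature}; no auxiliary construction is required, so this is essentially a matter of instantiating the right inequality in the right place.

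For monotonicity I would take two neighborhoods $\calN_i \subseteq \calN_i'$ and use that complementation reverses inclusion, so $(\calN_i')^c \subseteq \calN_i^c$. Writing $\calB \triangleq \{a_j\}_{j \in (\calN_i')^c}$ and $\calC \triangleq \{a_j\}_{j \in \calN_i^c}$ we have $\calB \subseteq \calC$, and the submodularity clause of \Cref{def:submodular} (namely $f(s\,|\,\calA)\geq f(s\,|\,\calB)$ whenever $\calA\subseteq\calB$) gives $f(a_i\,|\,\calB) \geq f(a_i\,|\,\calC)$. Substituting into \cref{eq:ourcurv} yields $\ourcurv(\calN_i') = f(a_i) - f(a_i\,|\,\calB) \leq f(a_i) - f(a_i\,|\,\calC) = \ourcurv(\calN_i)$, so $\ourcurv$ is non-increasing in $\calN_i$. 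Since $\emptyset \subseteq \calN_i \subseteq \calN\setminus\{i\}$ for every admissible $\calN_i$, this monotonicity immediately sandwiches $\ourcurv(\calN_i)$ between its values at the two extremes, giving $\ourcurv(\calN\setminus\{i\}) \leq \ourcurv(\calN_i) \leq \ourcurv(\emptyset)$.

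It then remains to evaluate the endpoints. At the fully-centralized extreme $\calN_i=\calN\setminus\{i\}$ we get $\calN_i^c=\emptyset$, so the conditioning set is empty and normalization $f(\emptyset)=0$ gives $f(a_i\,|\,\emptyset)=f(a_i)$; hence $\ourcurv(\calN\setminus\{i\})=f(a_i)-f(a_i)=0$, which also pins the lower bound at $0$. At the fully-decentralized extreme $\calN_i=\emptyset$ we get $\calN_i^c=\calN\setminus\{i\}$, so writing $\calA^\star=\{a_j\}_{j\in\calN}$ the relevant marginal is $f(a_i\,|\,\calA^\star\setminus\{a_i\})=f(\calA^\star)-f(\calA^\star\setminus\{a_i\})$. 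Instantiating the minimum in \cref{eq:curvature} at the single pair $(\calA^\star,a_i)$ gives $f(\calA^\star)-f(\calA^\star\setminus\{a_i\}) \geq (1-\kappa_f)\,f(a_i)$, whence $\ourcurv(\emptyset)=f(a_i)-f(a_i\,|\,\calA^\star\setminus\{a_i\}) \leq \kappa_f\,f(a_i)$. Combining the three pieces produces the displayed chain.

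The only delicate point, and the place to be careful, is this upper endpoint: because $\kappa_f$ is a worst-case quantity (a minimum over \emph{all} subsets and elements), instantiating it at the particular pair $(\calA^\star,a_i)$ yields only $\ourcurv(\emptyset)\leq\kappa_f\,f(a_i)$ rather than an exact equality, so the ``$=$'' in the full-decentralization label should be read as identifying $\kappa_f\,f(a_i)$ as the (worst-case tight) upper bound on the fully-decentralized value, consistent with the claim $\ourcurv/f(a_i)\leq\kappa_f$ of \Cref{rem:curvature}. Everything else is a mechanical application of the definitions, and I anticipate no substantive obstacle beyond keeping the complementation direction and the direction of the submodularity inequality straight.
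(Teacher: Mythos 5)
Your proof is correct and follows essentially the same route as the paper's: monotonicity via submodularity applied to the reversed inclusion of complements, the value $0$ at full centralization via normalization, and the bound at full decentralization by instantiating the minimum in the curvature definition at the pair $(\calA^\star,a_i)$. Your caveat about the upper endpoint is well taken — the paper's own proof likewise establishes only $\ourcurv(\emptyset)\leq\kappa_f\,f(a_i)$, so the displayed equality is best read as an upper bound rather than an exact evaluation.
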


The sum of all $\{\ourcurv\}_{i\myin\calN}$ will be used in the next section to characterize the global suboptimality cost due to decentralization.  Given this characterization, we may want to enable the agents to pick their neighborhoods to minimize coin subject to their communication-bandwidth constraints.  $\ourcurvnew$ can be uncomputable a priori since agent $i$ will not have access to the actions of its non-neighbors.
Notwithstanding, finding a computable upper bound for $\ourcurv$ may be easy, as we demonstrate in the following example.

\begin{example}[Computable Upper Bound: Example of Area Coverage]\label{ex:coin-bound}
Consider an area coverage task where each robot carries a camera with a {circular field-of-view (FOV) of radius $r_s$} (Fig.~\ref{fig:image_covering}(a)). Consider that each robot $i$ has fixed its neighborhood $\calN_i$ by picking a communication range $r_i$.
Then, $\ourcurv$ is equal to the overlap of the FOVs of robot $i$ and its non-neighbors. 
Since the number of robot $i$'s non-neighbors may be unknown, 
an upper bound to $\ourcurv$ is the gray ring area in Fig.~\ref{fig:image_covering}(b), obtained assuming an infinite amount of non-neighbors around robot $i$, located just outside the boundary of $i$'s communication range. Specifically, 
\begin{equation}\label{eq:communication_range}
\ourcurv \leq \max(0, \;\pi [r_s^2-(r_i-r_s)^2]). 
\end{equation} 
The bound as a function of the communication range $r_i$ is plotted in Fig.~\ref{fig:image_covering}(c). It tends to zero for increasing $r_i$, as expected. When the distance of agent $i$ for its nearest non-neighbor is larger than $2r_s$, then the FOVs of agent $i$ and its non-neighbors cannot overlap, thus $\ourcurv=0$.  
\end{example}

\definecolor{OliveGreen}{rgb}{0,0.5,0}
\begin{table*}[t!]
    % \captionsetup{font=footnotesize}
    \renewcommand{\arraystretch}{1.9}
    \resizebox{\textwidth}{!}{
    \rotatebox{0}{
    \begin{minipage}{1.21\textwidth}
    \centering
    \begin{tabular}{c||c|c|c|c|c}
    \cline{2-6}
         & \multirow{2}{*}{\textbf{Method}} & 
         \multicolumn{3}{c|}{\textbf{Trade-off of Decision Time and Suboptimality Guarantee}} & \multirow{2}{*}{\parbox{3.1cm}{\centering\textbf{Communication Network Topology}}}\\
    \cline{3-5}
         & & \textbf{Decision Time: Computation} & \textbf{Decision Time: Communication} & \textbf{Suboptimality Guarantee} & \\
        
    \hline\hline
         \multirow{3}{*}{\rotatebox[origin=c]{90}{\textbf{Continuous}}} 
        & Robey et al. \cite{robey2021optimal} & $\Omega(\tau_f\,|\ouragent|^{2.5}\,/\,\epsilon)$ & $\Omega(\tau_\#\,|\mathcal{V}_\calN|\,|\ouragent|^{2.5}\,/\,\epsilon)$ & \blue{$(1-{1}/{e})\,\opt$ $-\,\epsilon$}  & connected, undirected \\  
    \cline{2-6}
         & Rezazadeh and Kia \cite{rezazadeh2023distributed} & $\Omega(\tau_f\,|\ouragent|^2\,\scenario{diam}(\calG)\,/\,\epsilon)$ & $\Omega(\tau_\#\,|\mathcal{V}_\calN|\,|\ouragent|^2\,\scenario{diam}(\calG)\,/\,\epsilon)$ & \blue{$(1-{1}/{e}-\epsilon)\,\opt$} & connected, undirected \\ 
    \cline{2-6}
         & Du et al. \cite{du2022jacobi} & $\sim\Theta(\tau_f\,|\ouragent|^2)$ & $\sim\Theta(\tau_c\,|\ouragent|^2)$ & $({1}/{2}-\epsilon)\,\opt$ & connected, undirected \\ 
    \hline\hline  
         & Liu et al. \cite{liu2021distributed} & $O(\tau_f\,|\mathcal{V}_i|\,|\ouragent|^2)$ & $O((\tau_c+\tau_\#)\,|\ouragent|^2\,\scenario{diam}(\mathcal{G}))$ & {${1}/{2}\,\opt$} & connected, directed \\
    \cline{2-6}
         \multirow{5}{*}{\rotatebox[origin=c]{90}{\textbf{Discrete}}} & \multirow{2}{*}{Konda et al. \cite{konda2022execution}} & \multirow{2}{*}{$\tau_f\,|\mathcal{V}_i|\,|\calN|$} & $O(\tau_c\,|\calN|^2)$ & \multirow{2}{*}{${1}/{2}\,\opt$} & connected, undirected \\
         \cline{4-4} \cline{6-6}
         \rule{0pt}{10pt} & & & $O(\tau_c\,|\calN|^3)$ & & strongly connected, directed \\
         \cline{2-6}  
         \rule{0pt}{20pt} & Corah and Michael \cite{corah2018distributed} & \parbox{3cm}{\centering $O(\tau_f\,|\mathcal{V}_i|\,/\,\epsilon)$, \\[2mm] $\leq\tau_f\,|\calV_i|\,|\calN|$} & \parbox{3cm}{\centering $O(\tau_c\,/\,\epsilon^2)$, \\[2mm] $\leq\frac{1}{2}\tau_c\,(|\calN|-1)^2$} & ${1}/{2}\,(\opt\,-\,\epsilon)$ & complete \\[3mm] 
    \cline{2-6}
        \rule{0pt}{15pt} & Gharesifard and Smith \cite{gharesifard2017distributed} &  {$\leq\tau_f\,|\mathcal{V}_i|\,|\calN|$} & $O(\tau_c\,|\calN|^2)$ & {$\eta\,\opt,\; \eta\in \left[\frac{1}{|\calN| - \omega(\calG_{\text{info}}) + 2}, \frac{\chi(\calG_{\text{info}})}{|\calN|}\right]$} & \multirow{2}{*}{\vspace{-3mm}\blue{possibly disconnected, directed}} \\[2mm] 
       \cline{2-5}
       \rule{0pt}{15pt} & Grimsman et al. \cite{grimsman2019impact} &  {$\leq\tau_f\,|\mathcal{V}_i|\,|\calN|$} & $O(\tau_c\,|\calN|^2)$ & {$\psi\,\opt,\; \psi\in \left[\frac{1}{\alpha^\star(\calG_{\text{info}})+1}, \frac{1}{\alpha^\star(\calG_{\text{info}})}\right]$} & \\[2mm] 
         \cline{1-6}  
        \rule{0pt}{22pt} & \alg (this paper) & $\blue{\leq\tau_f\,\max_{i\myin\calN}{(|\mathcal{V}_i|\,|\mathcal{N}_i|)}}$ & $\blue{\leq(\tau_c+\tau_\#)\,(|\calN|-1)}$ & \parbox{3.8cm}{\centering ${1}/{2}\left[\opt-{\sum_{i\in\calN} {\ourcurv(\calN_i)}}\right]$, \\[1.5mm] $\xi\,\opt,\; \xi\in \left[1-\kappa_f, \frac{1}{1+\kappa_f}\right]$}  & \blue{possibly disconnected, directed} \\[3.7mm]
    \cline{2-6}
    \hline
    \end{tabular}
    \renewcommand{\arraystretch}{1}
    \caption{\textbf{{\sf \smaller\textbf{RAG}} vs.~State-of-the-Art Distributed Submodular Maximization Algorithms.} The state of the art is divided into algorithms that optimize (i) in the continuous domain, employing a continuous representation of $f$~{\cite{calinescu2011maximizing}}, and (ii) in the discrete domain. For each algorithm, we present its decision time, split into computation and communication times {(see \Cref{sec:resource} for definitions of $\tau_f$, $\tau_c$, and $\tau_\#$)}, and suboptimality guarantee. We also specify the communication network topology that is required by each algorithm. The best performances for each metric are in \blue{blue}.  We assume for simplicity that $|\calV_i|\,=|\calV_j|, \forall i,j\in\calN$.
    }
    \label{tab:comparison}
        \end{minipage}}
        }
\end{table*}

\subsection{A Priori Suboptimality Bound of \hspace{-.5mm}\scenario{\textit{RAG}}}\label{subsec:approx-guarantee}

We present the a priori suboptimality bound of \alg. 
by bounding $\ourcurvnew$ with a computable bound as a function of the agents' neighborhoods, we enable the agents to optimize their neighborhoods to maximize the suboptimality bound of \alg subject to their communication-bandwidth constraints.  

We focus the presentation on non-decreasing and doubly submodular functions, for sake of simplicity.  In Appendix~I (\Cref{cor:non-submodular}), we generalize the results to functions that are non-decreasing and submodular or approximately submodular. 

We use the following notation:
\begin{itemize}[leftmargin=*]\setlength\itemsep{0.15em}
    \item $\calA^\opt \myin\arg \,\max_{a_i\myin\mathcal{V}_i, \, \forall\, i\myin \calN} \, f(\,\{a_i\}_{i\myin \calN}\,)$, \ie $\calA^\opt$ is an optimal solution to Problem~\ref{pr:main};
    \item $\oursol\triangleq\{\singlesol_i\}_{i\myin\ouragent}$ is \alg's output for robots $\calN$.
\end{itemize}

\begin{theorem}[A Priori Suboptimality Bound]\label{th:additive}
Given a communication topology $\{\calN_i\}_{i\myin\calN}$, \alg guarantees:
    {\begin{equation}\label{eq:thm-3}
        f(\,\oursol\,)\,\geq \, \frac{1}{1+\kappa_f}\left[f(\,\mathcal{A}^\opt\,) - \kappa_f\sum_{i\in \calN}\ourcurv(\,\calN_i\,)\right].
    \end{equation}}
\end{theorem}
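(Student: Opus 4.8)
The plan is to upper-bound the optimal value $f(\calA^\opt)$ by sandwiching the joint value $f(\calA^\opt\cup\oursol)$ between a submodular/greedy upper estimate and a curvature-based lower estimate, and then to convert the resulting inequality into \cref{eq:thm-3}. First I would fix a total order in which the robots finalize their actions in \alg; such an order is well defined because two neighbors can never finalize in the same iteration (each selects only when its marginal gain \emph{strictly} dominates that of its not-yet-selected neighbors, per the selection rule in \alg). Writing $\mathcal{A}_{<i}$ for the set of all actions finalized before robot $i$ in this order, and $\mathcal{A}_i\triangleq\mathcal{A}_{<i}\cap\{\singlesol_j\}_{j\in\calN_i}$ for the \emph{neighbor} actions robot $i$ actually observes at selection time, I set $\rho_i\triangleq f(\singlesol_i\,|\,\mathcal{A}_{<i})$ and $\mu_i\triangleq f(\singlesol_i\,|\,\mathcal{A}_i)$, so that $f(\oursol)=\sum_{i\in\calN}\rho_i$ by telescoping and $\mu_i\geq\rho_i\geq0$ by monotonicity and submodularity. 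The selection rule gives the local greedy inequality $\mu_i=\max_{a\in\calV_i}f(a\,|\,\mathcal{A}_i)\geq f(a_i^\opt\,|\,\mathcal{A}_i)$, where $a_i^\opt$ is robot $i$'s optimal action. I also let $D\triangleq\{i\in\calN:\singlesol_i\neq a_i^\opt\}$.

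The crux is a \emph{decentralization lemma}, $\mu_i-\rho_i\leq\ourcurv(\calN_i)$, and this is precisely where \Cref{def:conditioning} (2nd-order submodularity) enters. Writing $\mathcal{B}_i\triangleq\mathcal{A}_{<i}\setminus\mathcal{A}_i=\mathcal{A}_{<i}\cap\{\singlesol_j\}_{j\in\calN_i^c}$ for the already-finalized actions of \emph{non}-neighbors, I would apply \cref{eq:conditioning} with $\calC=\emptyset$, $\calA=\mathcal{B}_i$, $\calB=\mathcal{A}_i$, $s=\singlesol_i$ (these sets are disjoint) to obtain $f(\singlesol_i)-f(\singlesol_i\,|\,\mathcal{B}_i)\geq f(\singlesol_i\,|\,\mathcal{A}_i)-f(\singlesol_i\,|\,\mathcal{A}_i\cup\mathcal{B}_i)=\mu_i-\rho_i$. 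Then, since $\mathcal{B}_i\subseteq\{\singlesol_j\}_{j\in\calN_i^c}$, submodularity gives $f(\singlesol_i\,|\,\{\singlesol_j\}_{j\in\calN_i^c})\leq f(\singlesol_i\,|\,\mathcal{B}_i)$, so the left-hand side is at most $f(\singlesol_i)-f(\singlesol_i\,|\,\{\singlesol_j\}_{j\in\calN_i^c})=\ourcurv(\calN_i)$. Chaining these two inequalities proves the lemma.

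Next I would construct the sandwich on $f(\calA^\opt\cup\oursol)$. For the \textbf{upper} estimate, I expand $f(\calA^\opt\cup\oursol)=f(\oursol)+\sum_{i\in\calN} f(a_i^\opt\,|\,\oursol\cup O_{<i})$, adding optimal actions in the fixed order ($O_{<i}$ collecting those of preceding robots); terms with $i\notin D$ vanish, and for $i\in D$ submodularity (using $\mathcal{A}_i\subseteq\oursol$) followed by the greedy inequality bounds each term by $\mu_i$, giving $f(\calA^\opt\cup\oursol)\leq f(\oursol)+\sum_{i\in D}\mu_i$. For the \textbf{lower} estimate I expand instead from $\calA^\opt$, $f(\calA^\opt\cup\oursol)=f(\calA^\opt)+\sum_{i\in\calN} f(\singlesol_i\,|\,\calA^\opt\cup\mathcal{A}_{<i})$; again only $i\in D$ survive, and here I invoke the curvature bound $f(\singlesol_i\,|\,S)\geq(1-\kappa_f)f(\singlesol_i)$ (immediate from the definition of $\kappa_f$) together with $f(\singlesol_i)\geq\mu_i$, yielding $f(\calA^\opt\cup\oursol)\geq f(\calA^\opt)+(1-\kappa_f)\sum_{i\in D}\mu_i$. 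The essential modeling choice is to retain $\mu_i$ on the lower side rather than collapse it to $\rho_i$.

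Combining the two estimates cancels $(1-\kappa_f)\sum_{i\in D}\mu_i$ and leaves $f(\calA^\opt)\leq f(\oursol)+\kappa_f\sum_{i\in D}\mu_i$. Writing $\mu_i=\rho_i+(\mu_i-\rho_i)$ and using $\sum_{i\in D}\rho_i\leq f(\oursol)$ together with the decentralization lemma then gives $f(\calA^\opt)\leq(1+\kappa_f)f(\oursol)+\kappa_f\sum_{i\in D}\ourcurv(\calN_i)\leq(1+\kappa_f)f(\oursol)+\kappa_f\sum_{i\in\calN}\ourcurv(\calN_i)$, which rearranges to \cref{eq:thm-3}. I expect the main obstacle to be exactly the bookkeeping that makes the coin term appear scaled by $\kappa_f$: collapsing $\mu_i\to\rho_i$ in the lower estimate (the natural first attempt) only delivers the weaker $\tfrac12$-type guarantee $f(\oursol)\geq\tfrac12\big[f(\calA^\opt)-\sum_{i\in\calN}\ourcurv(\calN_i)\big]$, so obtaining the factor $\kappa_f$ on the coin hinges on carrying $\mu_i$ through the curvature step. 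A secondary technical point is rigorously justifying the total finalization order and the identity $\mathcal{A}_i=\mathcal{A}_{<i}\cap\{\singlesol_j\}_{j\in\calN_i}$ for the parallel, possibly directed and disconnected execution of \alg (\Cref{rem:disconnected}).
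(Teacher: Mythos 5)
Your proposal is correct and follows essentially the same route as the paper's proof: both telescope $f(\calA^\opt\cup\oursol)$ in the selection order from the two ends (your ``sandwich'' is the paper's single chain rearranged), both use the greedy inequality with respect to the neighbor-conditioned marginal $\mu_i$ and the curvature lower bound $f(\singlesol_i\,|\,\cdot)\geq(1-\kappa_f)f(\singlesol_i\,|\,\calA_i)$, and your decentralization lemma $\mu_i-\rho_i\leq\ourcurv(\calN_i)$ is exactly the paper's final two steps (2nd-order submodularity with $\calC=\emptyset$ followed by submodularity to pass from $[i-1]\setminus\calN_i$ to $\calN_i^c$). Your restriction to the disagreement set $D$ is a harmless refinement that is relaxed away at the end, and your handling of the simultaneous-selection ordering matches the paper's convention of breaking ties randomly and identifying $\calI_i$ with $\calN_i\cap[i-1]$.
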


\Cref{th:additive} captures the intuition that when the agents coordinate with fewer other agents, then the approximation performance will deteriorate.  This intuition is made rigorous by applying \Cref{prop:coin} to \cref{eq:thm-3} along the spectrum from fully centralized to fully decentralized networks:

\begin{itemize}[leftmargin=*]
\item  If $\calG$ is \textbf{fully centralized} (all agents communicate with all), then the approximation bound in \cref{eq:thm-3} becomes: 
    {\begin{equation}\label{eq:thm-1}
     f(\,\oursol\,)\,\geq \, \frac{1}{1+\kappa_f}\,f(\,\mathcal{A}^\opt\,),
    \end{equation}}\ie~\alg is near-optimal, matching the approximation ratio $1/(1+\kappa_f)$ of the seminal \sg algorithm~\cite{conforti1984submodular}.  The bound $1/(1+\kappa_f)$ is near-optimal since the best possible bound for the optimization problem in \eqref{eq:problem} is $1-\kappa_f/e$~\cite{sviridenko2017optimal}. 

\item  If $\calG$ is \textbf{in between} fully centralized and fully decentralized, then the approximation bound in \cref{eq:thm-3} captures as is the cost of decentralization.  It does so through $\ourcurvnew$, which measures how the agents' actions overlap due to not coordinating with all others.   Specifically, as the network becomes less and less centralized (the agents have less neighbors), then suboptimality bound in \cref{eq:thm-3} deteriorates since, for all $i \in \calN$, $\ourcurv(\calN_i)$ increases when the neighborhood $\calN_i$ becomes smaller (\Cref{prop:coin}).  

\item If $\calG$ is \textbf{fully decentralized} (all agents isolated), then the approximation bound in \cref{eq:thm-3} becomes: 	
\begin{align}\label{eq:thm-4}
        \hspace{-3.8mm}f(\,\oursol\,)&\,\geq \, \frac{1}{1+\kappa_f}\left[f(\,\mathcal{A}^\opt\,) - \kappa_f \sum_{i\in \calN}\ourcurv(\,\emptyset\,)\right]\\
         &\,\in \, \left[1-\kappa_f,\, \frac{1}{1+\kappa_f}\right]f(\,\mathcal{A}^\opt\,).\label{eq:thm-5}
\end{align}

\Cref{eq:thm-4} captures the intuition that when the agents' actions do not overlap, then no communication still leads to near-optimal performance.  For example, per the area coverage \Cref{ex:coin-bound}, when the agents are sufficiently far away such that their field of views cannot overlap upon executing their actions, then $\ourcurv(\,\emptyset\,)=0$ for all $i\in\calN$.  Particularly, then the bound in \cref{eq:thm-4} becomes $1/(1+\kappa_f)$, matching the fully centralized performance.

\quad In the worst case, the bound in \cref{eq:thm-3} takes the value $1-\kappa_f$, and becomes zero when the actions of all agents fully overlap with each other ($\kappa_f=1$).  This is inevitable since all agents ignore all others and thus cannot coordinate actions to reduce the overlap. 

\end{itemize}

The tightness of the bound will be analyzed in future work.  

\begin{corollary}[A Priori Bound with Approximate Greedy Selection]\label{cor:inaccuracy}
    If \Cref{alg:dec_sub_max}'s line 3 can only perform \emph{approximate} greedy selection such that $f(a_i)\geq \eta\, f(a_i^\opt\,|\,\calA_i), 0<\eta\leq 1$, then \alg guarantees:
    \begin{align}\label{eq:cor-1}
        f(\,\oursol\,)\,\geq \, &\frac{\eta}{1+\eta\kappa_f}\Bigg[f(\,\mathcal{A}^\opt\,) \nonumber\\
        &- \left(\frac{1}{\eta}-1+\kappa_f\right)\sum_{i\in \calN}\ourcurv(\,\calN_i\,)\Bigg].
    \end{align}Also, it holds that:
    \begin{align}\label{eq:cor-2}
        \frac{f(\,\oursol\,)}{f(\,\calA^\opt\,)} \geq
        \begin{cases} 
        \frac{\eta}{1+\eta\kappa_f}, & \calG \text{ is {fully centralized}}, \\
        \eta(1-\kappa_f), & \calG \text{ is {not fully centralized}}.
        \end{cases} 
    \end{align}
\end{corollary}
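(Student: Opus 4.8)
\emph{Approach.} Since \Cref{cor:inaccuracy} collapses to \Cref{th:additive} when $\eta=1$ (indeed $\tfrac{\eta}{1+\eta\kappa_f}\to\tfrac{1}{1+\kappa_f}$ and $\tfrac1\eta-1+\kappa_f\to\kappa_f$), the plan is to re-run the proof of \Cref{th:additive} while carrying an extra factor $1/\eta$. The only place that argument invokes the exact greedy optimality of line~3 of \Cref{alg:dec_sub_max} is the inequality $f(a_i^\opt\mid\calA_i)\le g_i$, where $g_i$ is the marginal gain of the action robot $i$ selects and $\calA_i$ is the context of neighbor actions it conditions on. Under approximate selection, $g_i\ge\eta\max_{a\in\calV_i}f(a\mid\calA_i)\ge\eta\,f(a_i^\opt\mid\calA_i)$, so this single step weakens to $f(a_i^\opt\mid\calA_i)\le g_i/\eta$. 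Propagating this through the telescoping bound $f(\calA^\opt)\le f(\calA^\opt\cup\oursol)$ and the submodularity step that passes from the full conditioning set to the neighbor context $\calA_i$, one reaches, before invoking curvature, an estimate of the form $f(\calA^\opt)\le(1+\tfrac1\eta)\,f(\oursol)+\tfrac1\eta\sum_{i\in\calN}\ourcurv(\calN_i)$, where the centralization penalty arises exactly as in \Cref{th:additive}: the $\ourcurv(\calN_i)$ terms measure the overlap robot $i$ ignores by not conditioning on its non-neighbors (bounded via \Cref{def:conditioning}).

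\emph{Introducing curvature.} Next I would re-apply the curvature sharpening of \cite{conforti1984submodular} at precisely the point where it is used in the proof of \Cref{th:additive}. I expect it to lower both coefficients by the same curvature credit $1-\kappa_f$, turning the estimate above into $f(\calA^\opt)\le\tfrac{1+\eta\kappa_f}{\eta}\,f(\oursol)+\big(\tfrac1\eta-1+\kappa_f\big)\sum_{i\in\calN}\ourcurv(\calN_i)$; rearranging yields \eqref{eq:cor-1}. The consistency check at $\eta=1$, which reproduces \eqref{eq:thm-3} exactly, makes the coefficient $\tfrac1\eta-1+\kappa_f$ (rather than a naive $\kappa_f/\eta$) the expected outcome.

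\emph{Closed-form ratios.} For \eqref{eq:cor-2} I would specialize \eqref{eq:cor-1} using \Cref{prop:coin}. If $\calG$ is fully centralized, then $\calN_i=\calN\setminus\{i\}$ and \Cref{prop:coin} gives $\ourcurv(\calN_i)=0$, so \eqref{eq:cor-1} immediately yields $f(\oursol)\ge\tfrac{\eta}{1+\eta\kappa_f}f(\calA^\opt)$. Otherwise I bound the penalty by combining two facts: $\ourcurv(\calN_i)\le\ourcurv(\emptyset)=\kappa_f f(a_i)$ from the monotonicity in \Cref{prop:coin}, and the curvature telescoping identity $f(\oursol)=\sum_i f(a_i\mid a_1,\dots,a_{i-1})\ge(1-\kappa_f)\sum_i f(a_i)$, which together give $\sum_{i\in\calN}\ourcurv(\calN_i)\le\tfrac{\kappa_f}{1-\kappa_f}f(\oursol)$. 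Substituting this upper bound into the (negative) penalty term of \eqref{eq:cor-1} produces an implicit inequality in $f(\oursol)$; after clearing the $1-\kappa_f$ denominator and the factor $\eta$, the coefficient multiplying $f(\oursol)$, namely $(1+\eta\kappa_f)+(1-\eta+\eta\kappa_f)\tfrac{\kappa_f}{1-\kappa_f}$, collapses to $\tfrac{1}{1-\kappa_f}$, leaving $f(\oursol)\ge\eta(1-\kappa_f)f(\calA^\opt)$.

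\emph{Main obstacle.} The delicate point is the bookkeeping in the first two paragraphs: confirming that $1/\eta$ enters at exactly one inequality and that the curvature step of \cite{conforti1984submodular} composes with it to \emph{discount} the penalty coefficient by $1-\kappa_f$ (giving $\tfrac1\eta-1+\kappa_f$) rather than \emph{scale} it by $1/\eta$. Because the factor $1/\eta$ does not distribute uniformly across the contribution to $f(\oursol)$ and the contribution to $\ourcurv(\calN_i)$, I would re-derive that split carefully rather than assume it; the algebraic collapse to $\tfrac{1}{1-\kappa_f}$ in the decentralized case is a reassuring end-to-end check that the constants are correct.
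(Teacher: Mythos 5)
Your proposal is correct and follows exactly the route the paper intends (the paper states that \Cref{cor:inaccuracy} reduces to \Cref{th:additive} at $\eta=1$ and leaves the proof as the evident modification of Appendix~II): the factor $1/\eta$ enters only at the single greedy-selection inequality \cref{aux1:3}, turning the coefficient $\kappa_f$ into $1/\eta-1+\kappa_f$ and, after the telescoping step \cref{aux1:4}, the leading coefficient into $(1+\eta\kappa_f)/\eta$, while the $\ourcurv(\calN_i)$ terms arise unchanged from \cref{aux1:5,aux1:6}. Your algebra for \cref{eq:cor-2}, including the collapse of $(1+\eta\kappa_f)+(1-\eta+\eta\kappa_f)\frac{\kappa_f}{1-\kappa_f}$ to $\frac{1}{1-\kappa_f}$ via $\sum_{i\in\calN}\ourcurv(\calN_i)\leq\frac{\kappa_f}{1-\kappa_f}f(\oursol)$, checks out.
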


\alg still has near-optimal performance guarantees even with approximate greedy selection~\cite{singh2009efficient,corah2019distributed}. When $\eta=1$, \Cref{cor:inaccuracy} reduces to \Cref{th:additive}.

\subsection{A Posteriori Suboptimality Bound of \hspace{-.5mm}\scenario{\textit{RAG}}}\label{subsec:posteriori}

We present the a posteriori approximation bound of \alg (\Cref{th:posterior}).  
We recall the notation:
\begin{itemize}[leftmargin=*]
    \item $\calI_i$ is robot $i$'s neighbors that select actions prior to $i$ during the execution of \alg.
\end{itemize} 

\begin{theorem}[A Posteriori Suboptimality Bound]\label{th:posterior}
Given the actions $\{a_i^\alg\}_{i\myin\calN}$ selected by the agents, \alg guarantees:\footnote{\Cref{th:posterior} holds true for $f$ non-decreasing and submodular and not necessarily 2nd-order submodular.}
    \begin{equation}\label{eq:posterior}
        \hspace{3mm}f(\,\oursol\,)\, \geq\, f(\,\mathcal{A}^\opt\,)\, -\, \kappa_f\sum_{i\in \calN}f(\,a_i^\alg\,|\,\calA_{\calI_i}^\alg\,).
    \end{equation} 
\end{theorem}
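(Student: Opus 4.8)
The plan is to bound the optimality gap $f(\mathcal{A}^\opt)-f(\oursol)$ directly through the exchange identity
\begin{equation*}
f(\mathcal{A}^\opt)-f(\oursol) \;=\; f(\mathcal{A}^\opt\mid\oursol)\;-\;f(\oursol\mid\mathcal{A}^\opt),
\end{equation*}
which holds for every pair of sets because both sides equal $f(\mathcal{A}^\opt\cup\oursol)-f(\oursol)-\big[f(\mathcal{A}^\opt\cup\oursol)-f(\mathcal{A}^\opt)\big]$. I would then upper bound the first term and lower bound the second so that their difference collapses to $\kappa_f\sum_{i\in\calN}f(a_i^\alg\mid\calA_{\calI_i}^\alg)$. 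Throughout, I relabel the robots $1,\dots,|\calN|$ by a total order consistent with \alg's selection order (a linear extension of the ``selected-before'' partial order, which exists even under parallel selection). The one property of \alg I will use is that when robot $i$ commits $a_i^\alg$ on line 3 of \Cref{alg:dec_sub_max} it does so greedily against exactly the finalized neighbor actions $\calA_{\calI_i}^\alg$; hence, since $a_i^\opt\in\calV_i$ is feasible for robot $i$, we have $f(a_i^\alg\mid\calA_{\calI_i}^\alg)\geq f(a_i^\opt\mid\calA_{\calI_i}^\alg)$.

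For the first term, I would telescope over the optimal actions in any order and use submodularity together with $\calA_{\calI_i}^\alg\subseteq\oursol$ to get, term by term, $f(a_i^\opt\mid \oursol\cup\{a_1^\opt,\dots,a_{i-1}^\opt\})\leq f(a_i^\opt\mid\calA_{\calI_i}^\alg)$, followed by the greedy inequality to replace $a_i^\opt$ by $a_i^\alg$. Restricting to the index set $\calD\triangleq\{i\in\calN : a_i^\alg\neq a_i^\opt\}$ (the remaining terms vanish, since then $a_i^\opt\in\oursol$) yields $f(\mathcal{A}^\opt\mid\oursol)\leq \sum_{i\in\calD} f(a_i^\alg\mid\calA_{\calI_i}^\alg)$. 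For the second term, I would write $f(\oursol\mid\mathcal{A}^\opt)=f(\{a_i^\alg\}_{i\in\calD}\mid\mathcal{A}^\opt)$ (the actions with $a_i^\alg=a_i^\opt$ are already present in $\mathcal{A}^\opt$), telescope over $\calD$, and apply the curvature consequence $f(a\mid\calS)\geq(1-\kappa_f)f(a)$ for $a\notin\calS$, followed by submodularity $f(a_i^\alg)\geq f(a_i^\alg\mid\calA_{\calI_i}^\alg)$, obtaining $f(\oursol\mid\mathcal{A}^\opt)\geq (1-\kappa_f)\sum_{i\in\calD} f(a_i^\alg\mid\calA_{\calI_i}^\alg)$. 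Substituting both into the identity gives $f(\mathcal{A}^\opt)-f(\oursol)\leq \kappa_f\sum_{i\in\calD}f(a_i^\alg\mid\calA_{\calI_i}^\alg)\leq \kappa_f\sum_{i\in\calN}f(a_i^\alg\mid\calA_{\calI_i}^\alg)$ by non-negativity of the marginals, which is exactly \eqref{eq:posterior}. Note that only monotonicity, submodularity, and curvature are used, matching the footnote that 2nd-order submodularity is unnecessary.

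The delicate point I expect to be the main obstacle is the bookkeeping that makes the upper and lower bounds share the same index set $\calD$. The curvature lower bound on $f(\oursol\mid\mathcal{A}^\opt)$ is only valid when each telescoped greedy element lies strictly outside its conditioning set, which forces me to discard the robots with $a_i^\alg=a_i^\opt$; fortunately these same robots contribute zero to the upper bound, so both estimates are over $\calD$ and the $(1-\kappa_f)$ factor cancels the full marginal to leave precisely the $\kappa_f$-weighted sum. As consistency checks I would verify the two extreme regimes: $\kappa_f=0$ forces the bound to $f(\oursol)\geq f(\mathcal{A}^\opt)$ (exact, as $f$ is then modular), while a fully centralized network with $\kappa_f=1$ makes $\calI_i$ all predecessors, so $\sum_i f(a_i^\alg\mid\calA_{\calI_i}^\alg)=f(\oursol)$ telescopes and \eqref{eq:posterior} recovers the classical $1/2$ guarantee of Sequential Greedy.
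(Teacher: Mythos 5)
Your proof is correct and takes essentially the same route as the paper's: \Cref{th:posterior} is obtained in Appendix~II as the intermediate inequality \eqref{aux1:3} (equivalently \eqref{aux1:4}), whose derivation consists of exactly your two estimates ---telescoping $\calA^\opt$ over $\oursol$ and applying submodularity plus the greedy rule (your upper bound on $f(\calA^\opt\,|\,\oursol)$), and telescoping $\oursol$ over $\calA^\opt$ and applying the curvature inequality followed by submodularity (your lower bound on $f(\oursol\,|\,\calA^\opt)$)--- merely repackaged through the exchange identity instead of substituted sequentially. Your restriction to the index set $\calD$ of agents with $a_i^\alg\neq a_i^\opt$ is in fact slightly more careful bookkeeping than the paper's chain, which applies the curvature ratio uniformly over all $i$, including the degenerate indices where the numerator marginal vanishes.
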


\Cref{th:posterior} captures the suboptimality cost due to decentralization, similarly to \Cref{th:additive}.  In contrast to \Cref{th:additive}, \Cref{th:posterior} captures the decentralization cost as a function of the action overlap between agent $i$ and its neighbors, instead of its non-neighbors.  As such, \Cref{th:posterior} captures the intuition that the larger agent $i$'s neighborhood is, the better the suboptimality guarantee can be since then agent $i$ would have the chance to coordinate actions with more agents. 
 
\begin{proposition}[\mbox{Approximate Submodularity of A Posteriori} Bound]\label{th:posterior-submodular}
The right-hand side of \cref{eq:posterior} is non-decreasing and approximate submodular as a function of $\{\calI_i\}_{i\myin \calN}$.
\end{proposition}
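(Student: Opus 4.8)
The plan is to exploit the additive, coordinate-separable structure of the right-hand side of \cref{eq:posterior} and reduce the claim to a per-agent statement about a single marginal-gain functional. Writing
\[
G(\{\calI_i\}_{i\myin\calN}) \triangleq f(\calA^\opt) - \kappa_f \sum_{i\myin\calN} f(a_i^\alg \mid \calA_{\calI_i}^\alg),
\]
I would first observe that $f(\calA^\opt)$ is a constant and $\kappa_f \ge 0$, so monotonicity and (approximate) submodularity of $G$ are inherited from the corresponding properties of each summand $h_i(\calI_i) \triangleq -f(a_i^\alg \mid \calA_{\calI_i}^\alg) = f(\calA_{\calI_i}^\alg) - f(\calA_{\calI_i}^\alg \cup \{a_i^\alg\})$, viewed as a set function of the candidate neighbors that may precede $i$, where neighbor $j$ is identified with its selected action $a_j^\alg$. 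Since distinct summands depend on disjoint coordinate blocks, I would then use the fact that a non-negative, separable sum of monotone (approximately) submodular functions over disjoint ground sets is itself monotone and (approximately) submodular on the product lattice: cross-block second differences vanish identically, and within-block differences reduce to those of a single $h_i$.

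Next I would establish monotonicity of $h_i$. Enlarging $\calI_i$ only enlarges the conditioning set $\calA_{\calI_i}^\alg$, so by the diminishing-returns property of \Cref{def:submodular} the marginal $f(a_i^\alg \mid \calA_{\calI_i}^\alg)$ can only decrease; hence $h_i$ is non-decreasing. Concretely, for a base neighbor set with action set $S$ and an added neighbor carrying action $a_e^\alg$, the discrete derivative is
\[
h_i(\calS \cup \{e\}) - h_i(\calS) = f(a_e^\alg \mid S) - f(a_e^\alg \mid S \cup \{a_i^\alg\}) \;\ge\; 0,
\]
the inequality again being submodularity of $f$. This computation does double duty: it gives monotonicity and furnishes the exact expression whose monotonicity-in-the-base must be controlled to get submodularity.

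For the submodularity step I would compare this derivative at two nested bases $\calS \subseteq \calT$, with $S = \calA_\calS^\alg$ and $\calA_\calT^\alg = S \cup R$. The required inequality
\[
f(a_e^\alg \mid S) - f(a_e^\alg \mid S \cup \{a_i^\alg\}) \;\ge\; f(a_e^\alg \mid S \cup R) - f(a_e^\alg \mid S \cup R \cup \{a_i^\alg\})
\]
is precisely the 2nd-order submodularity inequality of \Cref{def:conditioning} with $\calC = S$, $\calA = \{a_i^\alg\}$, $\calB = R$, and $s = a_e^\alg$; the required disjointness of $\calA$, $\calB$, $\calC$ holds because actions of distinct agents are distinct elements of $\calV_\calN$. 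Thus, under 2nd-order submodularity each $h_i$, and hence $G$, is exactly submodular. Because \Cref{th:posterior} and this proposition are stated for merely submodular $f$ (per its footnote), I would then drop the 2nd-order assumption and quantify the residual violation of the displayed inequality, obtaining the \emph{approximate}-submodularity guarantee with a degradation factor controlled by how far $f$ departs from 2nd-order submodularity (equivalently, by $f$'s submodularity ratio), consistent with the approximate-submodular framework of Appendix~I.

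The main obstacle I anticipate is exactly this last step: for general submodular (non-2nd-order) $f$, the second difference above may be negative, so exact submodularity fails and I must pin down the precise notion of approximate submodularity being certified and prove a clean multiplicative (or additive) bound on the violation. The separable-sum reduction and the monotonicity argument are routine; the care lies in making the approximate-submodularity constant explicit and verifying it is the same one used elsewhere in the paper.
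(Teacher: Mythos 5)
There is a genuine gap, and it lies in what you hold fixed as $\calI_i$ varies. You treat $a_i^\alg$ as a constant and study $\calI_i \mapsto -f(a_i^\alg\,|\,\calA_{\calI_i}^\alg)$; under that reading your monotonicity computation and your appeal to \Cref{def:conditioning} are both correct, but they deliver \emph{exact} submodularity under the section's standing assumption that $f$ is 2nd-order submodular, which is not what the proposition asserts. The paper's proof instead studies $\delta_i(\calI_i)\triangleq f(a_i^{(\calI_i)}\,|\,\calA_{\calI_i})$, where $a_i^{(\calI_i)}$ is the action agent $i$ would select \emph{greedily given} $\calA_{\calI_i}$ --- so the selected action changes as the neighborhood changes. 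This is the semantically correct object (the proposition is meant to inform neighborhood design, and enlarging $\calI_i$ alters what agent $i$ picks), and it is precisely this re-selection that breaks exact supermodularity of $\delta_i$ and forces the ``approximate'' qualifier. The paper handles it by introducing the auxiliary function $\delta_i'(\calI_i)\triangleq f(a\,|\,\calA_{\calI_i})$ for a \emph{fixed} $a$ (essentially your $h_i$ up to sign), proving $\delta_i'$ exactly supermodular via 2nd-order submodularity, and then sandwiching $\delta_i'(\calS)\leq\delta_i(\calS)\leq\delta_i'(\calS)+\epsilon$ with $\epsilon=\min_{a_2\in\calV_i}\max_{a_1\in\calV_i}\left[f(a_1)-(1-\kappa_f)f(a_2)\right]$, using greediness for the lower bound and submodularity plus the curvature $\kappa_f$ for the upper bound.

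Consequently your closing plan --- to obtain the approximation constant by quantifying how far $f$ departs from 2nd-order submodularity --- attacks the wrong source of error: the paper keeps 2nd-order submodularity throughout this proof (only \Cref{th:posterior} itself, not this proposition, dispenses with it), and its approximation constant is driven by the gap between the greedy re-selected action and a fixed reference action, measured through $\kappa_f$, not by a 2nd-order submodularity ratio. To repair your argument, keep your $h_i$ as the exactly supermodular surrogate, add the comparison between $h_i$ and the true summand in which agent $i$'s action is re-optimized for each $\calI_i$, and bound that comparison using the greedy selection rule on one side and curvature on the other. Your separable-sum reduction across agents and your monotonicity computation can then be retained essentially as written.
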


The proposition implies that although the approximation performance will improve if the agents have more neighbors, the gained improvement diminishes.  

\begin{remark}[{Trade-Off between Decision Speed and Optimality}]\label{rem:larger-not-better} 
{For larger neighborhoods, the marginal increase in the approximation guarantee is negatively outweighed by a greater increase in decision time.
The reason is that the gain in the approximation guarantee diminishes as the neighborhoods become larger (\Cref{th:posterior-submodular}) while the decision time may increase linearly for \alg and superlinearly for the state of the art, as we present in \Cref{sec:resource}.}
\end{remark}

{The appropriate size of neighborhoods that balances the trade-off for the task at hand can be found via experimentation in high-fidelity simulations, as we illustrate in the experiments.} 

\subsection{Comparison to the State of the Art}\label{subsec:guarantees_comparison}

\newcommand{\figColWidth}{5.2cm}
\newcommand{\txtColWidth}{0.8cm}

\begin{figure}[t!]
    \captionsetup{font=footnotesize}
        \Large
	\centering
    \resizebox{\columnwidth}{!}{
    \begin{minipage}{\textwidth}
        \renewcommand{\arraystretch}{1.4}
        \begin{tabular}{p{\txtColWidth}|p{\figColWidth}|p{\figColWidth}|p{\figColWidth}}%
              %%%%%%%%%%%%%%%%%%%%%%%%%%%%%%%%%%%%%%%%%%%%%%%%%%%%%%%%%%%%%%%%%%%%%%%%%%%%%%%%%%%%%%%%%%%%%%%%%%%%%%%%%
        % \hspace{-2mm}
        \cline{2-4}
        &
        \begin{minipage}{\figColWidth}%
            % \vspace{5mm}
            \centering%
            \includegraphics[width=\columnwidth]{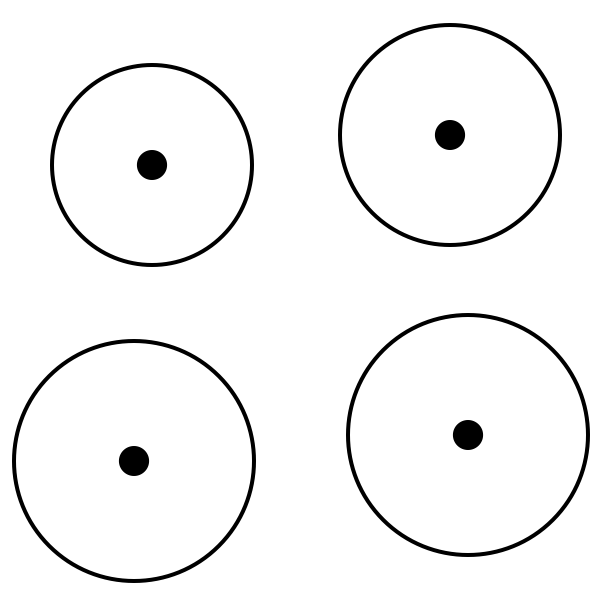} \\%width=0.64\columnwidth]
              % \vspace{-2mm}
            % \vspace{1mm}
            \caption*{\large\textbf{(a) fully decentralized}
            }
            \vspace{1mm}
        \end{minipage}
        &
        \begin{minipage}{\figColWidth}%0.6
              % \vspace{6.3mm}
              \centering%
              \includegraphics[width=\columnwidth]{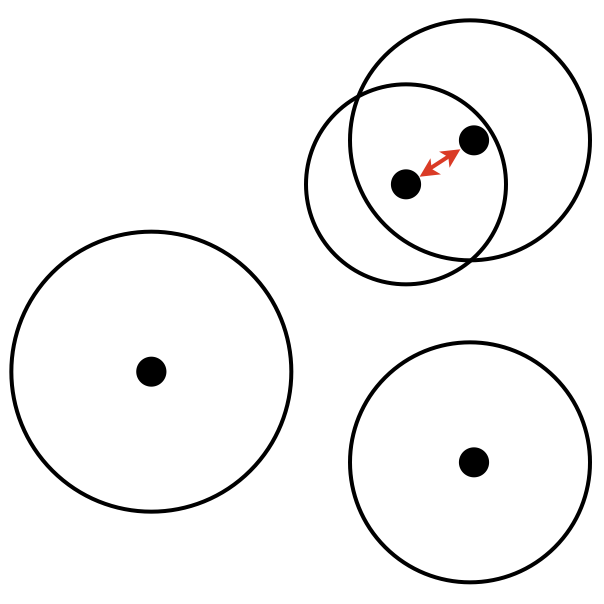} \\ 
              % \vspace{-2mm}
              % \vspace{1mm}
              \caption*{\large\textbf{(b) partially decentralized}
              }
              \vspace{1mm}
        \end{minipage}
        &
        \begin{minipage}{\figColWidth}%0.6
              % \vspace{6.3mm}
              \centering%
              \includegraphics[width=\columnwidth]{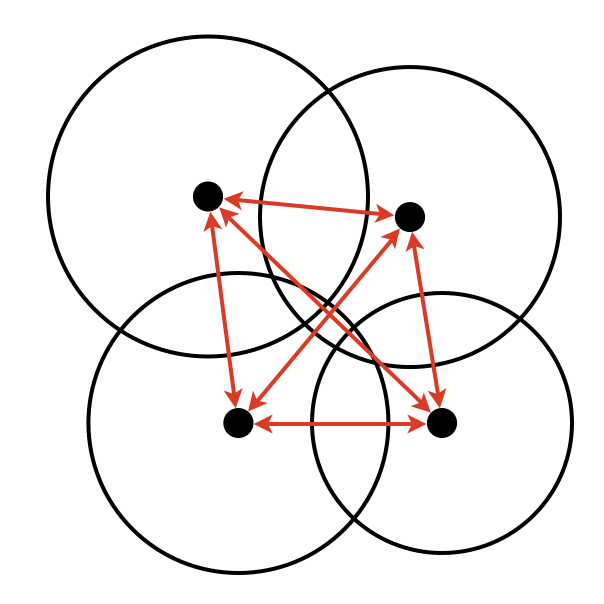} \\ 
              % \vspace{-2mm}
              % \vspace{1mm}
              \caption*{\large\textbf{(c) fully centralized}
              }
              \vspace{1mm}
        \end{minipage} \\
        \hline
        \!~\cite{gharesifard2017distributed} & \centering $1/4$ &  \centering $[1/4, 1/2]$ & \hspace{2.1cm} $\color{blue}1/2$ \\%[0.5mm] 
        \hline 
        \!~\cite{grimsman2019impact} & \centering $1/4$ &  \centering $[1/4, 1/3]$ & \hspace{2.1cm} $\color{blue}1/2$ \\%[0.5mm] 
        \hline
        \alg & \centering $\color{blue}1$ & \centering $\color{blue}1/(1+\kappa_f)\geq 1/2$ & \hspace{7mm} $\color{blue}1/(1+\kappa_f)\geq 1/2$ \\%[0.5mm]   
        \hline
        \end{tabular}
        \renewcommand{\arraystretch}{1}
    \end{minipage} 
    }
	\caption{\textbf{Comparison of Suboptimality Guarantees of~\cite{gharesifard2017distributed,grimsman2019impact} and \alg: Example of Area Coverage.} We make the comparison over an area coverage task with multiple drones, as in Fig.~\ref{fig:image_covering}(a), where the drones need to each select a short trajectory to maximize the total covered area at the next time step.  The stars are the drones' positions, the circles are their field-of-views (FOVs), and the red lines denote undirected communication links. The best performances for each metric are colored in \blue{blue}. (a) The drones are fully decentralized (no communication).  The bounds by~\cite{gharesifard2017distributed,grimsman2019impact} are both $1/4$.  For $|\calN|$ drones, instead of $4$, the bounds would be $O(1/|\calN|)$. Instead, \alg guarantees $1/(1+\kappa_f)=1$ since the drones are far enough for their FOV to overlap, $\ourcurv=0, \forall i$, and $\kappa_f=0$. % and thus \alg can guarantee $1/2$. 
 (b) The drones are partially decentralized such that only the physically close drones communicate. For this communication graph, that we assume to be the same as the information-action access graph, $\omega=2$, $\chi=2$, and $\alpha^\star=3$. Therefore, \cite{gharesifard2017distributed} gives $[1/4, 1/2]$ and \cite{grimsman2019impact} gives $[1/4, 1/3]$.  \alg still gives $1/(1+\kappa_f)\geq 1/2$ since non-neighbors are far away. 
 (c) In the fully centralized case,  
 \cite{gharesifard2017distributed,grimsman2019impact} give $1/2$ and \alg gives $1/(1+\kappa_f)$. 
 In all, \alg may provide tighter bounds than \cite{gharesifard2017distributed,grimsman2019impact} since it considers the actual function $f$ using $\ourcurvnew$ and $\kappa_f$, whereas \cite{gharesifard2017distributed,grimsman2019impact} are agnostic to $f$. 
	}\label{fig:RAG-vs-DSM}
\end{figure}

We summarize the approximation guarantees of the state of the art and \alg in \Cref{tab:comparison}.  
We observe the trade-off of decision time and optimality: the algorithms with the best suboptimality guarantees ---first six rows of the table, achieving the near-optimal $1/2$ or $1-1/e$~\cite{sviridenko2017optimal}--- can exhibit also the worst decision times, one to two orders {of the network size} higher than that of \alg.  
Among the remaining algorithms ---three last rows of the table--- \alg is the only algorithm that provides task-aware ($f$-based) performance guarantees and that quantifies the suboptimality guarantee as a function of each agent's local communication network.  
Instead, the guarantees of~\cite{gharesifard2017distributed,grimsman2019impact} are task-agnostic and can scale inversely proportional to the number of agents even when \alg can still guarantee the near-optimal $1/2$ (Fig.~\ref{fig:RAG-vs-DSM}). 

To discuss the suboptimality guarantees of~\cite{gharesifard2017distributed,grimsman2019impact} in more detail, we present their decision-making rule. Specifically,~\cite{gharesifard2017distributed,grimsman2019impact} use the following distributed submodular maximization (DSM) rule, introduced in~\cite{gharesifard2017distributed}:  
\begin{equation}\label{eq:dsm}
        a_{i}^\dsm \,\in\, \max_{a\myin\calV_i}\;\; f(\,a\;|\;\{a_{j}^\dsm\}_{j\myin\calN_i^{\text{in}}}\,),
\end{equation}
where $\calN_i^{\text{in}}\subseteq [i-1]$. \Cref{eq:dsm} generalizes \sg's rule in \cref{eq:sga} to the setting where agent $i$ has access only to the actions selected by the agents in $\calN_i^{\text{in}}\subseteq [i-1]$, instead of all agents that have selected an action before agent $i$.  
The information-access structure prescribed by the rule in~\cref{eq:dsm} can be represented as a directed acyclic graph $\calG_{\text{info}}$, where agent $i$'s neighbors in $\calG_{\text{info}}$ are the set $\calN_i^{\text{in}}$ of agents.\footnote{The graph $\calG_{\text{info}}$ is in general different from the communication graph $\calG$.  When an agent $i$ and an agent $j$ do not communicate (they are not neighbors in the communication graph $\calG$) but agent $j\in \calN_i^{\text{in}}$, then agent $j$'s action needs to be relayed to agent $i$ via other agents in $\calG$ that form a connected communication path in $\calG$ between agent $i$ and agent $j$.}  
Due to the limited information access,  the suboptimality guarantees of~\cite{gharesifard2017distributed,grimsman2019impact} take the form presented in \Cref{tab:comparison}, where $\omega(\calG_{\text{info}})$ is the clique number of $\calG_{\text{info}}$, $\chi(\calG_{\text{info}})$ is the chromatic number, and $\alpha^\star(\calG_{\text{info}})$ is the fractional independence number~\cite{godsil2001algebraic}. %Performance guarantees

\section{Decision Time Analysis}\label{sec:resource}

We bound the time it takes for \alg to terminate. \alg's decision time scales linearly with the size of the network, up to two orders {of the network size} faster than the state of the art.   
We summarize the decision time of the state of the art and of \alg in \Cref{tab:comparison},
where we use the notation:
\begin{itemize}[leftmargin=3.5mm]
    \item $\tau_f$ is the time required for one evaluation of $f$;
    \item $\tau_c$ is the time for transmitting an action through a communication channel $(i\rightarrow j)\in\calE$;
    \item $\tau_\#$ is the time for transmitting a real number through a communication channel $(i\rightarrow j)\in\calE$; evidently, $\tau_\#\ll\tau_f$ and $\tau_\#\ll\tau_c$.
    \item $\scenario{diam}(\calG)$ is the diameter of a graph $\calG$, \ie the longest shortest path among any pair of nodes in $\calG$~{\cite{mesbahiBook}}.
\end{itemize}

We base our analysis on the observation that the decision time of any distributed  algorithm depends on the algorithm's: 
\begin{itemize}[leftmargin=3.5mm]
    \item \textit{computational complexity}, namely, the number of function evaluations required till termination (ignoring addition and multiplications as negligible in comparison); and
    \item \textit{communication complexity}, namely, the number of communication rounds needed till termination, accounting for the length of the communication messages per each round.
\end{itemize} 

\subsection{Decision Time of \hspace{-.5mm}\scenario{\textit{RAG}}}

We first analyze the computational and communication complexities of \alg and then present its decision time.  

\begin{proposition}[Computational Complexity]\label{prop:computation}
\alg requires each agent $i$ to perform at most $|\mathcal{V}_i||\mathcal{N}_i|$ function evaluations.
\end{proposition}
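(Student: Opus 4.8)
The plan is to charge every evaluation of $f$ performed by a fixed agent $i$ to the greedy maximization in line~3 of \Cref{alg:dec_sub_max}, and then to bound how many times that step can produce a genuinely new result. First I would note that $f$ is touched only in line~3, where agent $i$ forms $\arg\max_{a\in\mathcal{V}_i}[f(\solutionselected_i\cup\{a\})-f(\solutionselected_i)]$, and in line~4, where $g_i$ is set to the associated maximal marginal gain. Because that maximal gain is already obtained while scanning the candidates in line~3, line~4 incurs no further evaluations. A single execution of line~3 therefore costs at most $|\mathcal{V}_i|$ evaluations: one value $f(\solutionselected_i\cup\{a\})$ per candidate $a\in\mathcal{V}_i$, with the base term $f(\solutionselected_i)$ computed once and reused across all candidates.

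Second, I would argue that the expensive step need only be recomputed when $\solutionselected_i$ changes. The output of line~3 depends on the current conditioning set $\solutionselected_i$ alone, so between two passes of the while loop in which $\solutionselected_i$ is unchanged, the pair $(a_i,g_i)$ is identical and can be reused rather than recomputed. Consequently the total evaluation cost is $|\mathcal{V}_i|$ times the number of \emph{distinct} values that $\solutionselected_i$ assumes during the run. By lines~10--13, $\solutionselected_i$ is modified only by appending the finalized actions $\singlesol_j$ of newly committed neighbors $j\in\calS^{\scenario{new}}_i\subseteq\calN_i\setminus\agentselected_i$; hence $\solutionselected_i$ is monotonically increasing and is at all times a subset of the neighbor-action set $\{\singlesol_j\}_{j\in\calN_i}$.

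Third, I would close the count by a chain argument. Since the values of $\solutionselected_i$ form a strictly increasing sequence of subsets of $\{\singlesol_j\}_{j\in\calN_i}$, and this ground set has cardinality at most $|\calN_i|$, the number of conditioning sets on which line~3 must be freshly invoked is at most $|\calN_i|$, one for each neighbor whose commitment enlarges $\solutionselected_i$. Multiplying the per-invocation cost $|\mathcal{V}_i|$ by this count yields the claimed bound $|\mathcal{V}_i|\,|\calN_i|$. I would treat the degenerate case $\calN_i=\emptyset$ separately using \Cref{rem:disconnected}: then agent $i$ faces no competitors, selects in the first pass, and performs a single round of at most $|\mathcal{V}_i|$ evaluations.

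The step I expect to be the main obstacle is translating ``number of while-loop iterations'' into ``number of function evaluations.'' The loop may execute extra passes in which $\calS^{\scenario{new}}_i=\emptyset$ --- agent $i$ neither commits nor receives a new neighbor action, because the better-scoring neighbor is itself still blocked elsewhere in the network --- and I must verify that each such pass reuses the cached $(a_i,g_i)$ and hence adds no evaluation. Once this is established, the evaluation count is driven solely by the monotone growth of $\solutionselected_i$; the remaining care is purely combinatorial bookkeeping, namely checking that the at most $|\calN_i|$ neighbor commitments, together with the single initial scan at $\solutionselected_i=\emptyset$, are charged consistently so as to arrive at the stated $|\mathcal{V}_i|\,|\calN_i|$ figure.
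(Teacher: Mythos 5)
Your proposal follows the same basic counting as the paper's proof---at most $|\calV_i|$ evaluations per greedy scan in line~3 of \Cref{alg:dec_sub_max}, multiplied by a number of scans controlled by the growth of $\calA_i$ within $\calN_i$---but you are more careful on exactly the point you flag as the main obstacle. The paper's proof asserts that $|\calA_i|$ increases by at least one with each while-loop iteration, which is not literally true: as the paper itself notes in \Cref{subsec:algorithm}, $\calS_i^{\scenario{new}}$ may be empty, so an iteration can leave $\calA_i$ unchanged. Without your caching observation (the output of line~3 depends only on $\calA_i$, so a stalled iteration can reuse the previous pair $(a_i,g_i)$ at zero cost), a naive count would charge $|\calV_i|$ evaluations per global round, i.e.\ up to $|\calV_i|\,(|\calN|-1)$ rather than $|\calV_i|\,|\calN_i|$. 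So your argument supplies a step that the paper's proof needs but omits.

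One loose end is shared by both arguments. The chain of distinct conditioning sets starts at $\calA_i=\emptyset$ and can be enlarged up to $|\calN_i|$ times, so line~3 is freshly invoked up to $|\calN_i|+1$ times, giving $|\calV_i|\,(|\calN_i|+1)$ rather than $|\calV_i|\,|\calN_i|$. Your own handling of the degenerate case $\calN_i=\emptyset$ makes this visible: there the claimed bound is $0$, yet the agent still performs one scan of $|\calV_i|$ evaluations. Your closing paragraph charges one fresh scan per neighbor commitment and leaves the initial scan at $\emptyset$ unaccounted for; it cannot be absorbed, so the constant in \Cref{prop:computation} should strictly be $|\calV_i|\,(|\calN_i|+1)$. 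This is immaterial for the linear-scaling conclusion drawn from the proposition, but it is worth stating the bound with the extra term rather than promising a bookkeeping reconciliation that does not exist.
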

\begin{proof} For each agent $i$, $|\calA_i|$ increases by at least one with each ``while loop'' iteration of \alg. 
 At each such iteration,  agent $i$ needs to perform $|\calV_i|$ function evaluations to evaluate its marginal gain of all $v\in\mathcal{V}_i$ (lines 3--4). Since $|\calA_i|\,\leq|\calN_i|$, agent $i$ will perform at most $|\calV_i||\mathcal{N}_i|$ function evaluations.
\end{proof}

\begin{proposition}[Communication Complexity]\label{prop:communication}
\alg requires at most $|\calN|-1$ communication rounds where a real number is transmitted, and at most $|\calN|-1$ communication rounds where an action is transmitted. 
\end{proposition}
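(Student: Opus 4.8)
The plan is to bound the total number of synchronous communication rounds by first establishing that \alg makes guaranteed progress in every round, and then counting rounds through a simple partition argument. Throughout, I would treat the execution as proceeding in global \emph{rounds}: since communication is assumed synchronous, in each round every still-active agent executes one pass of the while loop (lines 2--15 of \Cref{alg:dec_sub_max}) in lockstep. Within one such round, all exchanges of marginal-gain values $g_\rob$ (line 5) occur in parallel and count as \emph{one} communication round in which a real number is transmitted; likewise, all broadcasts and receptions of the selected actions $a^\alg$ (lines 8 and 11) occur in parallel and count as \emph{one} communication round in which an action is transmitted. Hence it suffices to bound the number of global rounds in which any actual transmission takes place.

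First I would prove a progress lemma: in every global round at least one active agent finalizes (selects) its action. To see this, among all agents that have not yet selected, consider the one, say $i^\star$, whose current candidate has the largest marginal gain $g_{i^\star}$, breaking ties by a fixed deterministic rule such as smallest index (consistently with the $\arg\max$ in line 6). Every active neighbor $\rob\myin\calN_{i^\star}\setminus\calI_{i^\star}$ then satisfies $g_\rob\le g_{i^\star}$, and the tie-break ensures $i^\star$ wins the comparison; therefore $i^\star=\arg\max_{\rob\myin\calN_{i^\star}\cup\{i^\star\}\setminus\calI_{i^\star}} g_\rob$, so $i^\star$ selects its action in line 7. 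This guarantees at least one new selection per round and rules out deadlock even on directed or disconnected topologies (\Cref{rem:disconnected}).

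Next I would count the rounds. Let $T$ be the number of global rounds until all agents have selected, and let $S_t$ be the set of agents that select in round $t$, which is nonempty by the progress lemma. The sets $\{S_t\}_{t=1}^{T}$ partition $\calN$, so $\sum_{t=1}^{T}|S_t|=|\calN|$ with each $|S_t|\ge 1$. I would then split into two cases according to $|S_T|$. If $|S_T|\ge 2$, then $|\calN|=\sum_{t=1}^{T}|S_t|\ge (T-1)+2=T+1$, giving $T\le|\calN|-1$, so every round, and hence every communication round of either type, lies among the first $|\calN|-1$. If instead $|S_T|=1$, the single agent $i$ remaining in round $T$ has no still-active neighbor, so $\calN_i\setminus\calI_i=\emptyset$; it receives no $g_\rob$ in line 5 and its broadcast in line 8 reaches no active recipient. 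Thus round $T$ carries neither a real-number nor an action transmission, and all communication is confined to the first $T-1\le|\calN|-1$ rounds. In both cases the number of rounds transmitting a real number and the number transmitting an action are each at most $|\calN|-1$, as claimed.

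The main obstacle I anticipate is the progress lemma and its interaction with the counting, rather than the arithmetic. Two points need care: first, fixing a deterministic tie-breaking rule so that the global-maximizer argument yields a genuine selection in line 6 even when several candidates share the top marginal gain; and second, the edge-case bookkeeping for the final round, where I must argue that a lone surviving agent triggers no transmission, which is exactly what tightens the bound from $|\calN|$ to $|\calN|-1$. I would also make the synchrony assumption explicit so that the per-round parallel exchanges legitimately collapse to a single counted round of each type.
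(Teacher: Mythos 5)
Your proposal is correct and follows essentially the same route as the paper: at least one agent selects per synchronous while-loop iteration, each iteration carries one round of scalar exchange and one round of action exchange, and hence at most $|\calN|-1$ rounds of each type occur. You are in fact more careful than the paper's two-line proof, which asserts the $|\calN|-1$ iteration bound directly; your explicit tie-breaking argument for the progress lemma and your final-round case analysis (a lone surviving agent triggers no transmission) supply exactly the details needed to tighten the naive count of $|\calN|$ down to $|\calN|-1$.
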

\begin{proof} The number of ``while loop'' iterations of \alg is at most $|\ouragent|-1$ because at each iteration at least one agent will select an action. Besides, each ``while loop'' iteration includes two communication rounds: one for transmitting a marginal gain value (line 5), and one for transmitting an action (lines 8 and 11). Hence, \Cref{prop:communication} holds. 
\end{proof}

\begin{theorem}[Decision Time of \alg]\label{th:speed}
\alg terminates in at most $(\tau_c+\tau_\#)\,(|\calN|-1)\,+\, \tau_f\,\max_{i\myin\calN}\,(|\mathcal{V}_i|\,|\mathcal{N}_i|)$ time.
\end{theorem}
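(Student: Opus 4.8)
The plan is to decompose \alg's total decision time into the two orthogonal contributions identified in the paper's framework: the computational time (total function-evaluation cost) and the communication time (cost of the communication rounds, weighted by message length). Since \alg proceeds in synchronous iterations where every surviving agent acts in parallel, the total wall-clock time is governed by the worst-case agent per phase, not by a sum over all agents. I would therefore argue that the termination time is bounded by the time of the slowest single agent across the iterations, and then invoke the two already-proven complexity propositions to quantify each piece.

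First I would handle the computational contribution. By \Cref{prop:computation}, each agent $i$ performs at most $|\mathcal{V}_i|\,|\mathcal{N}_i|$ function evaluations over the \emph{entire} run of \alg, and each evaluation costs $\tau_f$. Crucially, because agents evaluate their candidate actions simultaneously (lines 3--4 execute in parallel across all active agents), the computational wall-clock time is set by the agent that performs the most evaluations, giving $\tau_f\,\max_{i\myin\calN}\,(|\mathcal{V}_i|\,|\mathcal{N}_i|)$ rather than a sum. This is the first of the two additive terms.

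Next I would handle the communication contribution using \Cref{prop:communication}. There are at most $|\calN|-1$ iterations, and each iteration contributes exactly two communication rounds: one round transmitting a real-valued marginal gain (line 5), costing $\tau_\#$ per round, and one round transmitting a selected action (lines 8 and 11), costing $\tau_c$ per round. Since these rounds again happen in parallel across agents within a single synchronous iteration, each iteration contributes $\tau_c+\tau_\#$ to the wall-clock time, and summing over the at-most $|\calN|-1$ iterations yields $(\tau_c+\tau_\#)\,(|\calN|-1)$. Adding the two contributions gives the claimed bound $(\tau_c+\tau_\#)\,(|\calN|-1)\,+\,\tau_f\,\max_{i\myin\calN}\,(|\mathcal{V}_i|\,|\mathcal{N}_i|)$.

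The main subtlety I expect is justifying that the two contributions \emph{add} cleanly rather than interleave in some more complicated way, and that taking a single maximum over agents for the computational term is legitimate. This rests on the synchronous-communication assumption stated earlier in the paper: within each iteration, all active agents compute their candidates, exchange marginal gains, and then (for the ``winners'') broadcast actions, all in lockstep. The per-iteration computation is bounded by one round of $|\mathcal{V}_i|$ evaluations for the slowest agent, but since an agent may survive multiple iterations, the correct accounting is the \emph{total} per-agent budget $|\mathcal{V}_i|\,|\mathcal{N}_i|$ from \Cref{prop:computation}, whose maximum over $i$ upper-bounds the aggregate computational wall-clock time because no agent's cumulative work can exceed this and all agents compute in parallel. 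Once this bookkeeping is made explicit, the result follows immediately by combining the two propositions.
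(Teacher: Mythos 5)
Your proposal is correct and follows essentially the same route as the paper, which proves \Cref{th:speed} simply by combining \Cref{prop:computation} and \Cref{prop:communication}. You additionally spell out the bookkeeping the paper leaves implicit — that parallelism justifies taking the maximum over agents for the computational term and that the two contributions add under the synchronous-communication assumption — which is a faithful elaboration rather than a different argument.
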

\begin{proof}
\Cref{th:speed} holds from  \Cref{prop:computation,prop:communication}. 
\end{proof}

\subsection{Comparison to the State of the Art}
\label{subsec:RAG-SoA-time}

We summarize the decision times of the state of the art and  \alg in \Cref{tab:comparison}.
\alg scales linearly with the number of robots, $|\calN|$, whereas the state of the art scales at least quadratically with $|\calN|$.  
\alg has computational time that is linear in $|\calN_i|$, independent of $|\calN|$, and communication time linear in $|\calN|$.  

In \Cref{tab:comparison},
we assume for simplicity that $|\calV_i|\,=\!|\calV_j|, \forall i,j\in\calN$.  We divide the state of the art into algorithms that solve \Cref{pr:main} either in the continuous domain via employing the {multi-linear extension}~\cite{calinescu2011maximizing} of the set function $f$~{\cite{robey2021optimal,du2022jacobi,rezazadeh2023distributed}}, or in the discrete domain~\cite{corah2018distributed, konda2022execution,liu2021distributed,gharesifard2017distributed,grimsman2019impact}:\footnote{The continuous-domain algorithms employ consensus-based techniques {\cite{robey2021optimal,rezazadeh2023distributed}}, or algorithmic game theory~{\cite{du2022jacobi}}, and need to compute the multi-linear extension's gradients via sampling.}$^,$\footnote{The decision times of the continuous-domain algorithms depend on additional problem-dependent parameters ({such as Lipschitz constants, the diameter of the domain set of the multi-linear extension, and a bound on the gradient of the multi-linear extension}), which we make implicit in \Cref{tab:comparison} via the  $O$, $\Omega$, and $\Theta$ notations.}$^,$\footnote{The computational and communication times reported for \cite{du2022jacobi} are based on the numerical evaluations therein since a theoretical quantification is not included in \cite{du2022jacobi} and appears non-trivial to derive one as a function of $\calN$, $\epsilon$, or the other problem parameters.}  

\paragraph{Computation time}
\alg requires $\tau_f\,|\calN_i|\,|\calV_i|$ computation time.~The method in~\cite{konda2022execution} requires computation time $\tau_f\,\sum_{i\myin\calN}\,|\calV_i|\,=\!\tau_f\,|\calV_i|\,|\calN|$ since each agent $i$ 
needs to perform $|\calV_i|$~computations and the agents perform the computations sequentially. Given a pre-specified information access prescribed by directed acyclic graph (DAG) $\calG_{\text{info}}$, the methods in~\cite{gharesifard2017distributed,grimsman2019impact} also instruct the agents to select actions sequentially  leading to a computation time at most $\tau_f\,|\calV_i|\,|\calN|$. This time excludes the time needed to find the $\calG_{\text{info}}$ given an arbitrary communication graph $\calG$. {The method in~\cite{corah2018distributed} enables parallelized computation among agents by ignoring certain edges of an initially complete $\calG$, resulting in a computation time of $O(\tau_f\,|\calV_i|\,/\,\epsilon)\leq\tau_f\,|\calV_i|\,|\calN|$. Compared to~\cite{gharesifard2017distributed,grimsman2019impact}, the method in~\cite{corah2018distributed} also provides a distributed way to find which edges to ignore such that the suboptimality guarantee is optimized, a process that requires an additional computation time of $O(\tau_f\,|\mathcal{V}_i|^2\, (|\mathcal{N}|-1))$.} 
The remaining algorithms require longer computation times, proportional to $|\calN|^{2}$ or more. 

\paragraph{Communication time} 
\alg requires at most $(\tau_c+\tau_\#)\,(|\calN|-1)$ communication time. In~\cite{gharesifard2017distributed,grimsman2019impact}, the agents need to communicate over $\calG$ per the pre-specified information-access directed acyclic graph $\calG_{\text{info}}$ per the rule in \cref{eq:dsm}~\cite[Remark 3.2]{gharesifard2017distributed}. In Appendix~\red{IV}, we identify a worst case where this rule results in $O(\tau_c\,|\calN|^2)$ communication time.  This happens when each agent $i-1$ and agent $i$ in the decision sequence do not communicate directly, thus, the information of agent $i-1$ needs to be relayed to agent $i$ via other agents that form a connected communication path between the two.  
{The method in~\cite{konda2022execution}, which introduces a depth-first search (DFS) procedure to determine the best agents' ordering to run \sg~\cite{fisher1978analysis} over arbitrary (strongly) connected networks (instead of just line graphs), 
requires a worst-case communication time of $O(\tau_c\,|\calN|^3)$ for directed networks, and $O(\tau_c\,|\calN|^2)$ for undirected networks~\cite[Appendix II]{xu2024performance}}. For some $\epsilon$, the method in~\cite{corah2018distributed} may require less communication time for running the algorithm per se than other methods, but an additional communication time of $O(\tau_c\,|\calV_i|)$ is needed to distributively find a DAG that optimizes the algorithm's approximation performance.
The remaining algorithms require communication times proportional to $|\calN|^{2}$ or more.

 %Resource requirement analysis

\section{{Evaluation in Road Detection and Coverage}}\label{sec:experiments}

{We evaluate \alg in the provided simulator across four scenarios of road detection and coverage. The scenarios span two team sizes (15 and 45 robots), and two communication rates (0.25 Mbps), and 100 Mbps) ---up to three orders of magnitude faster than competitive near-optimal submodular optimization algorithms--- and, when the robots maintain a neighborhood size $\geq 2$, superior mean coverage.  The experiments also demonstrate \alg scales linearly with the network size, as predicted by our theoretical analysis: from the 15-robot to the 45-robot case,  \alg scales linearly, being at most 3x slower compared to the 15-robot case.  In contrast, the compared near-optimal algorithms scale cubically, being up to 60x slower compared to the 15-robot case.}

{Our code is available at {\url{https://github.com/UM-iRaL/Resource-Aware-Coordination-AirSim}}.}

\myParagraph{Common Simulation Setup across Simulated Scenarios} We first present the task of road detection (Fig.~\ref{fig:intro}), then the compared algorithms and the simulation pipeline (Fig.~\ref{fig:simulator}).

\paragraph{Road detection and coverage task}
Multiple aerial robots with onboard cameras are deployed in an unknown urban environment and tasked to {maximize the total new road area detected after each action coordination step} (Fig.~\ref{fig:intro}). {The environment is unknown to the robots (no map is available a priori), necessitating the robots to use their onboard cameras to perform exploration for road detection.

The robots are deployed close to each other relative to the size of their FOVs and, as a result, their FOVs may often overlap (Fig.~\ref{fig:intro}). \textit{Therefore, coordination becomes necessary for the robots to spread in the environment such that they detect different road segments and maximize the total new road area detected after each action coordination step.} 
}

To perform the task, given the currently visible environment, 
the robots coordinate how to move per the collaborative autonomy pipeline in Fig.~\ref{fig:pipeline}. Particularly, the task takes the form of the optimization problem in \Cref{pr:main} where $f$ denotes the number of road pixels captured by all robots' collective FOV after they traverse their agreed trajectories $\{a_i\}_{i\myin\calN}$ ---$f$ is non-decreasing, submodular, and 2nd-order submodular~\cite{corah2018distributed}--- and $\calV_i$ denotes robot $i$'s available trajectories at the current coordination round.  $\calV_i$ defines the possible directions that the robot can move in, and the speed the robot can move with.  For simplicity, we assume that every robot can move in any of the 8 cardinal directions ---N E S W NE SE NW SW--- relative to its body frame, for $10$ m at $3$ m/s.

Without loss of generality, the deployed robots are assumed to have the same onboard sensing and communication capabilities: All robots are equipped with (i) an inertial measurement unit (IMU); (ii) a GPS signal receiver; (iii) a downward-facing camera mounted on a gimbal that enables the camera to point to any of the 8 cardinal directions relative to the robot's body frame; and (iv) {a communication module ---either a Digi XBee 3 Zigbee 3.0 ($0.25$ Mbps) or Silvus Tech SL5200 ($100$ Mbps)--- for inter-robot communication}. Each robot can establish a few communication channels with robots within $100$ m range, subject to bandwidth constraints. 

\begin{figure*}[t]
    \captionsetup{font=footnotesize}
    \centering
    \includegraphics[width=.98\textwidth]{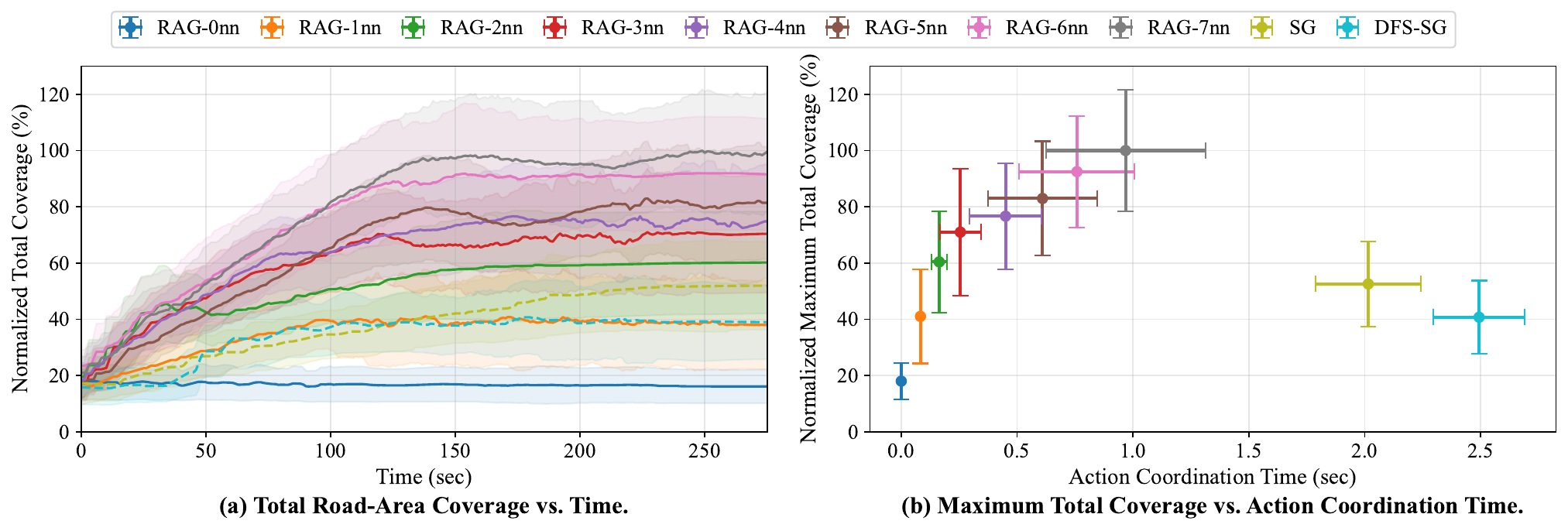}
    \caption{{\textbf{Comparative Analysis of Coverage Performance with 15 robots for $100$ Mbps data rate.}  
    The reported times include ROS1 delays of up to 0.18 sec per action coordination step. Particularly, in Figs.~\ref{fig:combined-coverage-analysis-15robots-100Mbps}--\ref{fig:combined-coverage-analysis-45robots-0p25Mbps}, 
    all coverage data are normalized by the mean results of the corresponding \alg-7nn instances, and the shared area and cross-hairs represent 1 standard deviation for y-axis in (a) and both axes in (b). }}
    \label{fig:combined-coverage-analysis-15robots-100Mbps}
\end{figure*}

\begin{figure*}[t]
    \captionsetup{font=footnotesize}
    \centering
    \includegraphics[width=\textwidth]{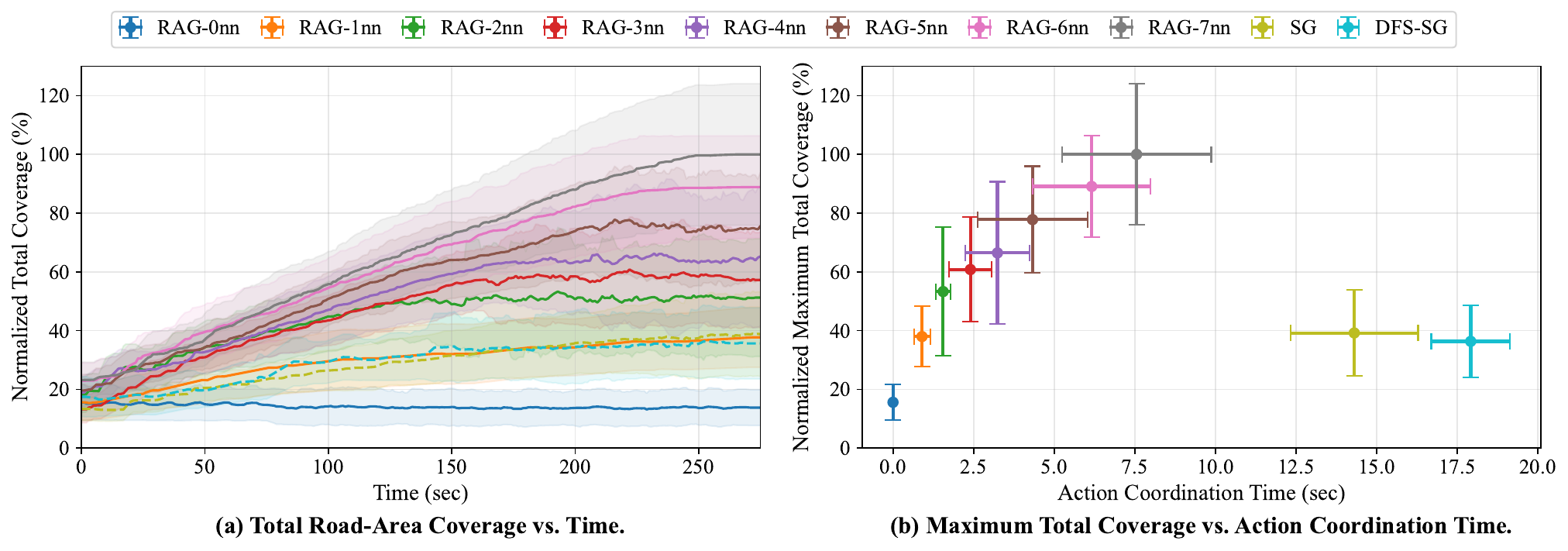}
    \caption{{\textbf{Comparative Analysis of Coverage Performance with 15 robots for $0.25$ Mbps data rate.} The reported times include ROS1 delays of up to 0.81 sec per action coordination step.}}
    \label{fig:combined-coverage-analysis-15robots-0p25Mbps}
\end{figure*}

\paragraph{Compared algorithms} Across various bandwidth constraints for the robots, we compare \alg with two competitive near-optimal algorithms: the Sequential Greedy (\sg) algorithm~\cite{fisher1978analysis}, also known as Coordinate Descent~\cite{atanasov2015decentralized}, and its state-of-the-art Depth-First-Search variant (\scenario{DFS-SG})~\cite{konda2022execution}.  {The algorithms are commonly used for information gathering with multiple robots (\eg see the papers~\cite{atanasov2015decentralized,corah2019distributed,schlotfeldt2021resilient} and the survey~\cite{sung2023survey}.} In more detail, the setup is as follows: 

We test \alg for different bandwidth constraints varying from $0$ up to {$7$}.\footnote{{Although neighborhood sizes larger than 7 are possible in the simulator, we limit them to at most size 7 to achieve a reasonable duration for completing each Monte Carlo trial. Particularly, the neighborhoods' scalability to $k>7$ is constrained by the msgpack-rpc protocol implementation over TCP/IP via rpclib. The current use of a single TCP port for multi-drone API calls creates a communication bottleneck, increasing client response times.  
For example, our experiments with 45 robots, which include 30 trials that simulate the robots' evolution for 500 sec for 9 different algorithms (\alg$k$-nn for $k=1,\ldots,7$, \sg, and {\sf DFS-SG}), require 3 days for each algorithm to be tested.
}}
In each case, the same bandwidth constraint applies to all robots.  Each such version of \alg is denoted by \alg-$k$nn, where {$k=0,\dots,7$}.
For each $k$, the communication network is {heuristically} determined by having each robot select $k$ nearest other robots as neighbors subject to the $100$ m communication range. If fewer than $k$ others are within the communication range, then all are selected as neighbors.  

The \sg algorithm requires the robots to be arranged on a {line graph} that defines the order in which the robots select actions per \cref{eq:sga} and enables the information relay from robots that have already selected actions to the robot currently selecting an action.  To ensure the existence of a line graph in our simulations of \sg, we randomly generate one, adjusting the robots' communication ranges to infinity.

The \scenario{DFS-SG} algorithm enables \sg to be applied to networks that are not necessarily a line graph, but the networks still need to be strongly connected. {To this end, we construct strongly connected graphs by first randomly constructing line graphs as for \sg, then randomly adding a few undirected edges to the line graphs, particularly, 30 edges for the 15-robot case and 90 edges for the 45-robot case.} At each {coordination step}, \scenario{DFS-SG} randomly picks the first robot to select an action, and the order of all other robots is determined by {a distributed method based on depth-first search.} {The resulting decision sequence may involve relay robots that transmit information between robots that are not directly connected and, thus, \scenario{DFS-SG} generally requires longer decision times than \sg.}

\begin{figure*}[t]
    \captionsetup{font=footnotesize}
    \centering
    \includegraphics[width=.98\textwidth]{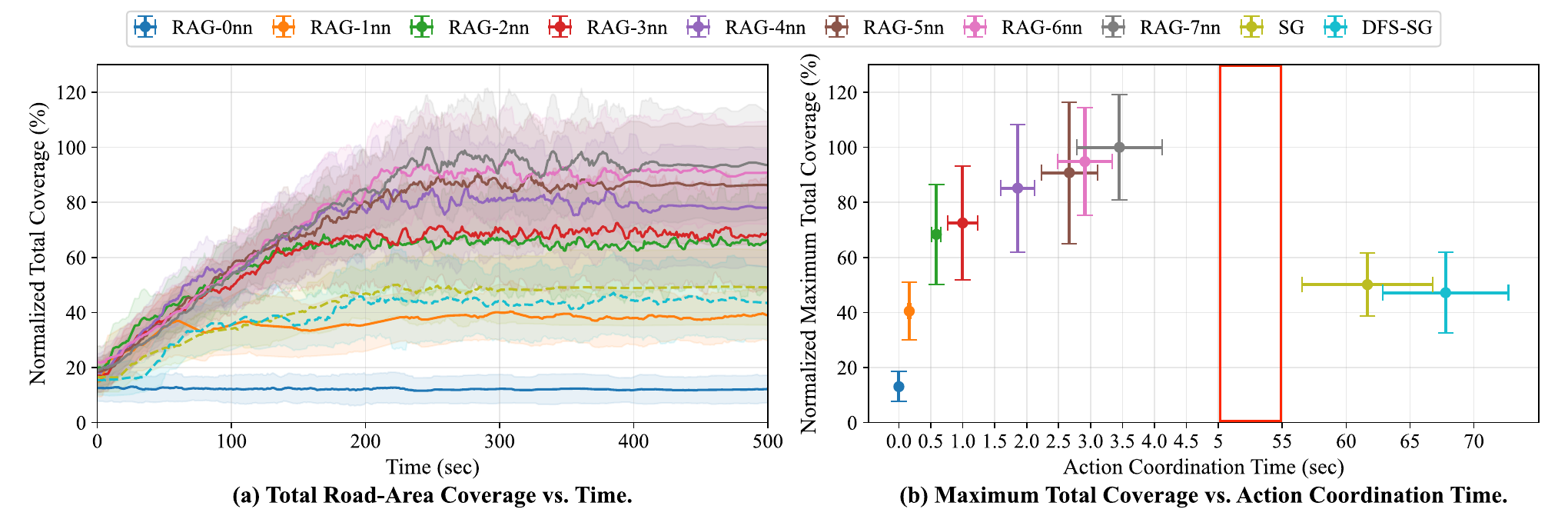}
    \caption{{\textbf{Comparative Analysis of Coverage Performance with 45 robots for $100$ Mbps data rate.} The reported times include ROS1 delays of up to 3.78 sec per action coordination step.}}
    \label{fig:combined-coverage-analysis-45robots-100Mbps}
\end{figure*}

\begin{figure*}[t]
    \captionsetup{font=footnotesize}
    \centering
    \includegraphics[width=\textwidth]{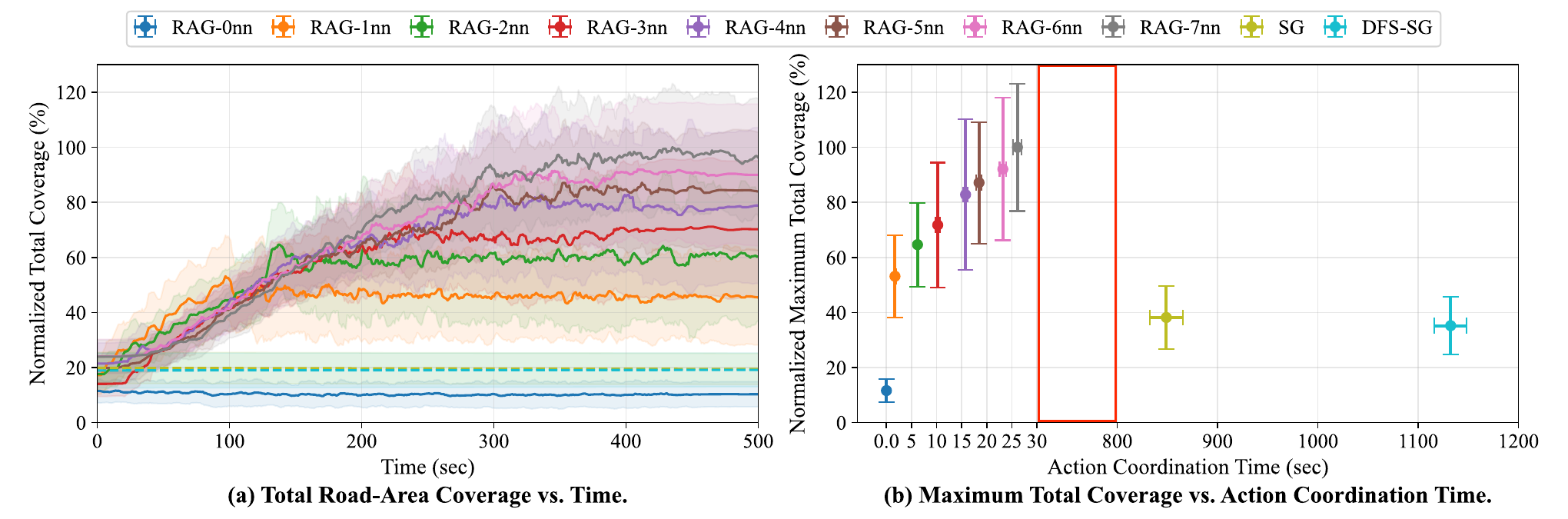}
    \caption{{\textbf{Comparative Analysis of Coverage Performance with 45 robots for $0.25$ Mbps data rate.} The reported times include ROS1 delays of up to 6.54 sec per action coordination step.}}
    \label{fig:combined-coverage-analysis-45robots-0p25Mbps}
\end{figure*}

\paragraph{Simulation pipeline}
The simulation pipeline consists of the following modules (Fig.~\ref{fig:simulator}):

\begin{itemize}[leftmargin=*]
    \item \textit{Network Self-Configuration}: This module applies only to \alg since only \alg enables the network to self-configure itself across the planning steps subject to the robots' bandwidth constraints and the robots' relative locations.  Particularly, as described also in the above paragraph for \alg-$k$nn, at the beginning of each planning step, each robot selects the $k$ nearest other robots within its communication range.  This neighborhood selection scheme is justified by \Cref{ex:coin-bound}.
    \item \textit{Perception}:
    The process is different for each algorithm since each algorithm requires the robots to process information received from different sets of robots:
    
    \quad For \alg, the robots need the following three operations to detect the nearby environment and evaluate where would be best to move: (i) At the beginning of each planning step, each robot hovers at a constant height (30 m) and rotates its camera to take a picture in each of the 8 cardinal directions relative to its body frame.  The captured FOV in each direction is of size $34.6\times 26.0$ m$^2$. Then, the robot uses semantic segmentation\footnote{{We simulate semantic segmentation using the in-built ground truth result of AirSim. The process runs at 20Hz, ensuring segmentation does not create a bottleneck for the replanning frequency of \alg.}} to detect the road segments in each of the 8 captured images. {The resulting segmented image has size 25 KB with label information after compression.} (ii) Each robot, upon receiving segmented images and relative poses from neighbors that have committed actions in specific directions, stitches these images together. That way, the robot reconstructs the collective FOV of its neighbors that have committed an action.  This reconstruction will be used next by the robot to select in which direction to move to cover the most new road area. To this end, (iii) the robot stitches its 8 captured images respectively with the previous reconstruction and counts the extra road pixels that can be covered in each of the corresponding 8 cardinal directions. 
    
    \quad For \sg and \scenario{DFS-SG}, the operations above are similar, with the modification being that instead of just the neighbors' segmented and stitched images, each robot leverages all previous robots' stitched images in (ii) and (iii). 

    \item \textit{Action Coordination}: 
    The process differs across algorithms:
    
    \quad For \alg, given their neighbors' already committed actions, the robots that are still deciding will first each pick a trajectory whose FOV gives the most amount of newly covered road area. Then, they will compare this amount with one another. Those who win their neighbors will commit to their picked trajectories and share the corresponding FOV and camera pose with neighbors. Otherwise, they will receive the FOVs and camera poses selected by newly committed robots and repeat the process above. 
    
    \quad For \sg, the robots make decisions sequentially in a line graph. Each robot, upon receiving the stitched FOVs of the trajectories selected by all predecessors, will first choose the trajectory whose FOV offers the most newly covered area based on previous robots' selections. It will then align and stitch this FOV with the received ones, and finally send the newly stitched FOVs to the next robot.
    
    \quad The coordination process for \scenario{DFS-SG} is similar to \sg, with the difference being that \scenario{DFS-SG} operates over a strongly connected network, given the robots' locations, bandwidth, and communication range, instead of a line graph. Thus, during \scenario{DFS-SG}, the $i$-th robot in the decision sequence may need to transmit FOVs to the $(i+1)$-th robot in the decision sequence via relay robots. 
    
    \item \textit{Control}: The robots simultaneously traverse the trajectories after coordination using ``simple\_flight'', a flight controller in AirSim. ``simple\_flight'' uses cascade PID controllers to drive the aerial robots to move in the selected direction with speed $3.0$ m/s for $10.0$ m. 
\end{itemize}

\subsection{Evaluations with 15 Robots}\label{subsec:15-robots}

We present the results for the 15-robot scenario. {Each compared algorithm was tested in 50 trials, each lasting 300 seconds.} The simulation setup is as follows.

\myParagraph{Simulation Setup}  Across the Monte Carlo tests, the robots are initialized near one another such that their FOVs largely overlap (\eg see the leftmost group of 15 robots in Fig.~\ref{fig:intro}). {{Therefore, coordination becomes necessary for the robots to spread in the environment such that they detect different road segments and, thus, maximize the total new road area detected after each action coordination step.}}

We ran the simulations on a 32-core CPU with 2 Nvidia RTX 4090 24GB GPUs and 128GB RAM on Ubuntu 18.04.

\myParagraph{Results} The results are summarized in {Figs.~\ref{fig:combined-coverage-analysis-15robots-100Mbps} (100 Mbps) and~\ref{fig:combined-coverage-analysis-15robots-0p25Mbps} (0.25 Mbps).}
{\alg-$k$nn achieves real-time planning for all $k$, and superior mean coverage for $k\geq 2$. 
} 

\paragraph{Road coverage over time} 
Figs.{~\ref{fig:combined-coverage-analysis-15robots-100Mbps}(a) and~\ref{fig:combined-coverage-analysis-15robots-0p25Mbps}(a)} present the total area of the road covered over time for each algorithm.
{\alg-$k$nn rapidly achieves superior  mean performance for $k\geq2$, and maintains comparable performance for $k=1$. The lower performance for $k=0$ is expected, since then no robot ever coordinates with any other. }

\paragraph{Trade-off between decision time vs.~road coverage} Figs.~\ref{fig:combined-coverage-analysis-15robots-100Mbps}(b) and~\ref{fig:combined-coverage-analysis-15robots-0p25Mbps}(b) {present the average action coordination time vs.~the peak coverage of the road area over the duration of the task (the peak values in Figs.~\ref{fig:combined-coverage-analysis-15robots-100Mbps}(a) and~\ref{fig:combined-coverage-analysis-15robots-0p25Mbps}(a)). 
For larger $k$, the marginal increase in
total coverage is negatively outweighed by a greater increase in decision time. The observation demonstrates the trade-off between decision speed vs.~optimality predicted in \Cref{sec:guarantee} (\Cref{rem:larger-not-better}).}

\subsection{Evaluations with 45 Drones}\label{subsec:45-robots}

We demonstrate the scalability of our algorithm from 15 to 45 robots. {Each compared algorithm was tested in 30 trials, each lasting 500 seconds. }
{The experiments demonstrate that \alg scales linearly, being at most 3 times slower compared to the 15-robot case.  In contrast, the compared near-optimal algorithms scale cubically, as predicted by our theoretical analysis, being $>27=3^3$ times slower compared to the 15-robot case: \eg~\sg's decision time increases from around 2 sec to 60 sec (100 Mbps case), a 30x increase, and from around 14 sec to 850 sec (0.25 Mbps case), a 60x increase.}

\myParagraph{Simulation Setup} Across the Monte Carlo tests, the robots are divided into three groups, with the robots within each group being deployed near to one another such that their FOVs largely overlap (Fig.~\ref{fig:intro}).  In such a setting, \alg automatically enables parallelized decision-making, achieving scalability and similar coverage as for the 15-robot setting, in contrast to the state-of-the-art near-optimal algorithms.

We ran the simulations on a remote server with 80 CPU cores (4x 2.4 GHz Intel Xeon Gold 6148), 360GB RAM, and 4 NVIDIA Tesla V100 16GB GPU. 

\myParagraph{Results}  The results are summarized in {Figs.~\ref{fig:combined-coverage-analysis-45robots-100Mbps} (100 Mbps) and~\ref{fig:combined-coverage-analysis-45robots-0p25Mbps} (0.25 Mbps)}. {\alg maintains superior mean coverage, with all qualitative observations from the 15-robot case applying here.  Per Figs.~\ref{fig:combined-coverage-analysis-45robots-100Mbps}(b) and~\ref{fig:combined-coverage-analysis-45robots-0p25Mbps}(b), \alg's coordination times scaled linearly compared to the 15-robot case.  In contrast, \sg's and \scenario{DFS-SG}'s coordination times scaled super-cubically, requiring now minutes per action coordination step at 100 Mbps and tens of minutes at 0.25 Mbps.  The results demonstrate the linear scalability of \alg and cubic scalability of the state of the art, as predicted by our theoretical analysis (\Cref{tab:comparison}).}

% \vspace{-2mm}
% % % % % % % % % % % % % % % % % % % % % % % % % % % % % % % % % % % % % % % % % % % % % % % % % % % % % % %
% % Section % % % %
\section{Summary and Concluding Remarks} \label{sec:con}
% % % % % % % % % % % % % % % % % % % % % % % % % % % % % % % % % % % % % % % % % % % % % 
% \myParagraph{Summary} 
{We introduced a distributed submodular optimization paradigm, Resource-Aware distributed Greedy (\alg), that enables scalable and near-optimal coordination over robot mesh networks.  The framework applies to any distributed submodular optimization task.  In this paper, we applied it to active information-gathering,  demonstrating \alg's performance in simulated scenarios of road detection with up to 45 robots. 
In the simulations, \alg enabled real-time planning, up to 3 orders of magnitude faster than competitive near-optimal algorithms, while also achieving superior mean coverage performance.
To enable the simulations, we extended the high-fidelity and photo-realistic AirSim by enabling a scalable collaborative autonomy pipeline to tens of robots while simulating realistic r2r communication messages and speeds.}
 
{In our future work, we will apply \alg to distributed metric-semantic SLAM where the r2r communication messages scale to MBs~\cite{tian2022kimera,liu2024slideslam}, 40 times larger than the messages of 25 KB we considered for the road detection task in this paper.}

We also plan the following algorithmic contributions.  
(i) \alg assumes synchronous communication.  Although \alg can be modified to handle such cases, \eg by instructing each robot to execute its action without first waiting for all other robots to select actions, its near-optimality guarantees provided in this paper may no longer hold. Our future work will extend our theoretical and algorithmic analysis beyond the above limitations. 
(ii) We will also extend our results to handle effective task execution over long time horizons.  For example, in collaborative mapping over long time horizons, the team needs to stay updated on the areas that have been mapped, such that the current plans do not repeat past actions. This is in addition to the focus of this paper that only the current plans among the robots do not overlap. To this end, we will handle network connectivity constraints. (iii) Finally, we will enhance our simulator by integrating the simulation of realistic communication channels and protocols towards communication-aware and -efficient coordination algorithms.

% \vspace{-2mm}
%AFTER ACCEPTANCE
% \section*{Acknowledgements}
% \vspace{-1mm}

\appendices
\section*{Appendix I}\label{app:th1}
\paragraph*{Proof of \Cref{prop:coin}} Consider robot $i\in\calN$ and two disjoint robot sets $\calB_1, \calB_2\subseteq\calN\setminus\{i\}$, we have 
\begin{align}
    &\ourcurv(\calB_1\cup\calB_2) - \ourcurv(\calB_1) \nonumber\\
    &= f(a_i\,|\,\{a_j\}_{j\myin\calB_1\cup\calB_2}) - f(a_i\,|\,\{a_j\}_{j\myin\calB_1}) \leq 0,
\end{align}where the inequality holds since $f$ is submodular. Hence, $\ourcurv(\calN_i)$ is non-increasing in $\calN_i$. Therefore, for any $i\in\calN$, $\ourcurv(\calN_i)$ achieves the lower bound $\ourcurv(\calN_i)\geq\ourcurv(\calN\setminus\{i\})=f(a_i) - f(a_i\,|\,\emptyset)= 0$. For the upper bound, 
\begin{align}
    \ourcurv(\calN_i) &\leq\ourcurv(\emptyset)= f(a_i) - f(a_i\,|\,\calA_{\calN\setminus\{i\}}) \\
    &\leq \kappa_f f(a_i), \label{aux0:11}
\end{align}where the first inequality holds since $\ourcurv$ is non-increasing, and the second inequality holds from \cref{eq:curvature}. \qed
 %Proofs
\section*{Appendix II}\label{app:th2}
We first prove \Cref{th:additive}, and then present and prove the bounds of \alg when $f$ is submodular or approximately submodular instead of 2nd-order submodular. 

\paragraph{Proof of \Cref{th:additive}} 
We index each agent in $\calN$ per its selecting order in \alg, \ie  agent $i \in \ouragent\triangleq\{1,\dots,|\ouragent|\}$ is the $i$-th agent to select an action during the execution of \alg. If multiple agents select actions simultaneously, then we index them randomly.  
We use also the notation:
\begin{itemize}[leftmargin=3.5mm]
    \item $\calA^{\scenario{RAG}}_\calX \triangleq \{a^\alg_i\}_{i\myin \calX}$ for any $\calX\subseteq \calN$, \ie $\calA^{\scenario{RAG}}_\calX$ is the set of actions selected by the agents in $\calX$.
\end{itemize}Then we have,
{%\small
\begin{align}
    &f(\calA^\opt)\nonumber\\
    &= f(\calA^\opt\cup\calA^\alg) - \sum_{i\in\calN} f(a_{i}^\alg\,|\,\calA^\opt\cup\calA_{[i-1]}^\alg) \label{aux1:1}\\
    &\leq f(\calA^\alg) + \sum_{i\in\calN} f(a_{i}^\opt\,|\,\calA_{\calN_i\cap [i-1]}^\alg) \nonumber\\
    &\quad - (1-\kappa_{f}) \sum_{i\in\calN} f(a_{i}^\alg\,|\,\calA_{\calN_i\cap [i-1]}^\alg) \label{aux1:2}\\
    &\leq f(\calA^\alg) + \kappa_{f} \sum_{i\in\calN} f(a_{i}^\alg\,|\,\calA_{\calN_i\cap [i-1]}^\alg) \label{aux1:3} \\
    &= (1+\kappa_f)f(\oursol) \nonumber \\
    &\quad+ \kappa_{f} \sum_{i\in\calN}\left[ f(a_i^\alg\,|\,\oursol_{\mathcal{N}_i\cap[i-1]}) -f(a_i^\alg\,|\,\oursol_{[i-1]}) \right]\label{aux1:4}\\
    &\leq (1+\kappa_f)f(\oursol) \nonumber \\
    &\quad+ \kappa_{f} \sum_{i\in\calN} \left[f(a_i^\alg) - f(a_i^\alg\,|\,\oursol_{[i-1]\setminus\mathcal{N}_i}) \right]\label{aux1:5} \\
    &\leq (1+\kappa_f)f(\oursol) + \kappa_{f} \sum_{i\in\calN} \underbrace{\left[f(a_i^\alg) - f(a_i^\alg\,|\,\oursol_{\mathcal{N}_i^c})\right]}_{\ourcurv(\calN_i)}, \label{aux1:6}
    \end{align}}where \cref{aux1:1} holds from telescoping the sums; \cref{aux1:2} holds from $f$ being submodular and
    \begin{equation}
        1-\kappa_{f} \leq \frac{f(a_{i}^\alg\,|\,\calA_{\calN\setminus\{i\}}^\alg)}{f(a_{i}^\alg)} \leq \frac{f(a_{i}^\alg\,|\,\calA^\opt\cup\calA_{[i-1]}^\alg)}{f(a_{i}^\alg\,|\,\calA_{\calN_{i}\cap [i-1]}^\alg)},
    \end{equation}per the definition of $\kappa_{f}$; \cref{aux1:3} holds since \alg selects $\singlesol_i$ greedily; \cref{aux1:4} holds from telescoping the sums; \cref{aux1:5} holds from $f$ being 2nd-order submodular; and \cref{aux1:6} holds from $f$ being submodular. Therefore, \cref{eq:thm-3} holds. 

Then, in the fully centralized case where $\calN_i^c=\emptyset, \forall i\in\calN$, we have $\ourcurv(\calN_i) = f(a_i^\alg) - f(a_i^\alg) = 0$. Hence, \cref{eq:thm-1} follows \cref{aux1:6}. 

Finally, in the fully decentralized case where $\calN_i^c=\calN\setminus\{i\}, \forall i\in\calN$, we have
\begin{align}
    \sum_{i\in\calN}\ourcurv(\calN_i) &= \sum_{i\in\calN}f(a_i^\alg) - f(a_i^\alg\,|\,\calA_{\calN\setminus\{i\}}^\alg) \nonumber\\
    &\leq \kappa_f \sum_{i\in\calN}f(a_i^\alg) \label{aux1:11}\\
    &\leq \frac{\kappa_f}{1-\kappa_f}f(\calA^\alg),\label{aux1:12}
\end{align}where \cref{aux1:11} holds from \cref{eq:curvature}, and \cref{aux1:12} holds from \cite[Lemma 2.1]{iyer2013curvature}. Combining \cref{aux1:6,aux1:12}, the lower bound in \cref{eq:thm-5} can be proved. \qed

\paragraph{Suboptimality Bounds of \alg for Submodular or Approximately Submodular $f$} 
We present the bounds in \Cref{cor:non-submodular}. To this end, we use the following definition. 

\begin{definition}[Total curvature~\cite{sviridenko2013optimal,sviridenko2017optimal}] \label{def:total_curvature}
Consider $f\colon 2^\calV\mapsto\mathbb{R}$ is non-decreasing.  Then, $f$'s total curvature is defined as 
\begin{equation}\label{eq:total_curvature}
c_f\triangleq 1-\min_{v\myin\calV}\min_{\calA, \mathcal{B}\,\subseteq\, \calV\setminus \{v\}}\frac{f(\{v\}\cup\calA)-f(\calA)}{f(\{v\}\cup\mathcal{B})-f(\mathcal{B})}.
\end{equation}
\end{definition}
Similarly to $\kappa_f$, we have $c_f\in [0,1]$. When $f$ is  submodular, then $c_f=\kappa_f$. 
Generally, if  $c_f=0$, then $f$ is modular, while if $c_f=1$, then eq.~\eqref{eq:total_curvature} implies the assumption that $f$ is non-decreasing.
In~\cite{lehmann2006combinatorial}, any monotone $f$ with total curvature $c_f$ is called $c_f$-submodular, as repeated below.\footnote{{Lehmann et al.~\cite{lehmann2006combinatorial}	defined $c_f$-submodularity by considering in eq.~\eqref{eq:total_curvature}  $\calA\subseteq\calB$ instead of $\calA\subseteq \calV$.  Generally, non-submodular but monotone functions have been referred to as \textit{approximately} or \textit{weakly} submodular~\cite{krause2010submodular,elenberg2016restricted}, names that have also been adopted for the definition of $c_f$ in~\cite{lehmann2006combinatorial}, e.g., in~\cite{chamon2016near,guo2019actuator}.}} 

\begin{theorem}[Suboptimality Bounds for Submodular and Approximately Submodular Functions]\label{cor:non-submodular}
If $f$ is not 2nd-order submodular, $\oursol$ enjoys the following approximation bounds:
\begin{itemize}[leftmargin=*]
    \item if $f$ is non-decreasing submodular, then
    \begin{align}\label{eq:th-4-1}
        \frac{f(\,\oursol\,)}{f(\,\calA^\opt\,)} \geq
        \begin{cases} 
        \frac{1}{1+\kappa_f}, & \calG \text{ is {fully centralized}}, \\
        1-\kappa_f, & \calG \text{ is {not fully centralized}},
        \end{cases} 
    \end{align}
    \item if $f$ is non-decreasing $c_f$-submodular, then
        \begin{align}\label{eq:th-4-1}
            \frac{f(\,\oursol\,)}{f(\,\calA^\opt\,)} \geq
            \begin{cases} 
            \frac{1-c_f}{1+c_f-c_f^2}, & \calG \text{ is {fully centralized}}, \\
            (1-c_f)^2, & \calG \text{ is {not fully centralized}}.
            \end{cases} 
        \end{align}
\end{itemize}
\end{theorem}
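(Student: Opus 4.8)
\emph{Proof plan.} The plan is to reuse the telescoping chain \cref{aux1:1}--\cref{aux1:3} from the proof of \Cref{th:additive} and to replace only the single step that invoked 2nd-order submodularity, namely the passage \cref{aux1:4}--\cref{aux1:5}, which is the sole place \Cref{def:conditioning} entered. Because that is the only use of 2nd-order submodularity, \cref{aux1:3}, that is,
\[
f(\calA^\opt)\,\leq\, f(\oursol)\,+\,\kappa_f\sum_{i\in\calN} f(a_i^\alg\,|\,\calA_{\calN_i\cap[i-1]}^\alg),
\]
stays valid for every non-decreasing submodular $f$. I would then treat four cases: $f$ submodular versus $f$ approximately ($c_f$-)submodular, each split into the fully centralized and the not-fully-centralized network.

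For the two submodular cases I start from the inequality above. When $\calG$ is fully centralized, $\calN_i\cap[i-1]=[i-1]$, so $\sum_i f(a_i^\alg\,|\,\calA_{[i-1]}^\alg)=f(\oursol)$ by telescoping, and the inequality collapses to $f(\calA^\opt)\leq(1+\kappa_f)f(\oursol)$, giving the ratio $1/(1+\kappa_f)$ with no appeal to 2nd-order submodularity at all. When $\calG$ is not fully centralized I instead bound each summand coarsely by submodularity, $f(a_i^\alg\,|\,\calA_{\calN_i\cap[i-1]}^\alg)\leq f(a_i^\alg)$, and then apply \cite[Lemma~2.1]{iyer2013curvature}, namely $\sum_i f(a_i^\alg)\leq \tfrac{1}{1-\kappa_f}f(\oursol)$, exactly as in \cref{aux1:11}--\cref{aux1:12}; this yields $f(\calA^\opt)\leq \tfrac{1}{1-\kappa_f}f(\oursol)$ and hence the ratio $1-\kappa_f$.

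For the approximately submodular cases I would re-derive the chain with total curvature replacing submodularity, using the two-sided bound $f(v\,|\,\calA)\geq(1-c_f)\,f(v\,|\,\calB)$, valid for \emph{all} $\calA,\calB$, which is immediate from \Cref{def:total_curvature}. I apply it twice: as an upper bound $f(a_i^\opt\,|\,\calA^\alg\cup\calA_{[i-1]}^\opt)\leq \tfrac{1}{1-c_f}\,f(a_i^\opt\,|\,\calA_{\calN_i\cap[i-1]}^\alg)$ on the optimal marginals, and as a lower bound $f(a_i^\alg\,|\,\calA^\opt\cup\calA_{[i-1]}^\alg)\geq(1-c_f)\,f(a_i^\alg\,|\,\calA_{\calN_i\cap[i-1]}^\alg)$ on the subtracted greedy marginals, while the greedy-selection step $f(a_i^\opt\,|\,\calA_{\calN_i\cap[i-1]}^\alg)\leq f(a_i^\alg\,|\,\calA_{\calN_i\cap[i-1]}^\alg)$ is unchanged. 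Combining these yields the $c_f$-analogue of \cref{aux1:3},
\[
f(\calA^\opt)\,\leq\, f(\oursol)\,+\,\frac{c_f(2-c_f)}{1-c_f}\sum_{i\in\calN} f(a_i^\alg\,|\,\calA_{\calN_i\cap[i-1]}^\alg),
\]
where the coefficient is $\tfrac{1}{1-c_f}-(1-c_f)$. In the fully centralized case the sum telescopes to $f(\oursol)$ and the factor simplifies via $1+\tfrac{c_f(2-c_f)}{1-c_f}=\tfrac{1+c_f-c_f^2}{1-c_f}$, giving $\tfrac{1-c_f}{1+c_f-c_f^2}$. In the not-fully-centralized case I bound the sum by total curvature and telescoping, $\sum_i f(a_i^\alg\,|\,\calA_{\calN_i\cap[i-1]}^\alg)\leq \tfrac{1}{1-c_f}\sum_i f(a_i^\alg\,|\,\calA_{[i-1]}^\alg)=\tfrac{1}{1-c_f}f(\oursol)$, after which $1+\tfrac{c_f(2-c_f)}{(1-c_f)^2}=\tfrac{1}{(1-c_f)^2}$ delivers $(1-c_f)^2$.

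The main obstacle I anticipate is the direction-aware use of total curvature: unlike submodularity, which gives one-sided monotonicity of marginals, $c_f$ supplies only the approximate two-sided relation, so I must deploy it once as an upper bound (on the optimal marginals) and once as a lower bound (on the greedy marginals) with the correct conditioning sets, and then verify that the apparently messy coefficients $\tfrac{c_f(2-c_f)}{1-c_f}$ and $\tfrac{c_f(2-c_f)}{(1-c_f)^2}$ collapse \emph{exactly} to $\tfrac{1+c_f-c_f^2}{1-c_f}$ and $\tfrac{1}{(1-c_f)^2}$. A secondary point to confirm is that dropping 2nd-order submodularity costs nothing in the fully centralized submodular case (the overlap terms vanish outright), whereas it is precisely the coarser bounding of $\sum_i f(a_i^\alg\,|\,\calA_{\calN_i\cap[i-1]}^\alg)$ that produces the looser ratios $1-\kappa_f$ and $(1-c_f)^2$ away from full centralization.
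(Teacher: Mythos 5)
Your proposal is correct and follows essentially the same route as the paper's own proof: the paper likewise observes that 2nd-order submodularity enters only after \cref{aux1:3}, reuses that inequality directly for the submodular cases, and derives the $c_f$-cases by the same two-sided application of total curvature, arriving at the identical coefficient $\tfrac{1}{1-c_f}-(1-c_f)$ and the same algebraic simplifications. The only cosmetic difference is in the submodular, not-fully-centralized case, where you bound the summands by $f(a_i^\alg)$ and invoke \cite[Lemma~2.1]{iyer2013curvature}, whereas the paper bounds $f(a_i^\alg\,|\,\calA_{\calN_i\cap[i-1]}^\alg)\leq\tfrac{1}{1-\kappa_f}f(a_i^\alg\,|\,\calA_{[i-1]}^\alg)$ and telescopes; both yield $\tfrac{1}{1-\kappa_f}f(\oursol)$.
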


\paragraph{Proof of \Cref{cor:non-submodular}} We present the proof separately for each case. First, when $f$ is submodular with $\calG$ being fully centralized, \alg has the same $1/(1+\kappa_f)$ bound as in \Cref{th:additive}, following from \cref{aux1:4}.

When $f$ is submodular, and $\calG$ is not fully centralized, \alg provides the same bound as the lower bound in \cref{eq:thm-5}:
{\begin{align}
    f(\calA^\opt) &\leq f(\calA^\alg) + \kappa_{f} \sum_{i\in\calN} f(a_{i}^\alg\,|\,\calA_{\calN_i\cap [i-1]}^\alg) \nonumber \\
    &\leq f(\calA^\alg) + \frac{\kappa_f}{1-\kappa_f} \sum_{i\in\calN} f(a_{i}^\alg\,|\,\calA_{[i-1]}^\alg) \label{aux1:13} \\
    &= \left(1+\frac{\kappa_f}{1-\kappa_f}\right)f(\oursol), \label{aux1:14}
    \end{align}}where \cref{aux1:13} holds from \cref{aux1:3} and the definition of $\kappa_{f}$. 

When $f$ is $c_f$-submodular, and $\calG$ is fully centralized, 
{%\small
\begin{align}
   &\hspace{-1mm}f(\calA^\opt)\nonumber\\
    &\hspace{-1mm}= f(\calA^\opt\cup\calA^\alg) - \sum_{i\in\calN} f(a_{i}^\alg\,|\,\calA^\opt\cup\calA_{[i-1]}^\alg) \nonumber\\
    &\hspace{-1mm}\leq f(\calA^\alg) + \frac{1}{1-c_f}\sum_{i\in\calN} f(a_{i}^\opt\,|\,\calA_{\calN_i\cap [i-1]}^\alg) \nonumber\\
    &\hspace{-1mm}\quad - (1-c_{f}) \sum_{i\in\calN} f(a_{i}^\alg\,|\,\calA_{\calN_i\cap [i-1]}^\alg) \label{aux2:10}\\
    &\hspace{-1mm}\leq f(\calA^\alg) + \left[\frac{1}{1-c_f} - (1-c_f)\right]\sum_{i\in\calN} f(a_{i}^\alg\,|\,\calA_{\calN_i\cap [i-1]}^\alg) \label{aux2:11} \\
    &\hspace{-1mm}= \left(c_f + \frac{1}{1-c_f}\right)f(\calA^\alg),\label{aux2:12}
\end{align}
}where 
\cref{aux2:10} holds from \cref{aux1:1} and \Cref{def:total_curvature}, \cref{aux2:11} holds since \alg selects $\singlesol_i$ greedily, and \cref{aux2:12} holds since $\calN_i\cap [i-1]=[i-1]$ for $\calG$ being fully centralized. 

When $f$ is $c_f$-submodular, and $\calG$ is not fully centralized,
{%\small
\begin{align}
   &f(\calA^\opt)\nonumber\\
    &\leq f(\calA^\alg) + \left[\frac{1}{1-c_f} - (1-c_f)\right]\sum_{i\in\calN} f(a_{i}^\alg\,|\,\calA_{\calN_i\cap [i-1]}^\alg) \nonumber\\
    &\leq f(\calA^\alg) + \left[\frac{1}{(1-c_f)^2} - 1\right]\sum_{i\in\calN} f(a_{i}^\alg\,|\,\calA_{[i-1]}^\alg) \label{aux2:16}\\
    &= \frac{1}{(1-c_f)^2} f(\calA^\alg),\label{aux2:17}
\end{align}}where \cref{aux2:16} holds from \cref{aux2:11} and \Cref{def:total_curvature}.
\qed %Proofs
\section*{Appendix III}\label{app:3}
We provide the proofs regarding the a posteriori suboptimality bound of \alg.  
\paragraph{Proof of \Cref{th:posterior}} 
Since $\calI_i = \calN_i\cap [i-1]$, \cref{eq:posterior} follows \cref{aux1:4}, and thus \Cref{th:posterior} is proved. Notice that $f$ only needs to be submodular instead of 2nd-order submodular for \Cref{th:posterior} to hold. \qed

\paragraph{Proof of \Cref{th:posterior-submodular}} 
Let $\delta_i(\calI_i)\triangleq f(a_i^{(\calI_i)}\,|\,\calA_{\calI_i})$, where  
$\calA_{\calI_i}$ is the actions selected by $\calI_i$ per \alg, and $a_i^{(\calI_i)}$ is the action selected by $i$ per \alg, \ie greedily. 
Therefore, proving $\delta_i(\calI_i)$ is non-increasing and approximately supermodular in $\calI_i, \forall i\in\calN$, will be sufficient in proving \Cref{th:posterior-submodular}. 

We start with the non-increasing property by proving $\delta_i$ is non-increasing. 
For disjoint sets $\calB_1, \calB_2\subseteq\calN\setminus\{i\}$, we have $\calA_{\calB_1}\subseteq \calA_{\calB_1\cup\calB_2}$
and, thus,
\begin{align}
    \delta_i(\calB_1) &= f(a_i^{(\calB_1)}\,|\,\calA_{\calB_1}) \geq f(a_i^{(\calB_1\cup\calB_2)}\,|\,\calA_{\calB_1}) \label{aux2:20}\\
    &\geq f(a_i^{(\calB_1\cup\calB_2)}\,|\,\calA_{\calB_1\cup\calB_2}) = \delta_i(\calB_1\cup\calB_2), \label{aux2:21}
\end{align}where \cref{aux2:20} holds since \alg selects $a_i^{(\calB_1)}$ greedily given $\calA_{\calB_1}$, and \cref{aux2:21} holds since $\calA_{\calB_1}\subseteq \calA_{\calB_1\cup\calB_2}$. 

To prove the approximate supermodularity of $\delta_i$, we will first prove another function $\delta_i'$ is supermodular, then show that $\delta_i(\calS)\leq\delta_i'(\calS)\leq\delta_i(\calS)+\epsilon, \forall\calS\subseteq\calN\setminus\{i\}$~\cite{horel2016maximization}. In particular, let us define $\delta_i'(\calI_i)\triangleq f(a\,|\,\calA_{\calI_i})$, where $\calA_{\calI_i}$ is the actions selected per \alg by $\calI_i$ as in the definition of $\delta_i$, but $a\in\calV_i$ is an arbitrary fixed action. Consider robot set $\calS\subseteq\calN\setminus\{i\}$ other than $\calB_1, \calB_2$, then
\begin{align}
    &\delta_i'(\calS\,|\,\calB_1) - \delta_i'(\calS\,|\,\calB_1\cup\calB_2) \nonumber\\
    &\hspace{-1mm}=\delta_i'(\calS\cup\calB_1) - \delta_i'(\calB_1) - \delta_i'(\calS\cup\calB_1\cup\calB_2) + \delta_i'(\calB_1\cup\calB_2) \nonumber\\
    &\hspace{-1mm}= f(a\,|\,\calA_{\calS\cup\calB_1}) - f(a\,|\,\calA_{\calB_1}) \nonumber\\
    &\hspace{-1mm}\quad- f(a\,|\,\calA_{\calS\cup\calB_1\cup\calB_2}) + f(a\,|\,\calA_{\calB_1\cup\calB_2}) \leq 0,
\end{align}where the inequality holds since $f$ is 2nd-order submodular. Hence, $\delta_i'$ is supermodular. 
Then, we have, 
\begin{align}
    \delta_i(\calS) - \delta_i'(\calS) &= f(a_i^{(\calS)}\,|\,\calA_{\calS}) - f(a\,|\,\calA_{\calS}) \nonumber \\
    &\leq f(a_i^{(\calS)}) - (1-\kappa_f)f(a),
\end{align}which holds from the submodularity of $f$ and \cref{eq:curvature}:
\begin{equation}
    1-\kappa_f \leq \frac{f(a\,|\,\calA_{\calN\setminus\{i\}})}{f(a)} \leq \frac{f(a\,|\,\calA_{\calS})}{f(a)}.
\end{equation}Also, $\delta_i(\calS) - \delta_i'(\calS) \geq f(a_i^{(\calS)}\,|\,\calA_{\calS}) - f(a_i^{(\calS)}\,|\,\calA_{\calS})=0$ since \alg selects $a_i^{(\calS)}$ greedily. All in all, $\delta_i'(\calS)\leq\delta_i(\calS)\leq\delta_i'(\calS)+\epsilon$ holds true with \begin{equation}\epsilon=\min_{a_2\myin\calV_i}\max_{a_1\myin\calV_i} \;\left[f(a_1)-(1-\kappa_f)f(a_2)\right],\end{equation}
that is, $\delta_i$ is approximately supermodular. \qed 
 %Proofs
\section*{Appendix IV}\label{app:4}
We provide the proof of the communication time of the algorithm in~\cite{gharesifard2017distributed,grimsman2019impact}: 
the worst-case communication time of the two methods occurs when, for example, if the informational DAG $\calG'$ is complete, \ie each agent $i$ requires information from $[i-1]$, then
\begin{itemize}[leftmargin=3.5mm]
    \item for undirected graphs $\calG$, \eg when agent $1$ locates in the center of a line graph and the rest are ordered alternately extending outward from the center to both ends of the line (\eg $5\leftrightarrow 3\leftrightarrow 1\leftrightarrow 2\leftrightarrow 4\leftrightarrow 6$), which leads to $\sum_{i=1}^{|\calN|-1}\tau_c\times i=1/2\,\tau_c\,|\calN|\,(|\calN|-1)$;
    \item for directed graphs $\calG$, \eg when $\calG$ is a one-directional circlic graph yet the agents' order increases in the other direction (\Cref{fig:directed-cyclic-graph}), and every agent $i$ needs to traverse all other agents to send information to $i+1$, which leads to $\sum_{i=1}^{|\calN|-1}\tau_c\times (|\calN|-2)=\tau_c\,(|\calN|-1)(|\calN|-2)$.
\end{itemize}Therefore, the worst-case communication time is $O(\tau_c\,|\calN|^2)$ for both undirected and directed $\calG$. 

\begin{figure}[h]
\captionsetup{font=footnotesize}
\begin{center}
\begin{tikzpicture}
  % Nodes
  \node[draw, circle] (1) at (0,-.3) {1};
  \node[draw, circle] (2) at (-3,-1) {2};
  \node[draw, circle] (3) at (-1,-1) {3};
  \node[draw, circle] (4) at (1,-1) {4};
  \node[draw, circle] (5) at (3,-1) {5};

  % Edges
  \draw[->, thick] (1) to[bend right=-13] (5);  % Curved line with thick arrow
  \draw[->, thick] (2) to[bend right=-13] (1);
  \draw[->, thick] (3) -- (2);
  \draw[->, thick] (4) -- (3);
  \draw[->, thick] (5) -- (4);  % Curved line with thick arrow
\end{tikzpicture}
\end{center}
\caption{Example of a directed communication graph with the worst-case agent order where each message needs to traverse $|\calN|-2$ edges.
}\label{fig:directed-cyclic-graph}
\end{figure} %Proofs

% \vspace{-2mm}
\bibliographystyle{IEEEtran}
\bibliography{references}

\begin{IEEEbiography}[{\includegraphics[width=1in,height=1.25in,keepaspectratio]{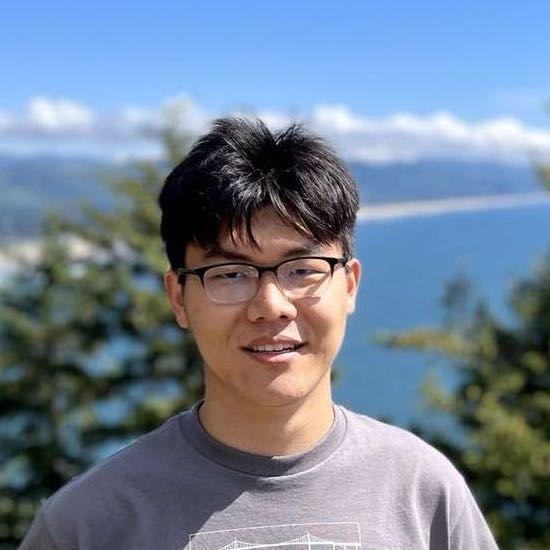}}]{Zirui Xu} (Graduate Student Member, IEEE) received the B.Eng. degree in Automation from Northeastern University, Shenyang, China, in 2018, and the M.S. degree in Electrical and Computer Engineering from Georgia Institute of Technology, Atlanta, GA, USA, in 2020. He is currently pursuing the Ph.D. degree with the Department of Aerospace Engineering, University of Michigan, Ann Arbor, MI, USA. His research focuses on theories and algorithms for scalable and reliable coordination of distributed multi-robot systems in resource-constrained, unstructured, and untrustworthy environments. 
\end{IEEEbiography}

\begin{IEEEbiography}[{\includegraphics[width=1in,height=1.25in,keepaspectratio]{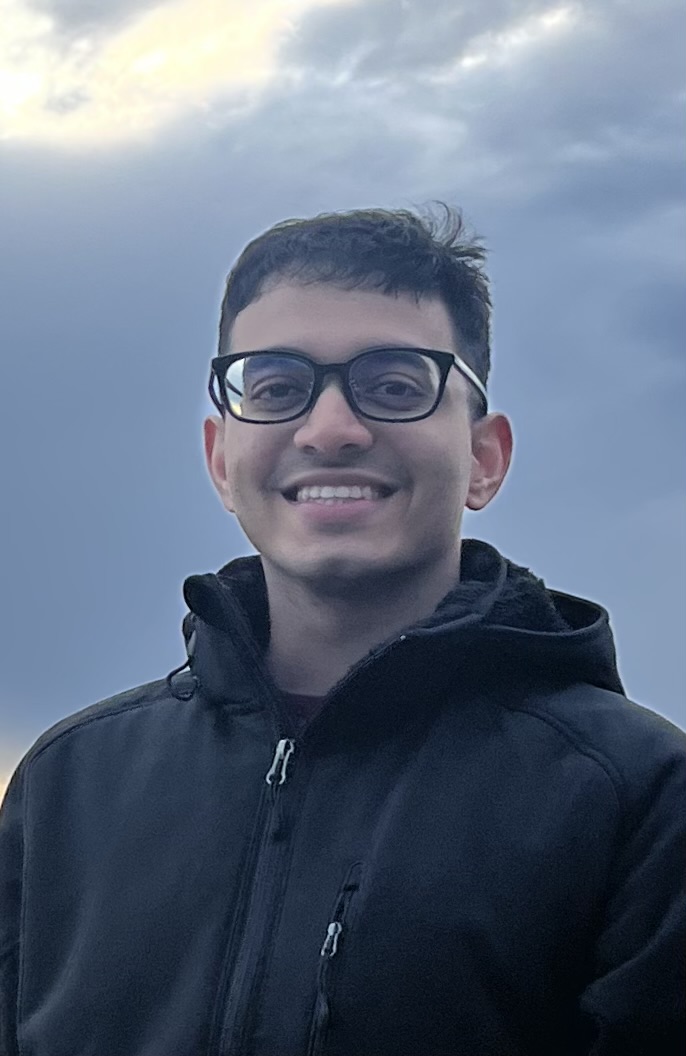}}]{Sandilya Sai Garimella} (Graduate Student Member, IEEE) received a B.S.E. degree in Mechanical Engineering from the University of Michigan, Ann Arbor, MI, USA, in 2022, and a M.S.E. degree in Robotics also from the University of Michigan, Ann Arbor, MI, USA, in 2024. He is currently pursuing a Ph.D. degree in Robotics with the Institute for Robotics and Intelligent Machines (IRIM) at Georgia Institute of Technology, Atlanta, GA, USA. His research interests lie in multi-robot autonomy, with a focus on algorithms for spatial perception, active information acquisition, and efficient multi-agent decision-making. 
\end{IEEEbiography}

\begin{IEEEbiography}[{\includegraphics[width=1in,height=1.25in,keepaspectratio]{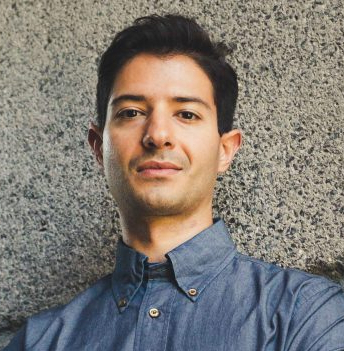}}]
{Vasileios Tzoumas} (Senior Member, IEEE) received his Ph.D. in Electrical and Systems Engineering at the University of Pennsylvania (2018). He holds a Master of Arts in Statistics from the Wharton School of Business at the University of Pennsylvania (2016), a Master of Science in Electrical Engineering from the University of Pennsylvania (2016), and a diploma in Electrical and Computer Engineering from the National Technical University of Athens (2012). Vasileios is an Assistant Professor in the Department of Aerospace Engineering, University of Michigan, Ann Arbor. Previously, he was at the Massachusetts Institute of Technology (MIT), in the Department of Aeronautics and Astronautics, and in the Laboratory for Information and Decision Systems (LIDS), where he was a research scientist (2019-2020) and a post-doctoral associate (2018-2019). Vasileios works on algorithms and innovative hardware for scalable and reliable cyber-physical systems in resource-constrained, uncertain, and contested environments via resource-aware decision-making, online learning, and resilient adaptation. Vasileios is a recipient of an NSF CAREER Award, the Best Paper Award in Robot Vision at the 2020 IEEE International Conference on Robotics and Automation (ICRA), an Honorable Mention from the 2020 IEEE Robotics and Automation Letters (RA-L), and was a Best Student Paper Award finalist at the 2017 IEEE Conference in Decision and Control (CDC).
\end{IEEEbiography}

\end{document}